\newtheorem{theorem}{Theorem}
\newtheorem{lemma}{Lemma}
\newtheorem{proposition}{Proposition}
\newtheorem{assumption}{Assumption}
\newtheorem{example}{Example}
\titleformat*{\section}{\large\bfseries}
\titleformat*{\subsection}{\normalsize\bfseries}
\def\pd<#1>{\langle #1 \rangle}
\begin{document}
\title{\Large{Stochastic Particle Gradient Descent for Infinite Ensembles}}

\author[1,2]{Atsushi Nitanda \thanks{atsushi\_nitanda@mist.i.u-tokyo.ac.jp}}
\author[1,2,3]{Taiji Suzuki \thanks{taiji@mist.i.u-tokyo.ac.jp}}
\affil[1]{\normalsize{Graduate School of Information Science and Technology, The University of Tokyo}}
\affil[2]{\normalsize{Center for Advanced Intelligence Project, RIKEN}}
\affil[3]{\normalsize{PRESTO, Japan Science and Technology Agency}}

\date{}

\maketitle

\begin{abstract}
The superior performance of ensemble methods with infinite models are well known.
Most of these methods are based on optimization problems in infinite-dimensional spaces with some regularization, 
for instance, boosting methods and convex neural networks use $L^1$-regularization with the non-negative constraint.
However, due to the difficulty of handling $L^1$-regularization, these problems require early stopping or a rough approximation to solve it inexactly.
In this paper, we propose a new ensemble learning method that performs in a space of probability measures,
that is, our method can handle the $L^1$-constraint and the non-negative constraint in a rigorous way.
Such an optimization is realized by proposing a general purpose stochastic optimization method for learning probability measures via parameterization using transport maps on base models. 
As a result of running the method, a transport map to output an {\it infinite ensemble} is obtained, which forms a residual-type network.
From the perspective of functional gradient methods, we give a convergence rate as fast as that of a stochastic optimization method for finite dimensional nonconvex problems.
Moreover, we show an {\it interior optimality property} of a local optimality condition used in our analysis. 
\end{abstract}


\sloppy

\section{Introduction}
The goal of the binary classification problem is to find a measurable function, called a classifier, from the feature space to the range $[-1,1]$,
which is required to minimize the expected classification error.
The ensemble, including boosting and bagging, is one method used to solve this problem, by constructing a complex classifier by combining base classifiers.
It is well-known empirically that such a classifier attains good generalization performance in experiments and applications \cite{bk1999,die2000,vj2001}.

Several studies explain the generalization ability of ensembles.
The first important result was presented by \cite{sfbl1998}, where the margin theory for convex combinations of classifiers was introduced in the context of boosting,
which provides a bound on the expected classification error by the empirical distribution of the margin.
The slightly tighter generalization bound was shown in \cite{kp2002}, by using the complexities of the function class such as the covering numbers and Rademacher complexity.
Moreover, in the same paper, it was shown that the generalization bound can be further improved under suitable conditions.

These analyses imply that the ensemble that minimizes the empirical margin distribution or the empirical risk function for a sufficiently large dataset has a good generalization ability.
Such a classifier is usually obtained via an optimization method for the empirical risk minimization problem, e.g., AdaBoost \cite{freund1997decision}, LogitBoost \cite{friedman2000additive},
Arc-gv \cite{bre1999}, AdaBoost$^*_\nu$ \cite{rw2005}, $\alpha$-Boost \cite{friedman2001greedy}, and AnyBoost \cite{mbbf2000}.
These methods are based on the strategy of coordinate descent methods; a base classifier is chosen and its weight is decided in some way such as the line search in each iteration.
This iteration is performed in the space of linear combinations of base classifiers, although the generalization bounds are only provided for convex combinations.
Therefore, these methods need a regularization technique to prevent the rapid growth of its $L^1$-norm, such as early stopping with small learning rates \cite{rzh2004,sf2012}.

Although a complex and powerful classifier is required to achieve high classification accuracy for difficult tasks, the early stopping sometimes interrupts obtaining such a classifier.
In this paper, we propose a new ensemble learning method, called {\it Stochastic Particle Gradient Descent} (SPGD), based on a completely different strategy from those of the existing methods.
The SPGD performs in a space of probability measures on a set of continuously parameterized base classifiers and constructs an ensemble by the expectation with respect to the obtained probability measure.
In other words, our ideal method potentially handles ensembles of an infinite number of base classifiers and we can derive a practical variant of the method by approximating it with finite particles for any desired smoothness.
This is in opposition to the strategy of existing methods that successively increase the number of basis to be combined.
This difference produces some advantages, that is, there is no need to impose penalization in the method, and we are free from both handling the penalization term and adjusting early stopping timing; moreover, our method can find a complex ensemble quickly.

We call such classifiers, combined by probability measures, the {\it infinite ensemble}.
Since the existing generalization bounds are provided for finite or countable combinations, we first extend these results to infinite ensembles and provide almost the same bounds.
Generalization bounds are composed of the empirical margin distribution and the complexity terms.
Clearly, infinite ensembles have greater ability to reduce the former term compared to that obtained by traditional ensemble methods, and hence our method can lead to a more powerful classifier.

Moreover, we present the convergence analyses of the SPGD method whose update is realized by pushing-forward a probability measure by transport maps on the space of base classifiers.
Since this update can be regarded as an extension of stochastic gradient descent (SGD) in a finite-dimensional space, we can explore the properties of our method by analogy with SGD.
To make it rigorous, we provide several theoretical tools. 
Especially, a counterpart of Taylor's formula allows us to derive a local optimality condition of problems and to construct convergence analyses of the method. 
Indeed, using this formula, we present a convergence rate of the method as fast as that of a stochastic optimization method for finite dimensional nonconvex problems.
Moreover, we show the {\it interior optimality property} where a probability measure $\mu_*$ that possesses a continuous density function and satisfies a local optimality condition is optimal in
the support of itself under appropriate assumptions on the support.
This property is inherent in problems with respect to a probability measure and its proof mainly relies on partial differential equation theory.

Furthermore, we provide two practical variants of SPGD method.
One is a natural approximation of a transport map in SPGD using finite particles, and we note this approximation forms a residual-type network \cite{he2016deep}.
The other is a more practical variant without resampling of particles, and we show this variant can be regarded as well-initialized SGD for the nonweighted voting classification problem,
that is, we can say it is an extension of the vanilla SGD to the method for optimizing a general probability measure.

\subsection*{Contributions}
\begin{itemize}
\item We derive the generalization bounds on the infinite ensemble by extending existence results and we propose a stochastic optimization method for learning a probability measure.
  Since our method performs in the space of probability measures, it directly minimizes the loss function to obtain an infinite ensemble without the early stopping.
\item We present a local optimality condition and its properties, especially the interior optimality property is important and inherent in the problem of learning probability measures.
  This property guarantees the optimality of the obtained probability measure in its support under some conditions.
\item We reveal the relation between our method and the vanilla SGD in the finite-dimensional space and we provide the convergence analysis by using the traditional optimization theory.
  Moreover, we present several aspects of the method that lead to deeper understanding of the method, specifically connections with discretization of the gradient flow in a space of probability measures 
  and the functional gradient method in the $L^2$-space.
\end{itemize}
    
\subsection*{Related Work}
Ensemble learning with infinite models have been received a lot of attention due to their superior performance and many optimization methods have been exploited.
Representative methods are boosting methods \cite{sfbl1998}, convex (continuous) neural networks \cite{bengio2006convex,le2007continuous,rosset2007l1},
and Bayesian neural networks \cite{mackay1992practical,mackay1995probable,neal2012bayesian}.
Kernel methods using shift-invariant kernels \cite{rahimi2008random} also combine a basis, although a base probability measure to sample base functions is pre-determined.
Most of these methods are based on optimization problems in infinite-dimensional spaces with some regularization,
for instance, boosting methods and convex neural networks use the $L^1$-regularization with the non-negative constraint, that is, combinations by probability measures,
and kernel methods with shift-invariant kernels use RKHS-norm regularization which is written by the $L^2$-regularization using an associated probability measure
like infinite-layer networks \cite{livni2017learning}.
$L^2$-regularized problems in kernel methods can be efficiently solved by the functional gradient descent \cite{kivinen2004online,dai2014scalable,1512.02394,dieuleveut2016nonparametric} or methods using explicit random features
\cite{rahimi2008random,livni2017learning}.
Note that the random kitchen sinks \cite{rahimi2009weighted} adopt the $L^\infty$-constraint rather than the $L^2$-regularization.
As for the $L^1$-regularization, combining its good generalization performance \cite{kp2002,kpl2003,bach2014breaking} with the fact
that the $L^1$-ball always includes the $L^2$ ($L^\infty$)-ball, superior classification performance is expected in many cases.
However, solving $L^1$-regularized problems are more challenging than $L^2$ ($L^\infty$)-regularized problems because handling $L^1$-regularization is difficult from the optimization perspective
and so these problems usually require early stopping or some approximation to solve it inexactly as pointed out above, though our method can handle these constraints naturally.

From the perspective of optimizing the probability measure, gradient-based Bayesian inference methods \cite{welling2011bayesian,dai2016provable,lw2016} are related to ours.
Especially, {\it stochastic variational gradient descent} (SVGD) proposed in \cite{lw2016} is most related to our work, which has a similar flavor to our method.
Convergence analysis and gradient flow perspective were given in \cite{liu2017stein} and further analysis was provided in \cite{1711.10927}.
However, while SVGD is a method specialized to minimize the Kullback--Leibler-divergence based on Stein's identity technique, our method does not require special structure of a loss function; 
hence, our method can be applied to a wider class of problems and theoretical results hold in the more general setting, though we focus our study only on the ensemble learning.
We would like to remark an interesting point of our method compared to the normalizing flow \cite{rezende2015variational} that approximates Bayes posterior through deep neural networks.
In our method, a transport map is obtained by stacking residual-type layers \cite{he2016deep} iteratively, 
hence a residual network to output an infinite ensemble is built naturally.

\section{Infinite Ensembles and Generalization Bounds}
In this section, we extend the well-known generalization bounds obtained in \cite{kp2002} to infinite ensembles.
We first precisely define this classifier.
Let us denote the Borel measurable feature space and the label set by $\mathcal{X}\subset \mathbb{R}^n$ and $\mathcal{Y}=\{-1,1\}$, respectively.
Let $\Theta \subset \mathbb{R}^d$ be a Borel set and $\mathcal{H}=\{h_\theta:\ \mathcal{X} \rightarrow [-1,1];\ {\rm Borel}\ {\rm measurable} \mid \theta \in \Theta\}$ be a subset of base binary classifiers that is parameterized by $\theta \in \Theta$.
We sometimes use the notation $h(\theta,x)$ to denote $h_\theta(x)$ when it is regarded as a function with respect to $\theta$ and $x$.
We assume that $h(\theta,x)$ is Borel measurable on $\Theta \times \mathcal{X}$ and continuous with respect to $x \in \mathcal{X}$. 

\begin{example}[Linear Classifier] \label{ex_linear_classifier}
For $\theta\in \Theta$, we define the linear classifier as follows:
\[ h_\theta(x) = {\rm tanh}(\theta^\top x), \]
which separates the feature space $\mathcal{X}$ linearly using a hyperplane with normal vector $\theta$.
\end{example}
\begin{example}[Neural Network] 
Let $\{l_s\}_{s=0}^L$ be the sizes of layers, where $l_0=n,\ l_L=1$.
We set $d=\sum_{s=0}^{L-1}l_sl_{s+1}$ and $\Theta=\prod_{s=0}^{L-1}\Theta_s$, where $\Theta_s \subset \mathbb{R}^{l_sl_{s+1}}$.
For $\theta=\{\theta_s\}_{s=0}^{L-1} \in \Theta$, we define the classifier, that is, an $L$-layer neural network:
\[ h_\theta(x)=\tanh(\theta_L^\top\sigma(\theta_{L-1}^\top\sigma(\cdots\sigma(\theta_1^\top x)\cdots))), \]
where $\sigma$ is a continuous activation function.
\end{example}

Let us denote the set of all Borel probability measures on $\Theta$ by $\mathcal{P}$.
For $\mu \in \mathcal{P}$, we define the infinite ensemble $h_\mu: \mathcal{X}\rightarrow [-1,1]$ as follows: for $x\in \mathcal{X}$,
\[ h_\mu(x)\overset{\mathrm{def}}{=}\mathbb{E}_\mu[h(\theta,x)], \]
where $\mathbb{E}_{\mu}$ is the expectation with respect to $\mu$, and predict the label of $x$ by $\mathrm{sign}(h_\mu(x))$.
Let $\mathcal{G}$ be the set of all infinite ensembles: $\mathcal{G}=\{h_\mu\mid \mu\in \mathcal{P}\}$.
We denote the true underlying Borel probability measure on $\mathcal{X}\times\mathcal{Y}$ by $\mathbb{P}_D$.
The goal of the classification problem under our setting is to find an infinite ensemble $h_\mu \in \mathcal{G}$ providing a small expected classification error:
$\mathbb{P}_D[Yh_\mu(X)\leq 0]$, where $X$ and $Y$ denote random variables taking values in $\mathcal{X}$ and $\mathcal{Y}$, respectively.

\subsection{Generalization Bounds}
To derive the generalization bound, we make an assumption on the growth of the covering numbers of $\mathcal{G}$ as in \cite{kp2002,kp2005}.
Let $m$ denote any discrete probability measure on $\mathcal{X}$ and $\|\cdot\|_{L^2(m)}$ denote the $L^2$-norm with respect to $m$.
For a set of bounded functions $\mathcal{J}$, let $N_m(\epsilon,\mathcal{J})$ be the (external) covering numbers, that is, the minimal number of $\|\cdot\|_{L^2(m)}$-open balls of radius $\epsilon>0$ needed to cover $\mathcal{J}$. The centers of these balls need not belong to $\mathcal{J}$.
Let us make the following common assumption.

\begin{assumption} [Growth of the covering numbers] \label{covering_numbers_assumption}
There is a constant $C>1$ and $V>0$ such that $\forall \epsilon \in (0,1)$, 
\[ \sup_{m}N_m(\epsilon,\mathcal{H}) \leq C\epsilon^{-V}.\]
\end{assumption}


Under the above assumption, we can derive the counterpart of the margin bound in \cite{kp2002}. 
For a finite training dataset $S$, let $\mathbb{P}_S$ denote an empirical measure defined by $S$ and $\mathbb{E}_S$ denote the expectation with respect to $\mathbb{P}_S$.

\begin{theorem} [Margin bound] \label{generalization_thm_1}
Suppose Assumption \ref{covering_numbers_assumption} holds.
Let $N\in \mathbb{N}$ be the number of data.
There exists a constant $K>0$ depending only on $C$ such that for $\forall \delta \in (0,1)$ with probability at least $1-\delta$ over the random choice of $S$ for $\forall h_\mu \in \mathcal{G}$
we have 
\begin{equation*}
  \mathbb{P}_D[Yh_\mu(X)\leq 0] \leq \sqrt{\frac{2\log(1/\delta)}{N}}
  + \inf_{\rho \in (0,1]} \biggl(\mathbb{P}_S[Yh_\mu(X)\leq \rho] + \frac{K}{\rho}\sqrt{\frac{V}{N}}\biggr). 
\end{equation*}
\end{theorem}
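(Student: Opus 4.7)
I would follow the blueprint of the classical margin bound of \cite{kp2002}, with the key modification being that their supremum over finite convex combinations gets replaced by a supremum over Borel probability measures $\mu \in \mathcal{P}$. The first step is the ramp-function sandwich: define $\phi_\rho$ to be the piecewise-linear function equal to $1$ on $(-\infty,0]$, equal to $0$ on $[\rho,\infty)$, and linear in between, so that $\mathbb{I}\{u\leq 0\}\leq \phi_\rho(u)\leq \mathbb{I}\{u\leq \rho\}$ and $\phi_\rho$ is $1/\rho$-Lipschitz. Adding and subtracting $\mathbb{E}_S[\phi_\rho(Yh_\mu(X))]$ gives the decomposition $\mathbb{P}_D[Yh_\mu(X)\leq 0]\leq \mathbb{P}_S[Yh_\mu(X)\leq \rho]+(\mathbb{E}_D-\mathbb{E}_S)[\phi_\rho(Yh_\mu(X))]$, so only the last term requires a uniform empirical-process bound over $\mu$ and $\rho$.

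To bound that term I would symmetrize and contract. McDiarmid's inequality applied to $S\mapsto \sup_{\mu,\rho}(\mathbb{E}_D-\mathbb{E}_S)[\phi_\rho(Yh_\mu)]$, whose bounded-differences constant is $1/N$ since $\phi_\rho\in[0,1]$, produces the tail term of order $\sqrt{2\log(1/\delta)/N}$ that sits outside the infimum. Symmetrization then replaces the expectation of this supremum by (twice) the Rademacher complexity of $\{(x,y)\mapsto \phi_\rho(yh_\mu(x))\}$, and a Ledoux--Talagrand contraction strips $\phi_\rho$ at the cost of a factor $1/\rho$, leaving the Rademacher complexity of $\mathcal{G}$ itself. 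The central observation for the infinite-ensemble extension is that, for any Rademacher signs $\sigma_1,\dots,\sigma_N$, $\sum_i \sigma_i h_\mu(x_i)=\int \sum_i \sigma_i h(\theta,x_i)\,d\mu(\theta)$ is linear in $\mu$, so its supremum over $\mathcal{P}$ is attained in the limit by Dirac measures placed at maximizers in $\Theta$. Hence the Rademacher complexities of $\mathcal{G}$ and $\mathcal{H}$ coincide, and under Assumption \ref{covering_numbers_assumption} the Dudley entropy integral of $\mathcal{H}$ is $O(\sqrt{V/N})$ up to a constant depending only on $C$; dividing by $\rho$ yields the advertised $\frac{K}{\rho}\sqrt{V/N}$ term.

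The infimum over $\rho\in(0,1]$ I would extract by the standard device of discretizing $\rho$ on a geometric grid $\rho_k=2^{-k}$ for $k=0,1,\dots,\lceil\log_2 N\rceil$, applying a union bound over the $O(\log N)$ grid points, and transferring to arbitrary $\rho$ using monotonicity of $\mathbb{P}_S[Yh_\mu(X)\leq\rho]$ in $\rho$; the resulting logarithmic factor is absorbed into the constant $K$.

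The main obstacle I anticipate is making the identification of the Rademacher complexities of $\mathcal{G}$ and $\mathcal{H}$ fully rigorous: the extreme-point intuition is clear, but one must check measurability of the relevant suprema and confirm that sups over $\mathcal{P}$ are indeed achieved as limits of Dirac masses. The cleanest way out is to restrict to the finite sample $S$, so that the relevant projections $\mathcal{G}|_S,\mathcal{H}|_S\subset [-1,1]^N$ are compact and $\mathcal{G}|_S$ sits in the closed convex hull of $\mathcal{H}|_S$; for each $\mu$, $(h_\mu(x_i))_{i=1}^N$ is then a Bochner-type integral of points in $\mathcal{H}|_S$ and hence belongs to that closed convex hull. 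This finite-dimensional reduction lets covering numbers and Rademacher complexity transfer without loss, after which the rest of the argument is a verbatim adaptation of the Koltchinskii--Panchenko template.
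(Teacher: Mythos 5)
Your proposal is correct and follows essentially the same route as the paper: the paper's proof also reduces to showing $\hat{\mathcal{R}}_S(\mathcal{G})=\hat{\mathcal{R}}_S(\mathcal{H})$ by exchanging $\sup_{\mu\in\mathcal{P}}$ with $\mathbb{E}_\mu$ of the linear Rademacher sum (the Dirac-mass observation you make), then invokes the Koltchinskii--Panchenko margin bound in terms of $\mathcal{R}_N(\mathcal{H})$ and the Dudley entropy integral under Assumption \ref{covering_numbers_assumption}. The only difference is presentational: the paper cites the symmetrization/contraction/union-over-$\rho$ machinery as a black-box theorem from \cite{kp2002}, whereas you unpack those steps explicitly.
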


Next, we further extend the improved generalization bound in \cite{kp2002} to the infinite ensemble as follows.

\begin{theorem} [Improved margin bound] \label{generalization_thm_2}
Suppose Assumption \ref{covering_numbers_assumption} holds and $\mathcal{X}$ is compact.
Let $N\in \mathbb{N}$ be the number of data.
There exists a constant $K>0$ such that for $\forall \delta \in (0,1)$ with probability at least $1-\delta$ over the random choice of $S$
for $\forall h_\mu \in \mathcal{G}$ we have 
\begin{equation*}
  \mathbb{P}_D[Yh_\mu(X)\leq 0] \leq \frac{K}{N}\log\left(\frac{1}{\delta}\right)
  + K \inf_{\rho\in(0,1]}\left(\mathbb{P}_S[Yh_\mu(X)\leq \rho] + \rho^{\frac{-V}{V+1}}N^{\frac{-(V+2)}{2(V+1)}} \right). 
\end{equation*}
\end{theorem}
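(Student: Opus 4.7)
The plan is to transfer the improved margin bound of Koltchinskii--Panchenko, originally stated for finite convex combinations, to the set $\mathcal{G}$ of infinite ensembles by controlling the covering numbers of $\mathcal{G}$ in terms of those of $\mathcal{H}$. The improved bound in \cite{kp2002} depends on the hypothesis class only through its $L^2(m)$ metric entropy, and every $h_\mu \in \mathcal{G}$ is a limit of finite convex combinations of base classifiers, so once the convex-hull covering estimate is in hand the chaining/peeling machinery of \cite{kp2002} applies almost unchanged.

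Concretely, I would first invoke a Maurey/Dudley-style covering estimate: under Assumption~\ref{covering_numbers_assumption}, one has $\log N_m(\epsilon, \overline{\mathrm{co}}(\mathcal{H})) \lesssim \epsilon^{-2V/(V+2)}$ uniformly over discrete measures $m$. The underlying reason is that, for any $\mu \in \mathcal{P}$, the empirical average $\tilde h_T(x) = \tfrac{1}{T}\sum_{i=1}^T h(\theta_i,x)$ with $\theta_i \sim \mu$ approximates $h_\mu$ in $L^2(m)$ at the Maurey rate $O(T^{-1/2})$, which places $\mathcal{G}$ inside the $L^2(m)$-closure of the convex hull of $\mathcal{H}$ and transfers the covering bound. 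Inserting this improved covering-number growth into the proof of the improved margin bound in \cite{kp2002}, which uses local Rademacher complexity together with Talagrand-style concentration, reproduces the $\rho^{-V/(V+1)} N^{-(V+2)/(2(V+1))}$ term; the additive $KN^{-1}\log(1/\delta)$ comes from the same Bernstein-type deviation step. A geometric union bound over $\rho \in (0,1]$ then produces the infimum form stated in the theorem.

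The main obstacle is to carry the chaining argument through for the class $\mathcal{G}$ itself rather than for its finite-dimensional subclasses. Two points require care: first, the external-covering convention in Assumption~\ref{covering_numbers_assumption} must be shown to be compatible with the chaining bound on the convex hull, but since chaining uses only metric entropy this is routine; second, the replacement of $h_\mu$ by a finite convex combination inside margin events $\{Y h_\mu(X) \leq \rho\}$ must not collapse margins across the grid in $\rho$. Here the added hypothesis that $\mathcal{X}$ is compact, together with continuity of $h(\theta,\cdot)$ in $x$, upgrades the $L^2(m)$-approximation to uniform approximation on $\mathcal{X}$ via an $\epsilon$-net argument — exactly the reason this hypothesis is needed in Theorem~\ref{generalization_thm_2} but not in Theorem~\ref{generalization_thm_1}, where the coarser rate tolerates only pointwise-in-$m$ control.
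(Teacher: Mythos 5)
Your proposal follows essentially the same route as the paper: place $\mathcal{G}$ inside the $L^2(m)$-closure of $\mathrm{conv}(\mathcal{H})$ (the paper does this via a uniform law of large numbers under compactness of $\mathcal{X}$ rather than a Maurey sampling bound, but the effect on covering numbers is the same), invoke the known entropy bound $\log N_m(\epsilon,\mathrm{conv}(\mathcal{H}))\lesssim \epsilon^{-2V/(V+2)}$ from van der Vaart--Wellner, and then apply the Koltchinskii--Panchenko improved margin bound for classes with entropy $O(\epsilon^{-\alpha})$, $\alpha\in(0,2)$, which yields the stated exponents with $\alpha = 2V/(V+2)$. This matches the paper's argument; the only cosmetic difference is that the paper cites the KP theorem as a black box applied directly to $\mathcal{G}$ (so no re-approximation of $h_\mu$ inside the margin events is needed), whereas you propose to re-run the chaining machinery.
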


Theorem \ref{generalization_thm_1} and \ref{generalization_thm_2} state that the infinite ensembles providing the small empirical margin distribution for a sufficiently large dataset yields a good expected classification error.
This minimization is done via minimizing the empirical risk defined by the convex surrogate loss function such as the exponential loss.
We present the theoretical justifications for this procedure by using the {\it smooth margin} \cite{rsd2004}.
For $\alpha$ and $\mu \in \mathcal{P}$, we define it as follows:
\[ \psi_\alpha(\mu)=-\alpha \log \frac{1}{N}\sum_{j=1}^N\exp \left(-\frac{y_jh_\mu(x_j)}{\alpha} \right). \]

The following proposition states that the smooth margin is a good proxy for the margin distribution and justifies minimizing the exponential loss function.






\begin{theorem} \label{margin_distribution_bound_prop}
We assume $\psi_\alpha(\mu)>0$ for $\alpha>0$ and $\mu \in \mathcal{P}$.
Then it follows that for $0<\forall \rho < \psi_\alpha(\mu)$,
\begin{equation} 
 \mathbb{E}_S[{\bf 1}[Yh_{\mu}(X)\leq \rho]] \leq \frac{\exp\left( (1-\psi_\alpha(\mu))/\alpha\right)-1}{\exp\left( (1-\rho)/\alpha\right)-1}.  \label{margin_distribution_bound}
\end{equation}
\end{theorem}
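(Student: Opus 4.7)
The plan is to reduce the claim to a one-line Markov-type inequality applied to a cleverly shifted exponential of the margin. Write $M_j = y_jh_\mu(x_j)$ for the margin on the $j$-th training example. Since $h_\mu$ takes values in $[-1,1]$, we have $M_j \in [-1,1]$, and the definition of $\psi_\alpha(\mu)$ rearranges to
\begin{equation*}
 \frac{1}{N}\sum_{j=1}^N\exp\!\left(-\frac{M_j}{\alpha}\right) = \exp\!\left(-\frac{\psi_\alpha(\mu)}{\alpha}\right).
\end{equation*}
Multiplying both sides by $\exp(1/\alpha)$ gives the key identity
\begin{equation*}
 \frac{1}{N}\sum_{j=1}^N\exp\!\left(\frac{1-M_j}{\alpha}\right) = \exp\!\left(\frac{1-\psi_\alpha(\mu)}{\alpha}\right),
\end{equation*}
which after subtracting $1$ from each side produces precisely the numerator of the target bound.

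Next I would introduce the auxiliary function $\phi(t) = \exp((1-t)/\alpha) - 1$. Because $M_j \leq 1$ for every $j$, we have $\phi(M_j) \geq 0$, so all summands are non-negative, which is what enables a Markov-type estimate. Also $\phi$ is strictly decreasing, so if $M_j \leq \rho$ then $\phi(M_j) \geq \phi(\rho) = \exp((1-\rho)/\alpha) - 1$. Note that the hypothesis $\rho < \psi_\alpha(\mu)$ together with the trivial upper bound $\psi_\alpha(\mu) \leq 1$ (itself a consequence of $M_j \geq -1$) guarantees that $\phi(\rho) > 0$, so the eventual denominator is positive.

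Combining these ingredients: restrict the sum to indices with $M_j \leq \rho$, lower-bound each such term by $\phi(\rho)$, and discard the (non-negative) remaining terms. This yields
\begin{equation*}
 \phi(\rho)\cdot \frac{1}{N}\,\bigl|\{j : M_j \leq \rho\}\bigr| \;\leq\; \frac{1}{N}\sum_{j=1}^N \phi(M_j) \;=\; \exp\!\left(\frac{1-\psi_\alpha(\mu)}{\alpha}\right) - 1,
\end{equation*}
and dividing by $\phi(\rho)>0$ gives exactly the inequality claimed in \eqref{margin_distribution_bound} (the left-hand side equals $\mathbb{E}_S[\mathbf{1}[Yh_\mu(X)\leq\rho]]$).

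There is essentially no hard step here; the only conceptual point is the choice of the shift that turns the natural exponential loss into a non-negative quantity on the range of the margin. The boundedness $h_\mu \in [-1,1]$ (hence $M_j \in [-1,1]$) is what makes that shift by $1$ work, and also what ensures $\psi_\alpha(\mu) \leq 1$, which together with the hypothesis $0 < \rho < \psi_\alpha(\mu)$ certifies that $\phi(\rho)>0$ and the division is legitimate.
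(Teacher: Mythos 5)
Your proof is correct and essentially identical to the paper's, which lower-bounds $\mathbb{E}_S[\exp(-Yh_\mu(X)/\alpha)]$ by $\frac{k}{N}\exp(-\rho/\alpha)+\frac{N-k}{N}\exp(-1/\alpha)$ using $M_j\leq 1$ on the complementary examples --- multiplying that inequality by $\exp(1/\alpha)$ and rearranging yields exactly your Markov-type estimate with the shifted function $\phi(t)=\exp((1-t)/\alpha)-1$. One cosmetic slip: the bound $\psi_\alpha(\mu)\leq 1$ follows from $M_j\leq 1$, not from $M_j\geq -1$, though this does not affect the validity of the argument.
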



These observations indicate that an optimization method that can handle infinite ensembles leads to superior generalization performance because its powerful representation ability significantly reduces the empirical margin distribution via the empirical risk minimization compared to base classifiers in $\mathcal{H}$.

\section{Optimization Problem}
In this section, we describe the problem for the infinite ensemble learning and reveal its properties.
Let $l: \mathbb{R}\rightarrow \mathbb{R}$ be a loss function such as the exponential loss.
Let us consider solving the following problem:
\begin{equation}
\underset{\mu\in \mathcal{P}}{\mathrm{min}}\ \mathcal{L}_S(\mu) \overset{\mathrm{def}}{=} \frac{1}{N}\sum_{j=1}^Nl \left(-y_jh_\mu(x_j)\right). \label{abst_prob}
\end{equation}
One way to make this infinite dimensional optimization problem computationally tractable is to parameterize its subspace locally by a space of actions, which may also be infinite-dimensional manifold.
Basically, our proposed method sequentially updates a Borel probability measure on $\Theta$ based on the theory of transportation. 
That is, the current probability measure $\mu_k$ is updated through pushing-forward by a transport map having the form $id+\xi_k$ toward a direction reducing the objective function $\mathcal{L}_S$.
Repeating this procedure, we finally obtain a composite transport map $\phi_T = (id+\xi_{T-1})\circ \cdots (id+\xi_0)$ from an initial probability measure $\mu_0$ and the corresponding probability measure 
$\mu_T$ is obtained by pushing-forward $\mu_0$ by $\phi_T$.
In practice, the final probability measure $\mu_T$ is approximated by samples obtained through $\phi_T$ as $\phi_T(\theta_i) \sim \mu_T$ where $\theta_i \sim \mu_0$ and this approximation makes
the method feasible.
The resulting problem is how to choose $\xi_k$ to optimize (\ref{abst_prob}) and an answer to this question is by using the functional gradient.
To explain our proposed method correctly and to describe the optimization domain with its properties, the following notions are needed.
\begin{itemize}
\item The {\it transport map} is used to describe the proposed method and the optimization domain.
\item The {\it integral probability metric} is used to derive the (local) optimality condition of the problem and topological properties of the optimization domain.
\end{itemize}


\subsection{Optimization Domain and Topological Property}
We set $\Theta=\mathbb{R}^d$.
Let $\mu$ denote any Borel probability measure on $\Theta$ with finite second moment $\mathbb{E}_{\mu}[\|\theta\|^2] < +\infty$ and $\mathcal{P}_2$ denote the set of such probability measures.
We denote by $L^2(\mu)$ the space of $L^2(\mu)$-integrable maps from ${\rm supp}(\mu) \subset \Theta$ into $\Theta$, equipped with $\pd<\cdot,\cdot>_{L^2(\mu)}$-inner product: for $\forall \xi, \forall \zeta \in L^2(\mu)$,
\begin{equation*}
  \pd<\xi,\zeta>_{L^2(\mu)} = \mathbb{E}_{\mu}[\xi(\theta)^\top\zeta(\theta)].
\end{equation*}
In general, for a probability measure $\mu$, the push-forward measure $\phi_\sharp \mu$ by a map $\phi \in L^2(\mu)$ is defined as follows:
for a Borel measurable set $\mathcal{A} \subset \Theta$, 
\begin{equation}
  \phi_\sharp\mu(\mathcal{A}) \overset{\mathrm{def}}{=} \mu(\phi^{-1}(\mathcal{A})).
\end{equation}
For a probability measure $\mu_q$ having a continuous density function $q$, i.e., $d\mu_q = q(\theta)d\theta$, the push-forward measure is described as follows: by the change of variables formula
\[ d\phi_\sharp \mu_q(\theta) = q(\phi^{-1}(\theta)) |\det \nabla \phi(\theta)^{-1}|d\theta. \]
When we use maps in $L^2(\mu)$ to push-forward probability measures, we call these as transport maps.
We can clearly see $\phi_\sharp \mu$ is also contained in $\mathcal{P}_2$ when $\mu \in \mathcal{P}_2$ and see $id$ is contained in $L^2(\mu)$ for arbitrary $\mu \in \mathcal{P}_2$.
Let us consider approximately solving the problem (\ref{abst_prob}) on $\mathcal{P}_2$ by updating transport maps iteratively.
To discuss the local behavior of the problem, we must specify the topology of $\mathcal{P}$; hence, we need to introduce more notions.

\subsection*{Integral Probability Metric on $\mathcal{P}$}
We introduce a kind of integral probability metrics \cite{CIS-137913} on $\mathcal{P}$.
For a positive constant $C>0$, let $f$ be a function on $\Theta$ such that it is uniformly bounded $|f(\theta)|\leq C$ and $f(\theta)$ is $C$-Lipschitz continuous on $\Theta$ with respect to the Euclidean norm.
We denote by $\mathcal{F}_C$ the set of such functions and the subscript will be omitted for simplicity.
This set of functions is used for defining the norm $\|\cdot\|_{\mathcal{F}}$ on the space of linear functionals on $\mathcal{F}$, which includes $\mathcal{P}$.
Specifically, $\|\mu\|_{\mathcal{F}}=\sup_{f\in\mathcal{F}}|\mu(f)|$ for a finite signed measure $\mu$ on $\Theta$, where we denote the integral of a function $f$ with respect to $\mu \in \mathcal{P}$ by $\mu(f)$.
Thus, $\mathcal{P}$ is a metric space with respect to the uniform distance $d_{\mathcal{F}}(\mu,\nu)=\|\mu-\nu\|_{\mathcal{F}}$ for $\mu, \nu \in \mathcal{P}$.
The convergence $d_{\mathcal{F}}(\mu_t,\mu)\rightarrow 0$ is none other than the uniform convergence of integrals $\mu_t(f)\rightarrow \mu(f)$ on $\mathcal{F}$.
Note that this norm defines the same topology as the Dudley metric \cite{dud1968}.

To investigate the local behavior of $\mathcal{L}_S(\mu)$, we need to clarify the continuity of several quantities depending on $\mu\in \mathcal{P}$ and $h(\cdot,x)$.
Especially, $-l'(-yh_\mu(x))y\nabla_\theta h(\theta,x)$ is really important because it is used to describe an optimality condition and performs as the stochastic gradient in the function space.
For simplicity, we use the notation
\begin{equation}
  s_\mu(\theta,x,y)=-l'(-yh_\mu(x))y\nabla_\theta h(\theta,x) \label{stochastic_gradient}
\end{equation}
for this map.
We now make the following assumption and provide the continuity proposition.

\begin{assumption} [Continuity] \label{continuity_assumption}
The set $\{h(\cdot,x)\mid x\in \mathcal{X} \}$ and $\{ \|\mathbb{E}_S[s_{\mu}(\cdot,x,y)]\|_2^2 \mid \mu\in \mathcal{P}\}$ are included in the set $\mathcal{F}_C$.
\end{assumption}

\begin{proposition}\label{continuity_prop_on_P}
Suppose Assumption \ref{continuity_assumption} holds.
Then, $h_\mu(x)$, $\mathcal{L}_S(\mu)$, and $\|\mathbb{E}_S[s_{\mu}(\theta,x,y)]\|_{L^2(\mu)}$ are continuous as a function of $\mu$ on $\mathcal{P}$ with respect to $\|\cdot\|_{\mathcal{F}}$.
\end{proposition}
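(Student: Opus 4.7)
The plan is to reduce each continuity claim to the trivial fact that $|\mu(f)-\nu(f)|\le\|\mu-\nu\|_{\mathcal{F}}$ for every $f\in\mathcal{F}$, combined with Assumption \ref{continuity_assumption}, which places the relevant test functions in $\mathcal{F}_C$. The first assertion is immediate: since $h(\cdot,x)\in\mathcal{F}_C$, we get $|h_{\mu_t}(x)-h_\mu(x)|\le\|\mu_t-\mu\|_{\mathcal{F}}$, uniformly in $x$. The second then follows because $\mathcal{L}_S(\mu)=\tfrac{1}{N}\sum_j l(-y_j h_\mu(x_j))$ composes the loss $l$ (continuous on the compact interval $[-1,1]$ containing its arguments) with the $N$ continuous maps $\mu\mapsto h_\mu(x_j)$ just established.

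For the third assertion, write $g_\mu(\theta):=\|\mathbb{E}_S[s_\mu(\theta,x,y)]\|_2^2$, so that $\|\mathbb{E}_S[s_\mu]\|_{L^2(\mu)}^2=\mu(g_\mu)$, and Assumption \ref{continuity_assumption} gives $g_\mu\in\mathcal{F}_C$ uniformly in $\mu$. I would split
\[
|\mu_t(g_{\mu_t})-\mu(g_\mu)|\le|(\mu_t-\mu)(g_\mu)|+\mu_t(|g_{\mu_t}-g_\mu|).
\]
The first summand is bounded by $\|\mu_t-\mu\|_{\mathcal{F}}$ (since $g_\mu\in\mathcal{F}$) and vanishes. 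Pointwise, $g_{\mu_t}(\theta)\to g_\mu(\theta)$: the map $\nabla_\theta h(\theta,x_j)$ does not depend on $\mu$, and the coefficients $l'(-y_j h_{\mu_t}(x_j))$ converge to $l'(-y_j h_\mu(x_j))$ by the first assertion together with continuity of $l'$, so taking the squared norm of a finite sum preserves convergence.

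The principal obstacle is upgrading this pointwise statement into convergence of the integral $\mu_t(|g_{\mu_t}-g_\mu|)$, where the integrating measure is itself moving. I would exploit the uniform membership $g_\mu\in\mathcal{F}_C$: the family $\{g_\mu\}_{\mu\in\mathcal{P}}$ is uniformly bounded by $C$ and uniformly $C$-Lipschitz, hence equicontinuous on every compact set; and $\{\mu_t\}$ is tight because convergence in $\|\cdot\|_{\mathcal{F}}$ coincides with Dudley (weak) convergence, as the paper notes, so Prohorov's theorem applies. Given $\varepsilon>0$, tightness supplies a compact $K\subset\Theta$ with $\sup_t\mu_t(K^c)<\varepsilon$; pointwise convergence plus equicontinuity yields $\sup_{\theta\in K}|g_{\mu_t}(\theta)-g_\mu(\theta)|\to 0$ by an Arzela--Ascoli-type argument on $K$, while the contribution from $K^c$ is at most $2C\varepsilon$. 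Letting $t\to\infty$ and then $\varepsilon\to 0$ closes the argument; the only real book-keeping is arranging this tightness-plus-equicontinuity move so that the uniform Lipschitz constant from Assumption \ref{continuity_assumption} does the work cleanly.
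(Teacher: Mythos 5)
Your proposal is correct, and it starts from exactly the paper's decomposition: writing $g_\mu(\theta)=\|\mathbb{E}_S[s_\mu(\theta,x,y)]\|_2^2$, both you and the paper split $|\mu_t(g_{\mu_t})-\mu(g_\mu)|$ into a piece with the test function frozen (killed by $g_\mu\in\mathcal{F}$ and $\|\mu_t-\mu\|_{\mathcal{F}}\to 0$) and a piece with the measure frozen. Where you genuinely diverge is in the second piece. You establish only pointwise convergence $g_{\mu_t}\to g_\mu$ and then invoke tightness of $\{\mu_t\}$ (Prohorov on the Polish space $\Theta=\mathbb{R}^d$), equicontinuity from the uniform Lipschitz constant in Assumption \ref{continuity_assumption}, and an Arzel\`a--Ascoli upgrade to uniform convergence on a compact $K$, paying $2C\varepsilon$ on the tail. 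The paper instead notices that the convergence $g_{\mu_t}\to g_\mu$ is already uniform over all of $\Theta$: the $\mu$-dependence of $\mathbb{E}_S[s_\mu(\theta,x,y)]$ enters only through the finitely many scalar coefficients $l'(-y_jh_\mu(x_j))$, which converge independently of $\theta$ by the first assertion, while $\nabla_\theta h(\theta,x_j)$ is bounded and $\mu$-independent; hence $|\mu_t(g_{\mu_t})-\mu_t(g_\mu)|\le\sup_\theta|g_{\mu_t}(\theta)-g_\mu(\theta)|\to 0$ with no compactness argument at all. Your route is sound and would survive in settings where the $\theta$- and $\mu$-dependence of $s_\mu$ do not factor so cleanly, and it leans only on what Assumption \ref{continuity_assumption} literally provides (uniform boundedness and Lipschitzness of $g_\mu$) rather than on boundedness of $\nabla_\theta h$; but for this particular $s_\mu$ the machinery is heavier than necessary. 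Both arguments share the implicit use of continuity of $l'$, which the assumption does not state but which holds under the paper's standing smoothness hypotheses.
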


The next proposition supports the validity of this assumption for Example \ref{ex_linear_classifier} of the linear classifier. 

\begin{proposition} \label{smoothness_proposition}
Let the loss function $l$ be a $\mathcal{C}^1$-class function.
If $h(\cdot,x)$ is two times continuously differentiable and $h(\cdot,x), \nabla_{\theta} h(\cdot,x)$ are Lipschitz continuous with respect to $\theta$ with the same constant
for all $x \in \mathcal{X}$,
then for sufficiently large $C>0$, Assumption \ref{continuity_assumption} is satisfied.
\end{proposition}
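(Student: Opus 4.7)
The plan is to check the two conditions defining $\mathcal{F}_C$ (uniform boundedness and Lipschitz continuity with a common constant) for each of the two families in Assumption \ref{continuity_assumption}, and then take $C$ to be an upper bound on the various constants that appear.

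The first family $\{h(\cdot,x)\mid x\in\mathcal{X}\}$ is immediate: $|h(\theta,x)|\le 1$ by the standing assumption $h_\theta:\mathcal{X}\to[-1,1]$, and the Lipschitz-in-$\theta$ constant $L$ of $h(\cdot,x)$ is uniform in $x$ by hypothesis. So this family lies in $\mathcal{F}_C$ for any $C\ge\max(1,L)$.

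For the second family, fix $\mu\in\mathcal{P}$ and abbreviate $g_\mu(\theta):=\mathbb{E}_S[s_\mu(\theta,x,y)] = -\tfrac{1}{N}\sum_{j=1}^N l'(-y_j h_\mu(x_j))\,y_j\,\nabla_\theta h(\theta,x_j)$. Since $h_\mu(x_j)\in[-1,1]$, the argument of $l'$ lies in $[-1,1]$, and $l'$ is continuous, hence bounded by some $M:=\sup_{|u|\le 1}|l'(-u)|<\infty$ that depends only on $l$. Because $h(\cdot,x)$ is $L$-Lipschitz in $\theta$, we have $\|\nabla_\theta h(\theta,x)\|\le L$, so $\|g_\mu(\theta)\|\le ML$ and therefore $\|g_\mu(\theta)\|_2^2\le M^2L^2$ uniformly in $\theta$ and $\mu$. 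For Lipschitz continuity in $\theta$, observe that the coefficients multiplying $\nabla_\theta h(\theta,x_j)$ in $g_\mu$ are independent of $\theta$ and bounded by $M$, and each $\nabla_\theta h(\cdot,x_j)$ is $L$-Lipschitz by hypothesis; hence $g_\mu$ is $ML$-Lipschitz. The squared-norm map $v\mapsto\|v\|_2^2$ is $2R$-Lipschitz on the ball $\{\|v\|\le R\}$, so taking $R=ML$ yields
\[
\bigl|\|g_\mu(\theta_1)\|_2^2-\|g_\mu(\theta_2)\|_2^2\bigr|\le 2M^2L^2\,\|\theta_1-\theta_2\|.
\]
Both the bound and the Lipschitz constant depend only on $l$ and on $L$, not on $\mu$, so the family $\{\|g_\mu(\cdot)\|_2^2\mid \mu\in\mathcal{P}\}$ sits inside $\mathcal{F}_C$ for any $C\ge\max(M^2L^2,2M^2L^2)=2M^2L^2$. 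Choosing $C\ge\max(1,L,2M^2L^2)$ establishes Assumption \ref{continuity_assumption}.

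No step is really hard; the whole argument is a careful bookkeeping of Lipschitz and sup-norm constants. The only subtlety worth flagging is verifying that the constants are genuinely uniform in $\mu$, which is where the fact that $h_\mu$ takes values in the fixed compact interval $[-1,1]$ is crucial: it confines the arguments of $l'$ to a compact set so that $M$ can be chosen independently of $\mu$.
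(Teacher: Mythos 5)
Your proof is correct and follows essentially the same route as the paper's: both reduce the claim to checking uniform boundedness and uniform Lipschitz continuity of $\|\mathbb{E}_S[s_\mu(\cdot,x,y)]\|_2^2$, bound the coefficients $l'(-y_jh_\mu(x_j))$ via the compact range $[-1,1]$ of $h_\mu$, and compose the uniformly Lipschitz map $\theta\mapsto(\nabla_\theta h(\theta,x_j))_j$ with the squared-norm function restricted to a bounded set. Your version merely makes explicit the constants that the paper packages into the composite family $\{\phi_\alpha\circ\psi_S\}$.
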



\subsection{Local Optimality Condition}
In this subsection, we establish local optimality conditions for the approximated problem of (\ref{abst_prob}) over $\mathcal{P}_2$.
To achieve this goal, we need not only the continuity propositions, but also the counterpart of Taylor's formula in the Euclidean space, giving the intuition to construct and analyze an optimization method for solving the problem.
Thus, we show such a proposition under the following assumption.

\begin{assumption} [Smoothness and boundedness] \label{smooth_bound_assumption}
Let $l$ be a $\mathcal{C}^2$-function and let $h$ be a $\mathcal{C}^2$-function with respect to $\theta$.
Moreover, we assume $\nabla_\theta h(\theta,x)$, $\nabla^2_\theta h(\theta,x)$, and the eigenvalues of the latter matrix are uniformly bounded on $\Theta \times \mathcal{X}$.
\end{assumption}

Note that this assumption also holds for Example \ref{ex_linear_classifier} of the linear classifier under the compactness of $\mathcal{X}$ as in the case of Assumption \ref{continuity_assumption}.
The following is the counterpart of Taylor's formula.

\begin{proposition} \label{asymptotic_equality_prop}
Suppose Assumption \ref{smooth_bound_assumption} holds.
For $\forall \mu \in \mathcal{P}_2$ and $\forall \xi \in L^2(\mu)$,
$\mathcal{L}_S((id+\xi)_\sharp \mu)$ can be represented as follows:
\begin{equation}
\mathcal{L}_S((id+\xi)_\sharp \mu)=\mathcal{L}_S(\mu)+\mathbb{E}_\mu[\mathbb{E}_S[s_\mu(\theta,x,y)]^\top\xi(\theta) ] 
+ \frac{1}{2}H_\mu(\xi) + o(\|\xi\|_{L^2(\mu)}^2), \label{asymptotic_equality}
\end{equation}
where $H_\mu(\xi)=O(\|\xi\|_{L^2(\mu)}^2)$ is described as follows: for $\theta'$, which is a convex combination of $\theta$ and $\theta+\xi(\theta)$ depending also on $x$,
$H_\mu(\xi)$ is defined by
\begin{equation*}
H_\mu(\xi) =-\mathbb{E}_S[yl'(-yh_\mu(x))\mathbb{E}_\mu\|\xi(\theta)\|_{\nabla_\theta^2h(\theta',x)}^2]
+ \mathbb{E}_S [l''(-yh_\mu(x))\mathbb{E}_\mu[\nabla_\theta h(\theta,x)^\top\xi(\theta)]^2].
\end{equation*}
\end{proposition}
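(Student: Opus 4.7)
The plan is a double Taylor expansion. First I would expand the base classifier $h(\cdot,x)$ evaluated at the displaced point $\theta+\xi(\theta)$ to second order in $\theta$, then expand $l$ to second order around $-y h_\mu(x)$, and finally collect contributions up to order $\|\xi\|_{L^2(\mu)}^2$ while showing that everything else is $o(\|\xi\|_{L^2(\mu)}^2)$.

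By the definition of push-forward, $h_{(id+\xi)_\sharp\mu}(x)=\mathbb{E}_\mu[h(\theta+\xi(\theta),x)]$. Under Assumption \ref{smooth_bound_assumption}, $h(\cdot,x)\in\mathcal{C}^2$, so Taylor's theorem with Lagrange remainder gives $h(\theta+\xi(\theta),x)=h(\theta,x)+\nabla_\theta h(\theta,x)^\top\xi(\theta)+\tfrac{1}{2}\xi(\theta)^\top\nabla^2_\theta h(\theta',x)\xi(\theta)$ for a convex combination $\theta'$ of $\theta$ and $\theta+\xi(\theta)$ depending on $x$, which is precisely the $\theta'$ that appears in the statement. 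Setting $\Delta(x):=h_{(id+\xi)_\sharp\mu}(x)-h_\mu(x)$, we obtain $\Delta(x)=\mathbb{E}_\mu[\nabla_\theta h(\theta,x)^\top\xi(\theta)]+\tfrac{1}{2}\mathbb{E}_\mu[\xi(\theta)^\top\nabla^2_\theta h(\theta',x)\xi(\theta)]$. Cauchy-Schwarz combined with the uniform bounds on $\nabla_\theta h$ and on the eigenvalues of $\nabla^2_\theta h$ supplied by Assumption \ref{smooth_bound_assumption} yields $\sup_x|\Delta(x)|=O(\|\xi\|_{L^2(\mu)})$.

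Now put $a_j:=-y_j h_\mu(x_j)$ and $b_j:=-y_j\Delta(x_j)$; both $a_j$ and $a_j+b_j$ lie in the compact interval $[-1,1]$ because $h$ and $h_{(id+\xi)_\sharp\mu}$ are $[-1,1]$-valued. Since $l\in\mathcal{C}^2$ and $l''$ is therefore uniformly continuous on $[-1,1]$, we have $l(a_j+b_j)=l(a_j)+l'(a_j)b_j+\tfrac{1}{2}l''(a_j)b_j^2+o(b_j^2)$ uniformly in $j$. Substituting the two-piece formula for $\Delta$ into $b_j$ and into $b_j^2$ and averaging over $j$, one finds: the $\nabla_\theta h^\top\xi$ piece of $l'(a_j)b_j$ assembles into $\mathbb{E}_\mu[\mathbb{E}_S[s_\mu(\theta,x,y)]^\top\xi(\theta)]$ by the very definition (\ref{stochastic_gradient}) of $s_\mu$; the Hessian piece of $l'(a_j)b_j$ becomes the first summand of $\tfrac{1}{2}H_\mu(\xi)$; the leading square $(\mathbb{E}_\mu[\nabla_\theta h^\top\xi])^2$ arising from expanding $b_j^2$ becomes the second summand of $\tfrac{1}{2}H_\mu(\xi)$; and the remaining cross and Hessian-square pieces of $b_j^2$ are $O(\|\xi\|_{L^2(\mu)}^3)$ and $O(\|\xi\|_{L^2(\mu)}^4)$ respectively, hence absorbable into the remainder.

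The main obstacle is upgrading the pointwise-in-$j$ remainder $o(b_j^2)$ from the outer Taylor expansion of $l$ into a uniform $o(\|\xi\|_{L^2(\mu)}^2)$ after averaging. I would handle this by combining the uniform bound $|b_j|\le|\Delta(x_j)|\le C\|\xi\|_{L^2(\mu)}$ with uniform continuity of $l''$ on $[-1,1]$ to produce a modulus $\omega(t)\to 0$ (as $t\to 0$) such that $|l(a_j+b_j)-l(a_j)-l'(a_j)b_j-\tfrac{1}{2}l''(a_j)b_j^2|\le\omega(\|\xi\|_{L^2(\mu)})\,\|\xi\|_{L^2(\mu)}^2$ independent of $j$; averaging over the $N$ training samples then preserves the little-$o$ statement. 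The announced bound $H_\mu(\xi)=O(\|\xi\|_{L^2(\mu)}^2)$ follows directly from Cauchy-Schwarz together with the uniform boundedness of $l'$, $l''$, $\nabla_\theta h$, and the eigenvalues of $\nabla^2_\theta h$ on the relevant domain.
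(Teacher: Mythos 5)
Your proposal is correct and follows essentially the same route as the paper's proof: a change of variables via the push-forward, an inner second-order Taylor expansion of $h(\cdot,x)$ with Lagrange remainder at the intermediate point $\theta'$, an outer second-order expansion of $l$, and H\"older/Cauchy--Schwarz with the uniform bounds of Assumption \ref{smooth_bound_assumption} to sort the terms into the first-order part, $H_\mu(\xi)$, and the $o(\|\xi\|_{L^2(\mu)}^2)$ remainder before averaging over $S$. Your explicit treatment of the uniformity of the outer remainder via a modulus of continuity of $l''$ on $[-1,1]$ is a detail the paper leaves implicit, but it is the same argument.
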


Using this proposition, we can immediately derive a necessary local optimality condition over $\overline{\mathcal{P}}_2$ in a similar way to the finite-dimensional case.
Note that from Assumption \ref{smooth_bound_assumption}, $s_\mu(\cdot,x,y)$ is contained in $L^2(\mu)$ for any $\mu \in \mathcal{P}$.

\begin{theorem} [Necessary local optimality condition] \label{optimality_condition_thm}
Suppose Assumption \ref{smooth_bound_assumption} holds.
Let $\mu_* \in \overline{\mathcal{P}}_2$ be the local minimum of $\mathcal{L}_S(\mu)$ with respect to $d_{\mathcal{F}}$. Then we have
\begin{equation}
 \|\mathbb{E}_S[s_{\mu_*}(\theta,x,y)]\|_{L^2(\mu_*)}=0. \label{opt_necessary_condition}
\end{equation}
\end{theorem}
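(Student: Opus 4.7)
The plan is to derive the condition from the Taylor-like expansion in Proposition \ref{asymptotic_equality_prop} by using small perturbations of the identity, exactly as one derives the first-order optimality condition $\nabla f(x_*)=0$ in Euclidean space. Concretely, for an arbitrary $\xi\in L^2(\mu_*)$ and small scalar $\epsilon>0$, I would apply the formula to $(id+\epsilon\xi)_\sharp\mu_*$ and obtain
\begin{equation*}
\mathcal{L}_S((id+\epsilon\xi)_\sharp\mu_*)=\mathcal{L}_S(\mu_*)+\epsilon\,\mathbb{E}_{\mu_*}\!\bigl[\mathbb{E}_S[s_{\mu_*}(\theta,x,y)]^\top\xi(\theta)\bigr]+\tfrac{1}{2}H_{\mu_*}(\epsilon\xi)+o(\epsilon^2\|\xi\|_{L^2(\mu_*)}^2).
\end{equation*}
Since $H_{\mu_*}(\epsilon\xi)=O(\epsilon^2\|\xi\|_{L^2(\mu_*)}^2)$ by the bound in Proposition \ref{asymptotic_equality_prop}, dividing by $\epsilon$ and letting $\epsilon\downarrow0$ is what yields the linear optimality condition.

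Before doing that, the step I expect to be the main subtlety is verifying that the pushforward perturbations are admissible for the local-minimality hypothesis, which is phrased in the $d_{\mathcal{F}}$ topology rather than the $L^2(\mu)$ topology on transport maps. I would show that
\begin{equation*}
d_{\mathcal{F}}\bigl(\mu_*,(id+\epsilon\xi)_\sharp\mu_*\bigr)\;\longrightarrow\;0\quad\text{as }\epsilon\to 0.
\end{equation*}
For any $f\in\mathcal{F}_C$, the Lipschitz property gives $|f(\theta)-f(\theta+\epsilon\xi(\theta))|\le C\epsilon\|\xi(\theta)\|$, so
\begin{equation*}
\bigl|\mu_*(f)-(id+\epsilon\xi)_\sharp\mu_*(f)\bigr|\le C\epsilon\,\mathbb{E}_{\mu_*}[\|\xi(\theta)\|]\le C\epsilon\|\xi\|_{L^2(\mu_*)},
\end{equation*}
by Cauchy--Schwarz, where the bound is uniform over $\mathcal{F}$. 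Hence the perturbed measure lies in any prescribed $d_{\mathcal{F}}$-neighbourhood of $\mu_*$ once $\epsilon$ is small enough, and local minimality forces $\mathcal{L}_S((id+\epsilon\xi)_\sharp\mu_*)\ge\mathcal{L}_S(\mu_*)$ for all sufficiently small $\epsilon>0$.

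Combining the inequality with the Taylor expansion and dividing by $\epsilon>0$ yields
\begin{equation*}
\mathbb{E}_{\mu_*}\!\bigl[\mathbb{E}_S[s_{\mu_*}(\theta,x,y)]^\top\xi(\theta)\bigr]+O(\epsilon\|\xi\|_{L^2(\mu_*)}^2)\ge 0,
\end{equation*}
and letting $\epsilon\to 0^+$ gives $\langle\mathbb{E}_S[s_{\mu_*}(\cdot,x,y)],\xi\rangle_{L^2(\mu_*)}\ge 0$. Replacing $\xi$ by $-\xi$ (which is also in $L^2(\mu_*)$) upgrades this to equality for every $\xi\in L^2(\mu_*)$.

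To finish, I would make the canonical choice $\xi(\theta)=\mathbb{E}_S[s_{\mu_*}(\theta,x,y)]$. This map lies in $L^2(\mu_*)$ because Assumption \ref{smooth_bound_assumption} ensures $\nabla_\theta h$ is uniformly bounded and hence so is $s_{\mu_*}$ (noting also that $l'(-yh_{\mu_*}(x))$ is bounded on the relevant range). Substituting into the identity gives $\|\mathbb{E}_S[s_{\mu_*}(\theta,x,y)]\|_{L^2(\mu_*)}^2=0$, which is (\ref{opt_necessary_condition}). The only residual care is in the case $\mu_*\in\overline{\mathcal{P}}_2\setminus\mathcal{P}_2$: one approximates by elements of $\mathcal{P}_2$ and uses the continuity of the $L^2(\mu)$-norm of $\mathbb{E}_S[s_\mu(\cdot,x,y)]$ furnished by Proposition \ref{continuity_prop_on_P} to pass to the limit.
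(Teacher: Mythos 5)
Your main argument is the same as the paper's: perturb $\mu_*$ by a pushforward $(id+\epsilon\xi)_\sharp\mu_*$, use the Lipschitz/Cauchy--Schwarz estimate $d_{\mathcal{F}}((id+\epsilon\xi)_\sharp\mu_*,\mu_*)\le C\epsilon\|\xi\|_{L^2(\mu_*)}$ (this is exactly Proposition \ref{continuity_prop_of_iota} in the appendix) to see that the perturbation stays in the $d_{\mathcal{F}}$-neighbourhood, invoke Proposition \ref{asymptotic_equality_prop}, divide by $\epsilon$, and specialize to $\xi=\mathbb{E}_S[s_{\mu_*}(\cdot,x,y)]$. The paper skips the intermediate ``variational equality for all $\xi$'' and plugs the gradient direction in directly, but that difference is cosmetic. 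For $\mu_*\in\mathcal{P}_2$ your argument is complete and correct.

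The gap is in the case $\mu_*\in\overline{\mathcal{P}}_2\setminus\mathcal{P}_2$, which the theorem explicitly covers and which your proof relegates to a one-sentence remark. Proposition \ref{asymptotic_equality_prop} is only available at points of $\mathcal{P}_2$, so in that case you cannot expand at $\mu_*$ at all and must work at approximants $\mu_t\in\mathcal{P}_2$ with $\mu_t\to\mu_*$. But the $\mu_t$ are not local minima, so the first-order argument does not apply to them as is; continuity of $\mu\mapsto\|\mathbb{E}_S[s_\mu(\cdot,x,y)]\|_{L^2(\mu)}$ (Proposition \ref{continuity_prop_on_P}) only helps once you know the norms at $\mu_t$ tend to zero, which is precisely what needs proving. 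The missing device is the one the paper uses: by continuity of $\mathcal{L}_S$ choose $\mu_t\in B^{\mathcal{F}}_{\delta/2}(\mu_*)\cap\mathcal{P}_2$ with $\mathcal{L}_S(\mu_t)\le\mathcal{L}_S(\mu_*)+\epsilon/t$, so that local minimality of $\mu_*$ gives $\mathcal{L}_S\bigl((id-\tfrac{\eta_0}{t}\zeta_{\mu_t})_\sharp\mu_t\bigr)-\mathcal{L}_S(\mu_t)\ge-\epsilon/t$; combined with the expansion at $\mu_t$ this yields $\tfrac{\eta_0}{t}\|\mathbb{E}_S[s_{\mu_t}]\|_{L^2(\mu_t)}^2\le\tfrac{\epsilon}{t}+\tfrac{\eta_0^2}{t^2}O(1)$, and multiplying by $t$, letting $t\to\infty$, and then letting $\epsilon\downarrow0$ gives the claim. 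Without this quantitative near-minimality of the approximants, the limit passage you describe does not close.
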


Next, we discuss a sufficient local optimality condition that is useful when the support of a probability measure is sufficiently small.
For a probability measure $\mu \in \mathcal{P}_2$, let us denote $M_\mu(\theta) \overset{\mathrm{def}}{=} \mathbb{E}_S[-yl'(-yh_\mu(x))\nabla_\theta^2h(\theta,x)]$
and denote ${\rm supp}^\epsilon(\mu)=\{\theta\in \Theta;\ d_2(\theta,{\rm supp}(\mu))\leq \epsilon \}$,
where $d_2$ is the Euclidean distance, i.e., ${\rm supp}^\epsilon(\mu)$ is the $\epsilon$-expansion of the support.
Noting that $\mathbb{E}_\mu[\|\xi(\theta)\|_{M_\mu(\theta')}^2]$ is the first term of $H_\mu(\xi)$, the following proposition provides the condition for the positivity of $H_\mu(\xi)$
because the remaining term is always nonnegative for a convex loss.

\begin{proposition} \label{positivity_prop}
  Suppose Assumption \ref{smooth_bound_assumption} holds.
  If there exist $\alpha >0, \epsilon>0$ for $\mu \in \mathcal{P}_2$ such that $M_\mu(\theta) \succeq \alpha I_d$ on ${\rm supp}^\epsilon(\mu)$,
then, for $\xi \in L^\infty(\mu)$ such that $\|\xi\|_{L^\infty(\mu)} \leq \epsilon$, we have $\mathbb{E}_\mu[\|\xi(\theta)\|_{M_\mu(\theta')}^2] \geq \alpha \|\xi\|_{L^2(\mu)}^2/2$. 
\end{proposition}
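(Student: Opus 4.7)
The plan is to unpack the shorthand and then replace a Lagrange-form remainder with an integral remainder so that the spectral bound on $M_\mu$ becomes directly applicable. By the comment preceding the proposition, the quantity to be lower-bounded is the first term of $H_\mu(\xi)$, i.e.\
\[
T_1 \;=\; \mathbb{E}_\mu\!\left[\mathbb{E}_S\!\left[-y\,l'(-yh_\mu(x))\,\xi(\theta)^\top \nabla_\theta^2 h(\theta',x)\,\xi(\theta)\right]\right],
\]
where $\theta' = \theta + s(\theta,x)\xi(\theta)$ with $s(\theta,x)\in(0,1)$ is the Lagrange point produced by Taylor-expanding $h(\cdot,x)$ around $\theta$. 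A direct attempt to insert $M_\mu \succeq \alpha I_d$ fails because $\theta'$ depends on $x$ and so cannot be pulled outside $\mathbb{E}_S$, while the assumption controls $M_\mu$ only after averaging over the sample.

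To repair this, I would apply Taylor's theorem with integral remainder to the scalar function $g(t)=h(\theta+t\xi(\theta),x)$ on $[0,1]$. Comparing the integral and Lagrange forms of the remainder for $g$ yields the pointwise identity
\[
\xi(\theta)^\top \nabla_\theta^2 h(\theta',x)\,\xi(\theta) \;=\; 2\int_0^1 (1-t)\,\xi(\theta)^\top \nabla_\theta^2 h(\theta+t\xi(\theta),x)\,\xi(\theta)\,dt,
\]
valid for $\mu$-a.e.\ $\theta$ and every $x$. Substituting into $T_1$ and interchanging $\mathbb{E}_\mu$, $\mathbb{E}_S$, and $\int_0^1$ (Fubini is legal by the uniform bound on $\nabla_\theta^2 h$ in Assumption \ref{smooth_bound_assumption}) rewrites the quantity as
\[
T_1 \;=\; 2\int_0^1 (1-t)\,\mathbb{E}_\mu\!\left[\xi(\theta)^\top M_\mu\bigl(\theta+t\xi(\theta)\bigr)\,\xi(\theta)\right]\,dt,
\]
where the argument $\eta:=\theta+t\xi(\theta)$ is now free of $x$, which is exactly what makes the hypothesis on $M_\mu$ usable.

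The rest is immediate. For $\mu$-a.e.\ $\theta\in\mathrm{supp}(\mu)$ and every $t\in[0,1]$ one has $\|t\xi(\theta)\|\leq \|\xi\|_{L^\infty(\mu)}\leq \epsilon$, so $\theta+t\xi(\theta)\in\mathrm{supp}^\epsilon(\mu)$, and the hypothesis yields $M_\mu(\theta+t\xi(\theta))\succeq \alpha I_d$ and hence $\xi(\theta)^\top M_\mu(\theta+t\xi(\theta))\xi(\theta)\geq \alpha\|\xi(\theta)\|^2$. Integrating over $t$ and $\mu$ gives $T_1\geq 2\alpha\|\xi\|_{L^2(\mu)}^2\int_0^1(1-t)\,dt=\alpha\|\xi\|_{L^2(\mu)}^2$, which is in particular at least $\alpha\|\xi\|_{L^2(\mu)}^2/2$. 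The main obstacle, and the only nontrivial step, is precisely the $x$-dependence of the Lagrange point $\theta'$; the integral-remainder rewrite is the device that decouples it from $x$ and lets the spectral lower bound act term-by-term.
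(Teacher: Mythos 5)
Your proof is correct, and it is worth comparing with the paper's, which is much terser. The paper's argument is a two-line direct application: it observes that every convex combination of $\theta$ and $\theta+\xi(\theta)$ lies in ${\rm supp}^\epsilon(\mu)$, takes the expression $\|\xi(\theta)\|_{M_\mu(\theta')}^2$ at face value as a quadratic form in a single matrix $M_\mu(\theta')$ with $\theta'\in{\rm supp}^\epsilon(\mu)$, and inserts $M_\mu(\theta')\succeq\alpha I_d$ under the $\mu$-integral to get $\tfrac{\alpha}{2}\|\xi\|_{L^2(\mu)}^2$ (the factor $\tfrac12$ is pure slack there). You correctly observe that this literal reading is in tension with the surrounding text, which identifies the quantity with the first term of $H_\mu(\xi)$: in Proposition \ref{asymptotic_equality_prop} the Lagrange point $\theta'$ depends on $x$, so the first term of $H_\mu(\xi)$ is \emph{not} of the form $\mathbb{E}_\mu[\xi^\top M_\mu(\theta'')\xi]$ for a single $x$-independent $\theta''$, and the hypothesis on $M_\mu$ cannot be applied pointwise without further argument. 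Your integral-remainder rewrite
\[
\xi(\theta)^\top \nabla_\theta^2 h(\theta',x)\,\xi(\theta) = 2\int_0^1 (1-t)\,\xi(\theta)^\top \nabla_\theta^2 h(\theta+t\xi(\theta),x)\,\xi(\theta)\,dt
\]
is exactly the right device to decouple the evaluation point from $x$; after Fubini (justified by the uniform bound on $\nabla_\theta^2 h$) the hypothesis applies at every $\theta+t\xi(\theta)\in{\rm supp}^\epsilon(\mu)$, and you even obtain the sharper constant $\alpha$ rather than $\alpha/2$. So the underlying geometric idea is the same as the paper's, but your version closes a gap the paper's one-liner glosses over.
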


This proposition provides the validity of the assumption in the following sufficient local optimality theorem on $L^\infty(\mu)$ rather than $L^2(\mu)$ or $\mathcal{P}_2$, and this theorem can be shown by using Proposition \ref{asymptotic_equality_prop}.

\begin{theorem} [Sufficient local optimality condition] \label{sufficient_optimality_condition_thm}
Suppose Assumption \ref{smooth_bound_assumption} holds and $l$ is convex.
For $\mu_* \in \mathcal{P}_2$, let us assume that the condition (\ref{opt_necessary_condition}) holds and assume the existence of $\alpha >0, \gamma>0$ satisfying $H_{\mu_*}(\xi) \geq \alpha \|\xi\|_{L^2(\mu_*)}^2/2$ for $\|\xi\|_{L^\infty(\mu_*)}^2\leq \gamma$.
Then, $\xi=0$ is a local minimum of $\mathcal{L}_S((id+\xi)_\sharp \mu_*)$ on $L^\infty(\mu_*)$.
\end{theorem}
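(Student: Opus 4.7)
The plan is to combine the three ingredients directly: the Taylor-type expansion of Proposition \ref{asymptotic_equality_prop}, the vanishing of the first-order term guaranteed by (\ref{opt_necessary_condition}), and the uniform lower bound on $H_{\mu_*}$ assumed in the hypotheses. Since $\mu_*$ is a probability measure, we have the inclusion $L^\infty(\mu_*) \subset L^2(\mu_*)$ together with the norm comparison $\|\xi\|_{L^2(\mu_*)} \leq \|\xi\|_{L^\infty(\mu_*)}$, which will let us freely pass between the two norms and apply the expansion that was stated for $\xi \in L^2(\mu_*)$.

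First I would fix $\xi \in L^\infty(\mu_*)$ with $\|\xi\|_{L^\infty(\mu_*)}^2 \leq \gamma$ and invoke Proposition \ref{asymptotic_equality_prop} to write
\begin{equation*}
\mathcal{L}_S((id+\xi)_\sharp \mu_*) = \mathcal{L}_S(\mu_*) + \mathbb{E}_{\mu_*}\bigl[\mathbb{E}_S[s_{\mu_*}(\theta,x,y)]^\top \xi(\theta)\bigr] + \tfrac{1}{2} H_{\mu_*}(\xi) + o(\|\xi\|_{L^2(\mu_*)}^2).
\end{equation*}
By Cauchy--Schwarz in $L^2(\mu_*)$, the linear term is bounded in absolute value by $\|\mathbb{E}_S[s_{\mu_*}(\cdot,x,y)]\|_{L^2(\mu_*)}\,\|\xi\|_{L^2(\mu_*)}$, which vanishes by hypothesis (\ref{opt_necessary_condition}). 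The quadratic term is at least $\tfrac{\alpha}{2}\|\xi\|_{L^2(\mu_*)}^2$ by the assumed positivity of $H_{\mu_*}$ on the $L^\infty$-ball of radius $\sqrt{\gamma}$.

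Next I would absorb the remainder. By the definition of the little-$o$ symbol in the statement of Proposition \ref{asymptotic_equality_prop}, there exists $\delta > 0$ such that for every $\xi$ with $0 < \|\xi\|_{L^2(\mu_*)} \leq \delta$ the error satisfies $|o(\|\xi\|_{L^2(\mu_*)}^2)| \leq \tfrac{\alpha}{8}\|\xi\|_{L^2(\mu_*)}^2$. Setting $r \overset{\mathrm{def}}{=} \min\{\sqrt{\gamma}, \delta\}$ and restricting to $\|\xi\|_{L^\infty(\mu_*)} \leq r$, the norm comparison $\|\xi\|_{L^2(\mu_*)} \leq \|\xi\|_{L^\infty(\mu_*)} \leq r$ puts us simultaneously in the regime where both the Hessian bound and the remainder estimate apply. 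Combining the three pieces yields
\begin{equation*}
\mathcal{L}_S((id+\xi)_\sharp \mu_*) \geq \mathcal{L}_S(\mu_*) + \tfrac{\alpha}{4}\|\xi\|_{L^2(\mu_*)}^2 - \tfrac{\alpha}{8}\|\xi\|_{L^2(\mu_*)}^2 \geq \mathcal{L}_S(\mu_*),
\end{equation*}
which is exactly the local minimum property of $\xi = 0$ with respect to the $L^\infty(\mu_*)$-topology.

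I do not anticipate a serious technical obstacle; the argument is the infinite-dimensional analogue of the standard second-order sufficient condition in Euclidean space and all the heavy lifting has already been done in Proposition \ref{asymptotic_equality_prop}. The only subtlety is bookkeeping: the Hessian lower bound is stated in terms of the $L^\infty$-radius $\sqrt{\gamma}$ while the remainder is controlled in terms of the $L^2$-radius, and one must choose a single $L^\infty$-neighbourhood small enough for both estimates to hold simultaneously. The convexity of $l$ is not invoked explicitly here (it is used upstream in Proposition \ref{positivity_prop} to make the hypothesis on $H_{\mu_*}$ plausible), which is consistent with the role assigned to this assumption in the statement.
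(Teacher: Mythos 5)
Your proposal is correct and follows essentially the same route as the paper's proof: apply Proposition \ref{asymptotic_equality_prop}, kill the first-order term via H\"older/Cauchy--Schwarz together with condition (\ref{opt_necessary_condition}), and absorb the $o(\|\xi\|_{L^2(\mu_*)}^2)$ remainder into the coercive quadratic term on a sufficiently small $L^\infty(\mu_*)$-ball. The only cosmetic difference is that the paper picks $\epsilon$ so that $\tfrac{1}{4}H_{\mu_*}(\xi)$ dominates the remainder directly, whereas you make the $\tfrac{\alpha}{8}\|\xi\|_{L^2(\mu_*)}^2$ threshold explicit; your bookkeeping of the norm comparison $\|\xi\|_{L^2(\mu_*)}\leq\|\xi\|_{L^\infty(\mu_*)}$ is also the right way to reconcile the two radii.
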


\subsection{Interior Optimality Property}
When the loss function $l$ is convex, the optimization problem (\ref{abst_prob}) is also convex with respect to $\mu \in \mathcal{P}$ in terms of affine geometry, and hence the following holds: 
for a signed Borel measure $\forall \tau$ such that $\int d\tau(\theta)=0$, 
\begin{equation*}
  \mathcal{L}_S(\mu) + \int \nabla_\mu \mathcal{L}_S(\mu)(\theta) d\tau (\theta) \leq \mathcal{L}_S(\mu + \tau),
\end{equation*}
where $\nabla_\mu \mathcal{L}_S(\mu)$ is Fr\'{e}chet derivative with respect to $\mu$: $\nabla_\mu \mathcal{L}_S(\mu)=\mathbb{E}_S[ -l'(-yh_\mu(x)) yh(\cdot, x )]$.
Thus, the equation, $\int \nabla_\mu \mathcal{L}_S(\mu)(\theta) d\tau (\theta)=0$ (for $\forall \tau$ s.t. $\int d\tau(\theta) = 0$), is the global optimality condition.
In general, this condition and the local optimality condition (\ref{opt_necessary_condition}) are different.
Indeed, when we use a Dirac measure as the initial probability measure, obtained measures by our method are also Dirac and there may exist some local minima as finite-dimensional optimization problems but the global optimality condition is not satisfied.
However, we can express the interior optimality property of local optimum by using the global optimality condition.

\begin{theorem} \label{characterization_optimality_condition_thm}
  Suppose that $h$ is a $\mathcal{C}^1$-function with respect to $\theta$ and the loss function $l$ is a $\mathcal{C}^1$-convex function.
  Let $\mu_* \in \mathcal{P}$ be a probability measure having a continuous density function.
  If ${\rm supp}(\mu_*)$ is a compact $\mathcal{C}^\infty$-manifold with boundary and $\mu_*$ satisfies the local optimality condition (\ref{opt_necessary_condition}),
  then there is neither measure $\mu$ having a continuous density such that ${\rm supp}(\mu) \subset { \rm supp(\mu_*) }$ and $\mathcal{L}_S(\mu) < \mathcal{L}_S(\mu_*)$
  nor $\mu$ not having a continuous density such that ${\rm supp}(\mu)$ is contained in the interior of $\rm supp(\mu_*)$ and $\mathcal{L}_S(\mu) < \mathcal{L}_S(\mu_*)$.
\end{theorem}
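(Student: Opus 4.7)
The strategy is to combine the convexity inequality displayed just before the theorem with the pointwise implication of the local optimality condition on the Fr\'echet derivative
\[
g(\theta) := \nabla_\mu \mathcal{L}_S(\mu_*)(\theta) = \mathbb{E}_S[-l'(-yh_{\mu_*}(x))\,y\,h(\theta,x)].
\]
Since $\nabla_\theta g(\theta) = \mathbb{E}_S[s_{\mu_*}(\theta,x,y)]$, condition (\ref{opt_necessary_condition}) reads $\|\nabla_\theta g\|_{L^2(\mu_*)} = 0$. The plan is to promote this to ``$g$ constant on $M := {\rm supp}(\mu_*)$'', after which the convexity inequality closes the argument.

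First, I would upgrade the $\mu_*$-a.e.\ vanishing of $\nabla_\theta g$ to pointwise vanishing on all of $M$: because $h$ is $\mathcal{C}^1$ in $\theta$, $\nabla_\theta g$ is continuous; because $\mu_*$ has a continuous density $q$, the open set $\{q>0\}$ is dense in $M$, so any $\mu_*$-full-measure set is dense in $M$, and continuity forces $\nabla_\theta g \equiv 0$ on $M$. Next, using that $M$ is a compact $\mathcal{C}^\infty$-manifold with boundary, $M$ equals the closure of its open interior $M^\circ \subset \mathbb{R}^d$, which I take to be connected (standard manifold convention); hence $g$ is locally constant on $M^\circ$, equal to a single value $c$ there, and by continuity on all of $M$. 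Finally, for any probability measure $\mu$ with ${\rm supp}(\mu) \subset M$, the convexity inequality with $\tau := \mu - \mu_*$ (satisfying $\int d\tau = 0$) gives
\[
\mathcal{L}_S(\mu) \;\geq\; \mathcal{L}_S(\mu_*) + \int g\, d(\mu-\mu_*) \;=\; \mathcal{L}_S(\mu_*) + c\,(1-1) \;=\; \mathcal{L}_S(\mu_*),
\]
simultaneously ruling out the two scenarios of the statement---a continuous-density $\mu$ supported in $M$, and an arbitrary $\mu$ supported in $M^\circ \subset M$.

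The hard part will be securing that $g$ is constant on \emph{all} of $M$. For a connected $M$ the step above is routine, but a fully general treatment must rule out $g$ taking different constants on different connected components of $M^\circ$, and must also justify passing the vanishing of $\nabla_\theta g$ from $M^\circ$ up to $\partial M$ through regions where the density may vanish. This is presumably where the PDE theory advertised in the introduction enters: because $g$ is a finite linear combination of the $\theta$-smooth functions $h(\cdot,x_j)$---typically real-analytic in the examples of interest (linear classifiers, smooth neural networks)---unique-continuation or maximum-principle-type reasoning can propagate $\nabla_\theta g = 0$ and pin down the value of $g$, while the continuity of $q$ and the manifold-with-boundary regularity of $M$ control the behaviour up to $\partial M$. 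Once this constancy is in hand, the remaining steps are bookkeeping.
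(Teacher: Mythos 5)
Your argument is correct in substance but follows a genuinely different route from the paper, and it is worth spelling out the difference. The paper never attempts to show that $g=\nabla_\mu\mathcal{L}_S(\mu_*)$ is constant on the support. Instead it argues by contradiction: given a better measure $\mu'$ supported in $\Omega={\rm supp}(\mu_*)$, it mollifies both measures, solves the Neumann problem $\Delta\psi_\epsilon=q'_\epsilon-q_{*\epsilon}$ with $\partial\psi_\epsilon/\partial n=0$ on $\Omega_\epsilon$, and uses Green's formula to rewrite $\int g_\epsilon\,d(\mu'_\epsilon-\mu_{*\epsilon})$ as $-\int \mathbb{E}_S[s_{\mu_{*\epsilon}}]^\top\nabla\psi_\epsilon\,d\theta$; a Lax--Milgram/Poincar\'e--Wirtinger bound on $\|\psi_\epsilon\|$ then shows this pairing is controlled by $\|\mathbb{E}_S[s_{\mu_*}]\|$, so convexity forces $\mathbb{E}_S[s_{\mu_*}]\not\equiv 0$. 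In other words, the PDE machinery is used to pair the \emph{gradient} of $g$ directly against a potential for $\mu'-\mu_*$, not (as you conjecture) for unique continuation or analyticity --- which would in any case be unavailable under the mere $\mathcal{C}^1$ hypothesis. Your route is more elementary: it needs no mollification, no Neumann problem, and no limiting argument, and where it works it proves a strictly stronger conclusion (any $\mu$ supported in ${\rm supp}(\mu_*)$ is no better, with no continuity or interior restriction). Your upgrade of the $L^2(\mu_*)$ condition (\ref{opt_necessary_condition}) to pointwise vanishing of $\nabla_\theta g$ on all of $M=\overline{\{q_*>0\}}$ is sound and needs no extra care at $\partial M$: continuity alone carries the vanishing from the dense open set $\{q_*>0\}$ to its closure.

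The one real gap is the one you flag: if $M^\circ$ is disconnected, $g$ may take different constant values on different components, and your final convexity step fails (shifting mass toward a component with smaller $g$ makes the first-order term strictly negative). ``Connected'' is not part of the standard definition of a manifold with boundary, so this is a genuine restriction of your proof relative to the stated theorem. It is only fair to note, however, that the paper's own proof silently requires the same thing: solvability of the Neumann problem with the compatibility condition $\int(q'_\epsilon-q_{*\epsilon})=0$, and the Poincar\'e--Wirtinger inequality on the space of global-mean-zero functions $V(\Omega^i_\epsilon)$, both need the integral/mean-zero condition componentwise, i.e.\ connectedness of $\Omega_\epsilon$. So your approach is not weaker than the paper's on this point; neither argument covers a disconnected support, and under connectedness yours is complete and simpler.
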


This theorem states that the optimization proceeds as long as there exists a better probability measure in support of current measure $\mu$ satisfying the same assumptions on $\mu_*$ in Theorem \ref{characterization_optimality_condition_thm} except for condition (\ref{opt_necessary_condition}).

So far, we have discussed the local optimality conditions and we have confirmed that $\|\mathbb{E}_S[s_{\mu}(\theta,x,y)]\|_{L^2(\mu)}$ for $\mu \in \mathcal{P}_2$ can be regarded as the local optimality quantity for the problem due to its continuity and the above theorems.
Therefore, the goal of an optimization method for the problems is to output a sequence $\{\mu_t\}_{t=1}^\infty\subset \mathcal{P}_2$ such that $\|\mathbb{E}_S[s_{\mu_t}(\theta,x,y)]\|_{L^2(\mu_t)}$ converges to zero.

The following proposition ensures the existence of an accumulation point of such a sequence satisfying the local optimality condition under the tightness assumption on generated probability measures.

\begin{proposition} \label{weak_convergence_prop}
  We assume a sequence $\{\mu_t\}_{t=1}^{\infty}$ in $\mathcal{P}$ is tight, that is, for arbitrary $\epsilon>0$, there exists a compact subset $\mathcal{A}\subset \Theta$ such that
  $\mu_t(\mathcal{A}) \geq 1-\epsilon$ for $\forall t \in \mathbb{N}$.
  Then, this sequence has a convergent subsequence with respect to $\|\cdot\|_{\mathcal{F}}$.
\end{proposition}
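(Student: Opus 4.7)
The plan is to combine Prokhorov's theorem with the fact that $\mathcal{F}=\mathcal{F}_C$ is uniformly bounded and equi-Lipschitz, so that integration against elements of $\mathcal{F}$ behaves, on compact subsets of $\Theta$, like a totally bounded family in $C(\mathcal{A})$ with the sup norm. In effect, the statement restates the fact, already noted in the excerpt, that $\|\cdot\|_{\mathcal{F}}$ metrizes the same topology as the Dudley/bounded-Lipschitz metric, which in turn metrizes weak convergence of probability measures on a separable metric space such as $\Theta=\mathbb{R}^d$.

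First, I would invoke Prokhorov's theorem on the Polish space $\Theta=\mathbb{R}^d$. Tightness of $\{\mu_t\}_{t=1}^{\infty}$ directly yields a subsequence $\{\mu_{t_k}\}$ and a limit $\mu_\infty\in\mathcal{P}$ such that $\mu_{t_k}(f)\to\mu_\infty(f)$ for every bounded continuous $f:\Theta\to\mathbb{R}$. Since every $f\in\mathcal{F}_C$ is bounded (by $C$) and Lipschitz (hence continuous), this gives pointwise convergence of the linear functionals $f\mapsto\mu_{t_k}(f)-\mu_\infty(f)$ over $\mathcal{F}_C$; the task is to upgrade this to uniform convergence in $f$.

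Next, I would use Arzel\`a--Ascoli to produce, for any prescribed $\epsilon>0$, a finite approximating net. Using tightness of the sequence together with Ulam's theorem (which makes $\mu_\infty$ itself tight), pick a compact $\mathcal{A}\subset\Theta$ with $\mu_{t_k}(\Theta\setminus\mathcal{A})<\epsilon/(8C)$ for all $k$ and $\mu_\infty(\Theta\setminus\mathcal{A})<\epsilon/(8C)$. The restrictions $\{f|_\mathcal{A}:f\in\mathcal{F}_C\}$ are uniformly bounded by $C$ and equi-Lipschitz with constant $C$, hence form a totally bounded subset of $(C(\mathcal{A}),\|\cdot\|_\infty)$. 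Choose $f_1,\ldots,f_m\in\mathcal{F}_C$ whose restrictions to $\mathcal{A}$ form an $\epsilon/4$-net, and then select $K$ so large that $|\mu_{t_k}(f_i)-\mu_\infty(f_i)|<\epsilon/4$ for all $k\geq K$ and $i=1,\ldots,m$.

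Finally, given any $f\in\mathcal{F}_C$, pick $f_i$ with $\sup_{\theta\in\mathcal{A}}|f(\theta)-f_i(\theta)|<\epsilon/4$, split each integral over $\mathcal{A}$ and $\Theta\setminus\mathcal{A}$, and bound $|f-f_i|\leq 2C$ on the complement, yielding
\begin{equation*}
|\mu_{t_k}(f)-\mu_\infty(f)|\leq \tfrac{\epsilon}{4}+2C\cdot\tfrac{\epsilon}{8C}+|\mu_{t_k}(f_i)-\mu_\infty(f_i)|+\tfrac{\epsilon}{4}+2C\cdot\tfrac{\epsilon}{8C}<\epsilon,
\end{equation*}
uniformly in $f$. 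Taking the supremum gives $\|\mu_{t_k}-\mu_\infty\|_{\mathcal{F}}<\epsilon$ for $k\geq K$, which is the desired convergence. The only genuine obstacle is the bookkeeping needed to ensure $\mu_\infty$ inherits enough tightness to control its tail on the \emph{same} compact set used for the sequence; this is handled either by Ulam's theorem or by observing that $\mu\mapsto\mu(\Theta\setminus\mathcal{A})$ is upper semicontinuous under weak convergence for closed complements, so tightness of the limit follows from that of the sequence.
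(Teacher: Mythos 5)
Your argument is correct and is essentially the route the paper relies on: it combines Prokhorov's theorem with the (cited) fact that $\|\cdot\|_{\mathcal{F}}$ induces the same topology as the Dudley/bounded-Lipschitz metric, and your Arzel\`a--Ascoli finite-net argument on a common compact set, with tail control from tightness and the portmanteau bound for the limit, is exactly the standard proof of that metrization fact made self-contained. The only blemish is bookkeeping: your five error terms sum to $5\epsilon/4$ rather than $\epsilon$, which is harmless since $\epsilon$ is arbitrary (or can be fixed by budgeting $\epsilon/5$ to each term).
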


Therefore, by taking a subsequence if necessary, we can obtain a sequence $\{\mu_t\}_{t=1}^{\infty}$ to converge to a local minimum $\mu_*\in \mathcal{P}$ by an appropriate method.
Note that this convergence implies the uniform convergence $\mu_t(h(\cdot,x))\rightarrow \mu_*(h(\cdot,x))$ for all $x\in \mathcal{X}$ under Assumption \ref{continuity_assumption}.

\section{Stochastic Particle Gradient Descent} \label{sec:spgd}
In this section, we introduce a stochastic optimization method for solving problem (\ref{abst_prob}) on $\mathcal{P}_2$ and present its convergence analysis.
We first present an overview of our method again.
Let $\mu_0 \in \mathcal{P}_2$ be an initial probability measure and suppose a current probability measure $\mu$ is obtained by pushing-forward $\mu_0$ by $\phi \in L^2(\mu_0)$.
Then, $\phi$ and $\mu$ are updated along $\xi \in L^2(\mu)$ as $\phi^+ \leftarrow (id+\xi)\circ \phi$ and $\mu^+ \leftarrow (id+\xi)_\sharp \mu$.
The resulting problem is how to obtain $\xi$ to locally minimize the objective function $\mathcal{L}_S((id+\xi)_\sharp \mu)$ on $L^2(\mu)$.
We can find that by Taylor's formula (\ref{asymptotic_equality}), this objective is Fr\'{e}chet differentiable with respect to $\xi \in L^2(\mu)$ and its differential is represented by
$\mathbb{E}_S[s_\mu(\cdot,x,y)]$ via the $L^2(\mu)$-inner product.
Thus, this differential performs in function space with this inner-product like the usual gradient in a finite-dimensional space and it is expected to reduce the objective value.
We next provide a more detailed description below.

Let us denote by $B_r(\mu)$ the $r$-neighborhood of the origin in $L^2(\mu)$; $B_r(\mu)\overset{\mathrm{def}}{=}\{\xi\in L^2(\mu) \mid \|\xi\|_{L^2(\mu)}\ < r\}$.
Since the higher-order term $H_\mu(\xi) + o(\|\xi\|_{L^2(\mu)}^2)$ in (\ref{asymptotic_equality}) is $O(\|\xi\|_{L^2(\mu)}^2)$, it can be locally upper bounded by the quadratic form at $\xi = 0$.
Thus, we can assume that there exists a positive-definite smooth $(d,d)$-matrix $A_\mu(\theta)$ such that for all $\xi \in B_r(\mu)$,
\begin{equation}
\frac{1}{2}H_\mu(\xi) + o(\|\xi\|_{L^2(\mu)}^2) \leq \frac{1}{2}\mathbb{E}_\mu\|\xi\|_{A_\mu(\theta)}^2. \label{higher_order_bound}
\end{equation}
Note that we can choose scalar matrix $cI_d$ as $A_\mu$ with $c>0$ that does not depend on $\mu$ under Assumption \ref{smooth_bound_assumption}.
By Proposition \ref{asymptotic_equality_prop}, the following quadratic function with respect to $\xi$ is a local upper bound on $\mathcal{L}_S((id+\xi)_\sharp \mu)$ at $\mu \in \mathcal{P}_2$:
\begin{equation}
\mathcal{L}_S(\mu)+\mathbb{E}_\mu[\mathbb{E}_S[s_\mu(\theta,x,y)]^\top\xi(\theta) ] + \frac{1}{2}\mathbb{E}_\mu\|\xi\|_{A_\mu(\theta)}^2. \label{deterministic_subprob}
\end{equation}
Thus, minimizing (\ref{deterministic_subprob}) as a surrogate function, we can obtain $\xi \in L^2(\mu)$ to reduce the objective $\mathcal{L}_S$ and
we can make an update $\mu^+ \leftarrow (id+\xi)_\sharp \mu$ and an update $\phi^+ \leftarrow (id+\xi)\circ \phi$ for the corresponding transport map from the initial probability measure.
Practically, such a solution $\xi$ is obtained by minimizing the following stochastic approximation to (\ref{deterministic_subprob}): for randomly chosen $(x',y')$ from $S$, 
\begin{equation}
 \min_{\xi \in B_r(\mu)} \mathbb{E}_\mu[s_\mu(\theta,x',y')^\top \xi(\theta)] + \frac{1}{2}\mathbb{E}_\mu\|\xi\|_{A_\mu(\theta)}^2. \label{subprob}
\end{equation}
Note that under Assumption \ref{smooth_bound_assumption} and uniformly boundedness assumption on $A_\mu$, a positive constant $\eta_0$ exists such that for $\forall \mu \in \mathcal{P}$
and $\forall (x',y')\in \mathcal{X}\times \mathcal{Y}$,
\begin{equation*}
  \eta_0 \|A_\mu (\theta)^{-1}s_\mu(\theta,x',y') \|_{L^2(\mu)} < r. 
\end{equation*}
Thus, we can choose the step $-\eta A_\mu (\theta)^{-1}s_\mu(\theta,x',y')\ (0<\eta < \eta_0)$ as an approximated solution to (\ref{subprob})
and Lemma \ref{descent_lemma} shows the reduction of the objective function by using this step.
Moreover, if $\eta_0$ is sufficiently small, we can find this step produces a diffeomorphism (see Appendix), which preserves good properties of the initial probability
measure such as the manifold structure, and it may lead to good exploration of the proposed method by an intuition from Theorem \ref{characterization_optimality_condition_thm}. 

\begin{lemma} [Descent Lemma] \label{descent_lemma}
  Suppose Assumption \ref{smooth_bound_assumption} holds and suppose $O \prec \lambda_AI_d \preceq A_\mu(\theta) \preceq \Lambda_AI_d$.
  We set $\zeta(\theta) = -A_\mu(\theta)^{-1} s_\mu(\theta,x',y')$
Then, there exist $G>0$ and $\eta_0>0$, depending on the smoothness, the boundedness of $l,\ h,\ A_\mu$, and the radius $r$, such that for $0 < \forall \eta < \eta_0$, $\eta \zeta$ is contained in $B_r(\mu)$ and it leads to a reduction:
\begin{equation*}
\mathbb{E}_S[\mathcal{L}_S((id-\eta \zeta)_{\sharp} \mu)] 
\leq \mathcal{L}_S(\mu) - \frac{\eta}{\Lambda_A}\|\mathbb{E}_S[s_\mu(\theta,x,y)]\|_{L^2(\mu)}^2 + \eta^2G. 
\end{equation*}
\end{lemma}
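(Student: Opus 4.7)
The plan is to plug the chosen stochastic step into Taylor's formula (Proposition~\ref{asymptotic_equality_prop}) and then take expectation over the minibatch draw $(x',y')\sim S$. Writing the perturbation as $\xi(\theta) := -\eta A_\mu(\theta)^{-1} s_\mu(\theta, x',y')$, Proposition~\ref{asymptotic_equality_prop} gives
\begin{equation*}
\mathcal{L}_S((id+\xi)_\sharp \mu) = \mathcal{L}_S(\mu) + \mathbb{E}_\mu\!\left[\mathbb{E}_S[s_\mu(\cdot,x,y)]^\top \xi(\theta)\right] + \tfrac{1}{2} H_\mu(\xi) + o(\|\xi\|_{L^2(\mu)}^2),
\end{equation*}
and whenever $\xi\in B_r(\mu)$ the last two terms are controlled via (\ref{higher_order_bound}) by $\tfrac{1}{2}\mathbb{E}_\mu\|\xi\|_{A_\mu(\theta)}^2 \le \tfrac{\Lambda_A}{2}\|\xi\|_{L^2(\mu)}^2$.

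Next I would take $\mathbb{E}_S$ on both sides. Since $\mu$ and $A_\mu$ do not depend on the draw, Fubini yields $\mathbb{E}_S[\xi] = -\eta A_\mu^{-1}\mathbb{E}_S[s_\mu]$, so the averaged linear term equals
\begin{equation*}
-\eta\,\mathbb{E}_\mu\!\bigl[\mathbb{E}_S[s_\mu]^\top A_\mu^{-1}\mathbb{E}_S[s_\mu]\bigr] \le -\tfrac{\eta}{\Lambda_A}\|\mathbb{E}_S[s_\mu]\|_{L^2(\mu)}^2,
\end{equation*}
using $A_\mu^{-1}\succeq \Lambda_A^{-1}I_d$. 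For the quadratic term, Assumption~\ref{smooth_bound_assumption} supplies uniform bounds $|l'|, \|\nabla_\theta h\|\le M$, whence $\|s_\mu\|\le M^2$; combined with $A_\mu^{-1}\preceq \lambda_A^{-1}I_d$ this delivers $\|\xi(\theta)\|\le \eta\lambda_A^{-1}M^2 =: \eta C_\zeta$ uniformly in $\theta,\,x',\,y',\,\mu$. Consequently $\tfrac{\Lambda_A}{2}\mathbb{E}_S\|\xi\|_{L^2(\mu)}^2 \le \tfrac{\eta^2\Lambda_A C_\zeta^2}{2} =: \eta^2 G$, and combining with the linear-term bound produces the claimed inequality.

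Admissibility $\xi\in B_r(\mu)$ is ensured by the same uniform bound $\|\xi\|_{L^2(\mu)}\le \eta C_\zeta$, so any $\eta_0 \le r/C_\zeta$ suffices. The main delicacy is the $o(\|\xi\|_{L^2(\mu)}^2)$ remainder in Taylor's formula: it must in fact be a genuine $O(\eta^2)$ with a constant uniform in $\mu$ and $(x',y')$. This is exactly what the remark after (\ref{higher_order_bound}) provides, since under Assumption~\ref{smooth_bound_assumption} one can take $A_\mu = cI_d$ with $c$ independent of $\mu$; the derivation of (\ref{higher_order_bound}) itself rests on a second-order mean-value expansion whose remainder is absorbed by the uniform bounds on $\nabla_\theta^2 h$ and $l''$, giving a uniform quadratic upper bound. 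This uniform control of the quadratic remainder is the truly load-bearing ingredient of the argument.
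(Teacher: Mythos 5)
Your proposal is correct and follows essentially the same route as the paper's proof: substitute the step $\eta\zeta$ into Proposition~\ref{asymptotic_equality_prop}, absorb the quadratic and remainder terms via (\ref{higher_order_bound}) using the uniform boundedness of $s_\mu$ and the eigenvalue bounds on $A_\mu$, and take $\mathbb{E}_S$ over the sampled pair so that the linear term becomes $-\eta\,\mathbb{E}_\mu[\mathbb{E}_S[s_\mu]^\top A_\mu^{-1}\mathbb{E}_S[s_\mu]] \leq -\tfrac{\eta}{\Lambda_A}\|\mathbb{E}_S[s_\mu]\|_{L^2(\mu)}^2$. The only difference is cosmetic: the paper keeps the quadratic term as $\tfrac{\eta^2}{2}\mathbb{E}_\mu[\|s_\mu(\theta,x',y')\|_{A_\mu(\theta)^{-1}}^2]$ and bounds it by $G$ directly, whereas you pass through the slightly looser uniform bound $\eta^2\Lambda_A C_\zeta^2/2$, which changes only the constant.
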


This lemma means that for sufficiently small learning rates $\eta>0$, the iterate $\mu_+ \leftarrow (id+\eta\xi)_{\sharp} \mu$ strictly reduces the objective function $\mathcal{L}_S$ in the expectation when $\mu$ does not satisfy the local optimality condition (\ref{opt_necessary_condition}).
Here, we propose an algorithm called SPGD in Algorithm \ref{SPGD} to solve problem (\ref{abst_prob}) based on the above analyses. 
Note that Algorithm \ref{SPGD} is the ideal one, and hence a practical variant will be described later.

\begin{algorithm}[h]
  \caption{SPGD}
  \label{SPGD}
\begin{algorithmic}
   \STATE {\bfseries Input:} dataset $S$, initial distribution $\mu_0$, 
   the maximum number of iterations $T$, learning rates $\{\eta_k\}_{k=0}^{T-1}$\\
   $\phi_0\leftarrow id$
   \FOR{$k=0$ {\bfseries to} $T-1$}
   \STATE Randomly choose a sample $(x',y')$ from $S$ \\
   \STATE $\phi_{k+1}\leftarrow (id-\eta_ks_{\mu_k}(\cdot,x',y'))\circ \phi_k$
   \STATE $\mu_{k+1} \leftarrow (id - \eta_ks_{\mu_k}(\cdot,x',y'))_{\sharp} \mu_k$
   \ENDFOR
   \STATE Return $\phi_T$
\end{algorithmic}
\end{algorithm}

 Depending on the choice of $A_\mu$, we can derive several specific algorithms as in the traditional (stochastic) optimization literature, e.g. steepest descent method, natural gradient method, and quasi-Newton method.
Thus, Algorithm \ref{SPGD} can be regarded as the simplest form, where $A_\mu=cI_d\ (c>0)$, of SPGD.
We can obtain the convergence theorem for Algorithm \ref{SPGD} by the inequality of Lemma \ref{descent_lemma}.

\begin{theorem} [Convergence Theorem] \label{convergence_thm}
Let us make the same assumptions as in Lemma \ref{descent_lemma}.
For $\epsilon>0$, let $\eta>0$ be a constant satisfying $\eta\leq \min\{\eta_0,\frac{\epsilon}{2G}\}$.
Then an $\epsilon$-accurate solution in the expectation, i.e., $\mathbb{E}[\|\mathbb{E}_S[s_{\mu_k}(\theta,x,y)]\|_{L^2(\mu_k)}^2]\leq \epsilon$, where the outer expectation is taken with respect to the history of sample data used in learning, can be obtained at the most \\
\begin{equation}
 \frac{2(\mathcal{L}_S(\mu_0)-\inf_{\mu \in \mathcal{Q}}{\mathcal{L}}_S(\mu))}{\epsilon \eta} \label{conv_rate}
\end{equation}
iterations of Algorithm \ref{SPGD} with learning rate $\eta_k=\eta$.
\end{theorem}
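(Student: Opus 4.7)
The plan is to mimic the standard convergence analysis for stochastic gradient descent on nonconvex finite-dimensional problems, using Lemma \ref{descent_lemma} as the per-iteration descent inequality and then telescoping. The Descent Lemma already supplies the measure-theoretic analog of the classical one-step inequality, so the remaining work amounts to bookkeeping, a careful treatment of conditional expectations, and an appropriate choice of $\eta$.

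First I would fix $\eta_k=\eta\leq \min\{\eta_0,\epsilon/(2G)\}$ and let $\mathcal{F}_k$ denote the $\sigma$-algebra generated by the samples drawn at iterations $0,\ldots,k-1$. Because $\mu_k$ is $\mathcal{F}_k$-measurable while the sample $(x_k',y_k')$ drawn at iteration $k$ is independent of $\mathcal{F}_k$, applying Lemma \ref{descent_lemma} pointwise to the realization of $\mu_k$ and then conditioning on $\mathcal{F}_k$ yields
$$\mathbb{E}\!\left[\mathcal{L}_S(\mu_{k+1})\mid \mathcal{F}_k\right]\leq \mathcal{L}_S(\mu_k)-\frac{\eta}{\Lambda_A}\|\mathbb{E}_S[s_{\mu_k}(\theta,x,y)]\|_{L^2(\mu_k)}^2+\eta^2 G.$$
Taking total expectation, telescoping from $k=0$ to $T-1$, and using the trivial bound $\mathbb{E}[\mathcal{L}_S(\mu_T)]\geq \inf_{\mu\in\mathcal{Q}}\mathcal{L}_S(\mu)$ gives
$$\frac{\eta}{\Lambda_A}\sum_{k=0}^{T-1}\mathbb{E}\bigl[\|\mathbb{E}_S[s_{\mu_k}(\theta,x,y)]\|_{L^2(\mu_k)}^2\bigr]\leq \mathcal{L}_S(\mu_0)-\inf_{\mu\in\mathcal{Q}}\mathcal{L}_S(\mu)+T\eta^2 G.$$

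Second, dividing by $T\eta/\Lambda_A$ bounds the minimum over $k\in\{0,\ldots,T-1\}$ of $\mathbb{E}[\|\mathbb{E}_S[s_{\mu_k}(\theta,x,y)]\|_{L^2(\mu_k)}^2]$ by $\Lambda_A(\mathcal{L}_S(\mu_0)-\inf_{\mu\in\mathcal{Q}}\mathcal{L}_S(\mu))/(\eta T)+\Lambda_A\eta G$. The choice $\eta\leq \epsilon/(2G)$ makes the second summand at most $\epsilon/2$ (the factor $\Lambda_A$ can be absorbed into $G$, in line with the way $G$ is introduced in Lemma \ref{descent_lemma}), and forcing the first summand also to be at most $\epsilon/2$ requires $T\geq 2(\mathcal{L}_S(\mu_0)-\inf_{\mu\in\mathcal{Q}}\mathcal{L}_S(\mu))/(\epsilon\eta)$, which is the announced complexity bound. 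Hence some iterate $\mu_k$ along the trajectory satisfies $\mathbb{E}[\|\mathbb{E}_S[s_{\mu_k}(\theta,x,y)]\|_{L^2(\mu_k)}^2]\leq \epsilon$ within at most this many steps.

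The main obstacle is not computational but conceptual: one has to verify that the standard filtration argument goes through in the space of probability measures, where the iterates $\mu_k$ are random elements of $\mathcal{P}_2$ rather than of a finite-dimensional Euclidean space. Concretely, one should check that $\mu_k$ is $\mathcal{F}_k$-measurable, that the scalar quantities $\mathcal{L}_S(\mu_k)$ and $\|\mathbb{E}_S[s_{\mu_k}(\theta,x,y)]\|_{L^2(\mu_k)}^2$ are themselves measurable (which follows from the continuity assertions of Proposition \ref{continuity_prop_on_P}), and that the pointwise-in-$\mu$ statement of Lemma \ref{descent_lemma} lifts to the conditional inequality above by integrating over the independent fresh sample. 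Once these measurability points are settled, the remainder is a routine recapitulation of the classical nonconvex SGD complexity argument.
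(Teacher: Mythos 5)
Your proposal is correct and follows essentially the same route as the paper's own proof: apply the Descent Lemma at each iteration, take the total expectation over the sample history, telescope, lower-bound $\mathcal{L}_S(\mu_T)$ by the infimum, and use $\eta \leq \epsilon/(2G)$ to split the resulting bound into two halves of $\epsilon$. The only additions are your explicit filtration/measurability remarks and the bookkeeping of $\Lambda_A$ (which the paper silently drops, consistent with Algorithm \ref{SPGD} taking $A_\mu$ to be a scalar matrix); neither changes the argument.
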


Running Algorithm \ref{SPGD}, we obtain the transport map $\phi_T$.
If we choose a tractable distribution as the initial distribution $\mu_0$, we can obtain i.i.d. particles $\{\theta^0_i\}_{i=1}^M$ from $\mu_0$.
By the construction of $\phi_k$, we find that $\{\phi_k(\theta^0_i)\}_{i=1}^M$ are regarded as i.i.d. particles from the distribution $\mu_k=\phi_{k\sharp}\mu_0$.
However, note that Algorithm \ref{SPGD} is impractical because we cannot compute exact value of $h_{\mu_k}(x')$ required to get $s_{\mu_k}(\cdot,x',y')$.
Thus, we estimate it using sample average $h_{k} \sim \frac{1}{M}\sum_{i=1}^M h_{\theta^k_i}$. where $\theta^k_i = \phi_k(\theta^0_i)$.
The overall procedure is summarized in Algorithm \ref{resnet_SPGD}.
Because of the form of $id + \eta_k l'(-y'h_k)y'\nabla_\theta h(\cdot,x')$, we notice that Algorithm \ref{resnet_SPGD} iteratively stacks residual-type layers \cite{he2016deep}
and so a residual network to output an infinite ensemble is built naturally.
We can also derive more practical variant Algorithm \ref{practical_SPGD} without resampling in Algorithm \ref{resnet_SPGD}, that is, using the same seeds $\{\theta^0_i\}_{i=1}^M$ over all iterations.

\begin{algorithm}[h]
  \caption{SPGD - building residual network -}
  \label{resnet_SPGD}
\begin{algorithmic}
   \STATE {\bfseries Input:} dataset $S$, initial distribution $\mu_0$, 
   the maximum number of iterations $T$, the number of particles $M$, learning rates $\{\eta_k\}_{k=0}^{T-1}$\\
   $\phi_0\leftarrow id$   
   \FOR{$k=0$ {\bfseries to} $T-1$}
   \STATE Independently draw particles $\{\theta^0_i\}_{i=1}^M$ from $\mu_0$\\
   \STATE $\{\theta^k_i\}_{i=1}^M \leftarrow \{\phi_k(\theta^0_i)\}_{i=1}^M$
   \STATE Randomly choose a sample $(x',y')$ from $S$ \\
   \STATE $h_k \leftarrow \frac{1}{M}\sum_{i=1}^M h_{\theta^k_i}(x')$
   \STATE $\phi_{k+1}\leftarrow ( id + \eta_k l'(-y'h_k)y'\nabla_\theta h(\cdot,x'))\circ \phi_k$
   \ENDFOR
   \STATE Return $\{\theta^{T}_i\}_{i=1}^M$
\end{algorithmic}
\end{algorithm}

\begin{algorithm}[h]
  \caption{SPGD - practical variant -}
  \label{practical_SPGD}
\begin{algorithmic}
   \STATE {\bfseries Input:} dataset $S$, initial distribution $\mu_0$, 
   the maximum number of iterations $T$, the number of particles $M$, learning rates $\{\eta_k\}_{k=0}^{T-1}$\\
   Independently draw particles $\{\theta^0_i\}_{i=1}^M$ from $\mu_0$\\
   \FOR{$k=0$ {\bfseries to} $T-1$}
   \STATE Randomly choose a sample $(x',y')$ from $S$ \\
   \STATE $h_k \leftarrow \frac{1}{M}\sum_{i=1}^M h_{\theta^k_i}(x')$
   \STATE $\{\theta^{k+1}_i\}_{i=1}^M\leftarrow \{\theta^{k}_i + \eta_k l'(-y'h_k)y'\nabla_\theta h(\theta^k_i,x')\}_{i=1}^M$
   \ENDFOR
   \STATE Return $\{\theta^{T}_i\}_{i=1}^M$
\end{algorithmic}
\end{algorithm}

We next describe a perspective of SPGD as an extension of vanilla SGD; moreover, the other perspectives are provided in the Appendix, which certainly leads to a deeper understanding of the method. 

\subsection{Extension of Vanilla Stochastic Gradient Descent}
If we adopt the sum of Dirac measures as the initial distribution $\mu_0$, then Algorithm \ref{SPGD} and \ref{practical_SPGD} become the same method by initializing particles $\{\theta_i^0\}_{i=1}^M$ to be the support of $\mu_0$.
Moreover, we can see that the step of Algorithm \ref{practical_SPGD} is the same as that of vanilla SGD for the nonweighted voting problem:
$\min_{\{\theta_i\} \in \Theta^M}\mathbb{E}_S[ l(-\frac{1}{M}\sum_{i=1}^M yh(\theta_i,x))]$.
Specifically, we can say that the vanilla SGD for learning a base classifier is the method to optimize a Dirac measure and is none other than Algorithm \ref{SPGD} with a Dirac measure $\mu_0$.
In other words, Algorithm \ref{SPGD} is an extension of the vanilla SGD to the method for optimizing a general probability measure.

From this viewpoint of Algorithm \ref{practical_SPGD}, we can introduce some existing techniques and extensions to our method.
For instance, we can use accelerating techniques such as Nesterov's momentum method \cite{nes2004}, which is also used in our experiments to accelerate the convergence.

Moreover, we can extend Algorithm \ref{practical_SPGD} to the multiclass classification problems. 
Let us consider the $c$-classes classification problem.
We denote the binary vector for the class by ${\bf y}$, that is, for the $i$-th class, only the $i$-th element $y_i$ is one and the other elements are zeros.
The output of the classifier ${\bf h}$ is extended to the range $[0,1]^c$, which represent the confidences of each class such as the softmax function.
Then, the SGD for the problem $\min_{\{\theta_i\} \in \Theta^M}\mathbb{E}_S[ l(-\frac{1}{M}\sum_{i=1}^M {\bf y}^\top{\bf h}(\theta_i,x))]$ is the extension of Algorithm \ref{practical_SPGD}
to the multiclass problem.

\section{Numerical Experiments} \label{sec:experiments}

\renewcommand{\arraystretch}{1}
\begin{table*}[t]
\caption{Test classification accuracy on binary and multiclass classification.}
\label{comparison_table}
\vskip 0.15in
\begin{center}
\begin{scriptsize}
\begin{sc}
\begin{tabular}{c|cc|cc|cc}
Dataset & 
LogReg &
SPGD(logreg) &
MLP(exp) &
SPGD(exp) &
MLP(log) &
SPGD(log)\\
\hline
\multirow{2}{*}{breastcancer} & 
{\bf 0.966} &
0.965 &
0.965 &
{\bf 0.971} &
0.968 &
{\bf 0.971} \\
 &
{\bf (0.0187)} &
(0.0177) &
(0.0210) &
{\bf (0.0174)} &
(0.0110) &
{\bf (0.0174)} \\
\multirow{2}{*}{diabetes} \rule[0mm]{0mm}{3mm}&
0.755 &
{\bf 0.761} &
{\bf 0.764} &
0.756 &
0.738 &
{\bf 0.757} \\
 & 
(0.0464) &
{\bf (0.0435)} &
{\bf (0.0366)} &
(0.0447) &
(0.0524) &
{\bf (0.0400)} \\
\multirow{2}{*}{german} \rule[0mm]{0mm}{3mm}& 
{\bf 0.769} &
0.763 &
0.738 &
{\bf 0.769} &
0.724 &
{\bf 0.775} \\
 &
{\bf (0.0406)} &
(0.0390) &
(0.0178) &
{\bf (0.0381)} &
(0.0393) &
{\bf (0.0356)} \\
\multirow{2}{*}{ionosphereo} \rule[0mm]{0mm}{3mm}& 
{\bf 0.892} &
0.886 &
0.914 &
{\bf 0.937} &
0.923 &
{\bf 0.937} \\
&
{\bf (0.0400)} &
(0.0383) &
(0.0512) &
{\bf (0.0274)} &
(0.0339) &
{\bf (0.0274)} \\
\multirow{2}{*}{glass} \rule[0mm]{0mm}{3mm}& 
0.566 &
{\bf 0.622} &
0.477 &
{\bf 0.616} &
0.619 &
{\bf 0.659} \\
&
(0.0655) &
{\bf (0.0692)} &
(0.1127) &
{\bf (0.0595)} &
(0.1144) &
{\bf (0.1033)} \\
\multirow{2}{*}{segment} \rule[0mm]{0mm}{3mm}& 
{\bf 0.934} &
0.913 &
0.717 &
{\bf 0.953} &
0.961 &
{\bf 0.970} \\
&
{\bf (0.0148)} &
(0.0143) &
(0.1104) &
{\bf (0.0100)} &
(0.0082) &
{\bf (0.0089)} \\
\multirow{2}{*}{vehicle} \rule[0mm]{0mm}{3mm}& 
0.771 &
{\bf 0.780} &
0.759 &
{\bf 0.838} &
0.794 &
{\bf 0.829} \\
&
(0.0422) &
(0.0248) &
(0.0372) &
{\bf (0.0451)} &
(0.0525) &
{\bf (0.0370)} \\
\multirow{2}{*}{wine} \rule[0mm]{0mm}{3mm}& 
0.968 &
{\bf 0.978} &
0.949 &
{\bf 0.974} &
0.963 &
{\bf 0.984} \\
&
(0.0321) &
(0.0377) &
(0.0519) &
{\bf (0.0414)} &
(0.0552) &
{\bf (0.0246)} \\
\multirow{2}{*}{covertype} \rule[0mm]{0mm}{3mm}& 
0.720 &
{\bf 0.738} &
{\bf 0.772} &
0.763 &
0.772 &
{\bf 0.806} \\
&
(0.0071) &
{\bf (0.0056)} &
{\bf (0.0269) } &
(0.0255) &
(0.0271) &
{\bf (0.0247)} \\
\hline
\end{tabular}
\end{sc}
\end{scriptsize}
\end{center}
\vskip -0.1in
\end{table*}
\renewcommand{\arraystretch}{1}

\subsection{Synthetic Data}
We first present how our method behaves by using toy data: two-dimensional double circle data.
We ran Algorithm \ref{practical_SPGD} for Example \ref{ex_linear_classifier} of a binary linear model with exponential loss; we solved the following:
\[ \min_{\theta_i \in \mathbb{R}^2, b_i\in \mathbb{R}}\mathbb{E}_S\left[ \exp\left(-\frac{1}{M}\sum_{i=1}^M Y\tanh(\theta_i^\top X + b_i)\right)\right]. \]
The number of particles was set to be $20$.
The behavior of the method is shown in Figure \ref{toy_experiments}.
The upper-left part shows weights $\theta$ of the initial particles and the upper-right part shows $\theta$ of the final particles.
The bottom row represents predicted labels by the initial particles (left) and the final particles (right).
It can be seen that the data are well classified by the locations of the particles using this method.

\subsection{Real Data}
Next, we present the results of experiments on binary and multiclass classification tasks in a real dataset.
We ran Algorithm \ref{practical_SPGD} with momentum for logistic regression and three-layer perceptrons 
where we set the number of hidden units to be the same as the input dimension and we used sigmoid activation for the output of the hidden layer.
For the last layer of multilayer perceptrons, we used softmax output with the exponential loss or the logarithmic loss function.
The number of particles was set to be $10$ or $30$.
Each element of initial particles was sampled from the normal distribution $\mu_0$ with zero mean and standard deviation of $0.01$ to bias parameters and of $1$ to weight parameters.
To evaluate the performance of the SPGD, we also ran logistic regression and multilayer perceptron, whose structure is the same as used for SPGD.

We used the UCI datasets: breast-cancer, diabetes, german, and ionosphere for binary classification; glass, segment, vehicle, wine, and covertype for multiclass classification.
We used the following experimental procedure as in \cite{cms2014}; we first divided each dataset into 10 folds. 
For each run $i\in \{1,\ldots,10\}$, we used fold $i$ for validation, used fold $i+1\ ({\rm mod}\ 10)$ for testing, and used the other folds for training.
We performed each method on the training dataset with several hyper-parameter settings and we chose the best parameter on the validation dataset.
Finally, we evaluated it on the testing dataset.

The mean classification accuracy and the standard deviation are presented in Table \ref{comparison_table}.
Notations SPGD(LOGREG), SPGD(EXP), and SPGD(LOG) stand for SPGD for logistic regression, multilayer perceptrons with exponential loss, and with logarithmic loss function, respectively.
Although SPGD did not improve logistic regression on some datasets, it showed overall improvements over base models on the other settings.
Thus, we confirmed the effectiveness of our method. 

\begin{figure}[H]
\begin{center}
\begin{tabular}{cccc}
  \resizebox{40mm}{!}{ \includegraphics[angle=0]{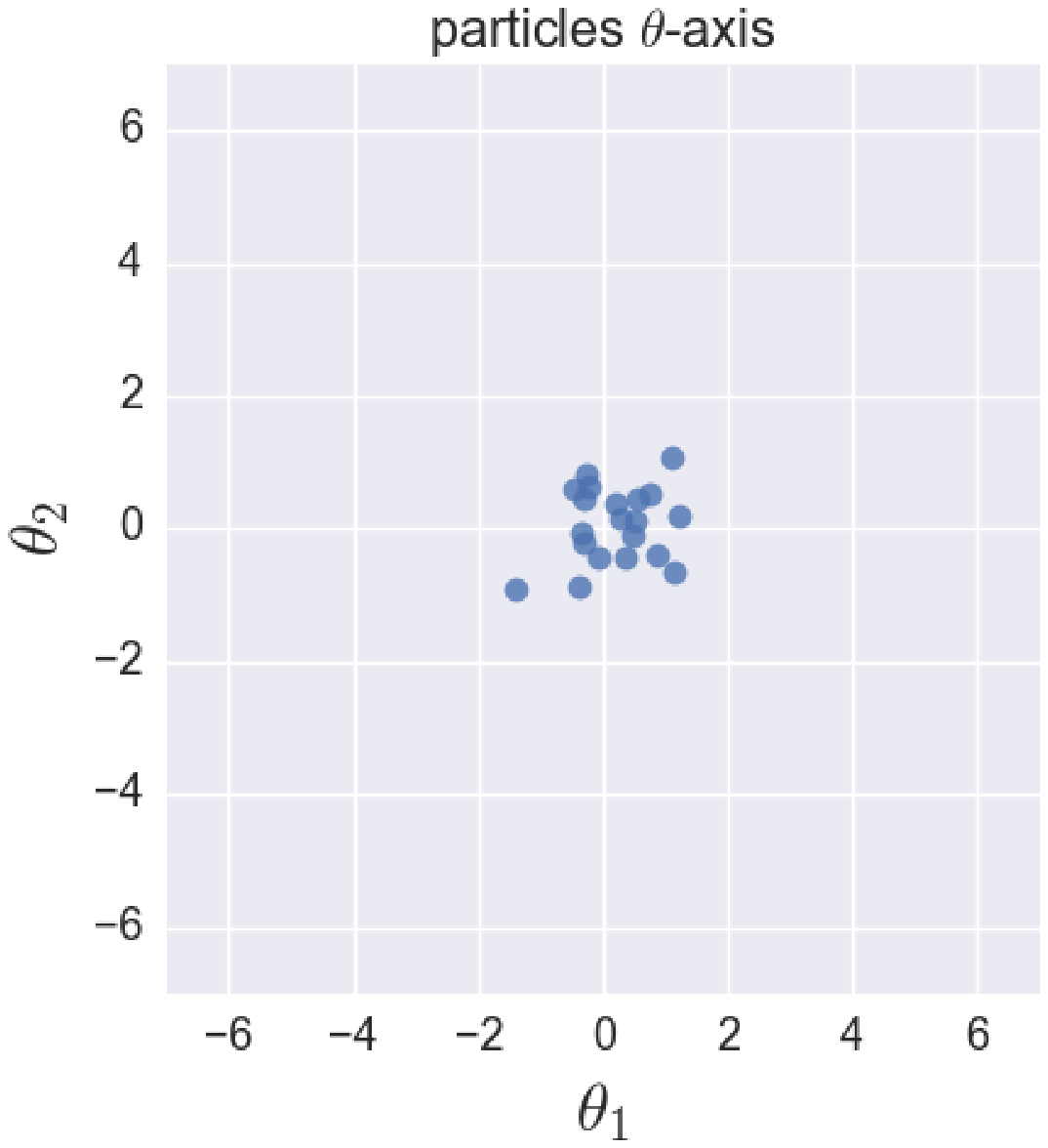} } 
\hspace{-8mm} \resizebox{40mm}{!}{ \includegraphics[angle=0]{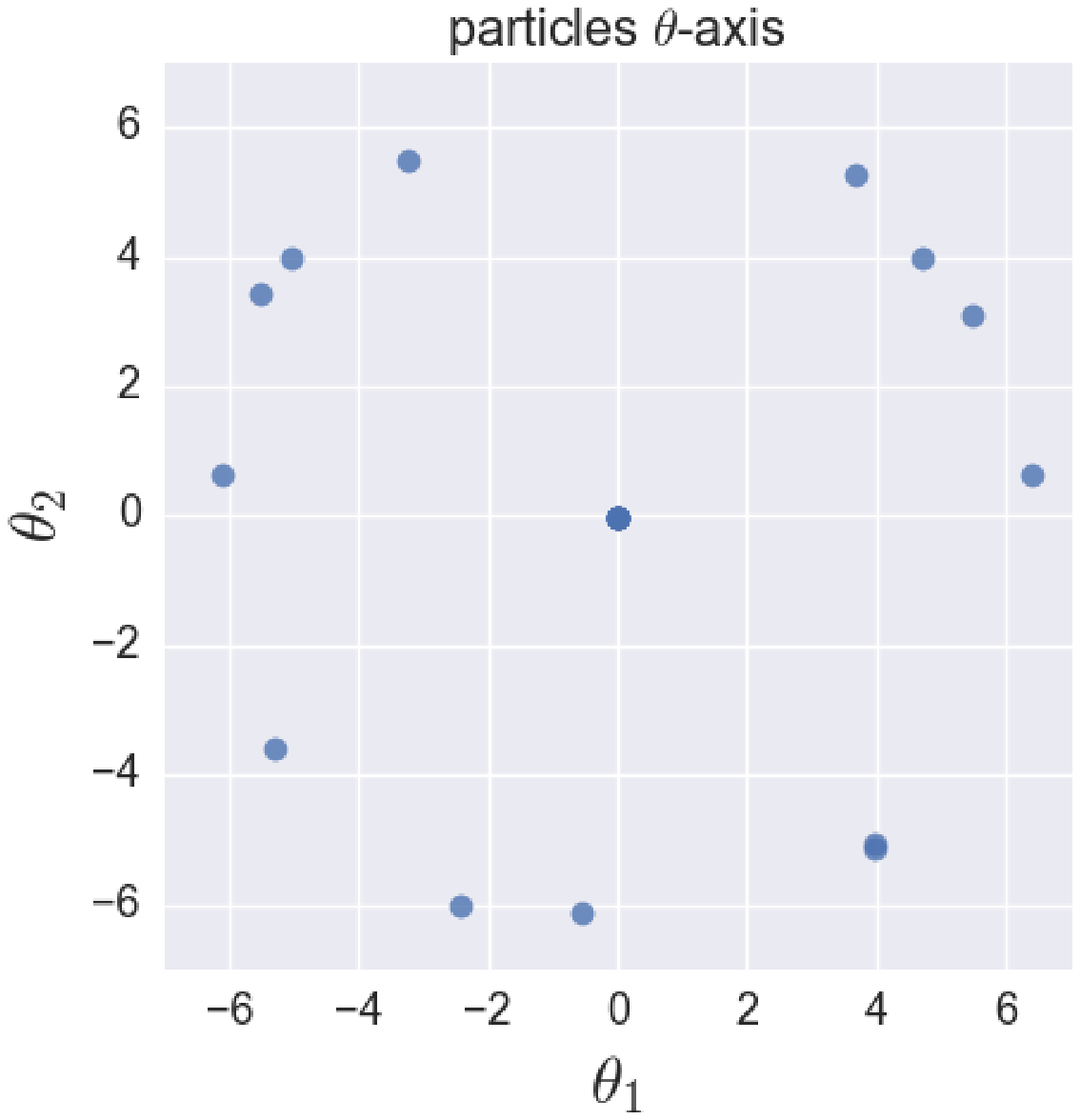} } 
  \resizebox{40mm}{!}{ \includegraphics[angle=0]{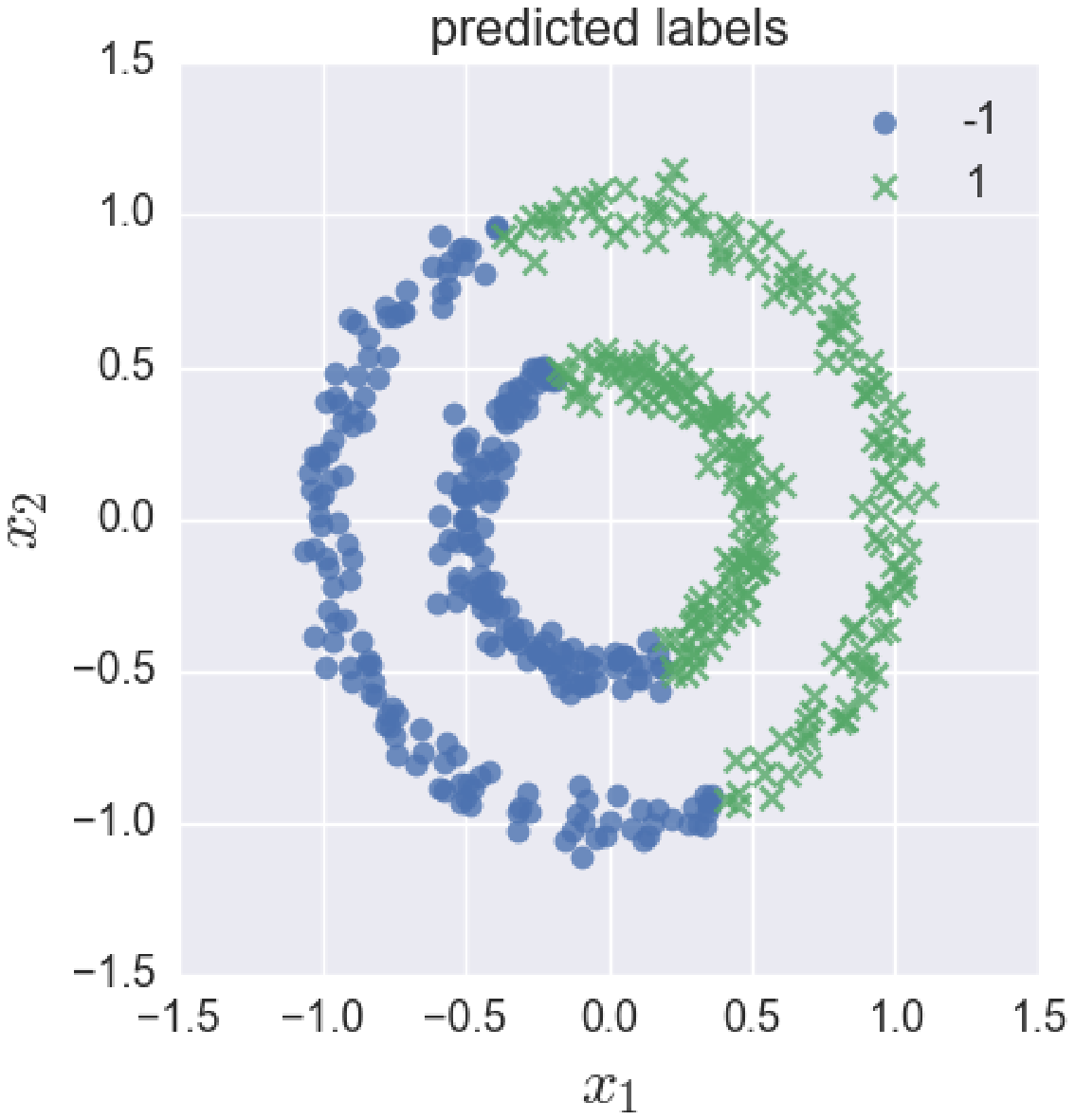} }
\hspace{-8mm} \resizebox{40mm}{!}{ \includegraphics[angle=0]{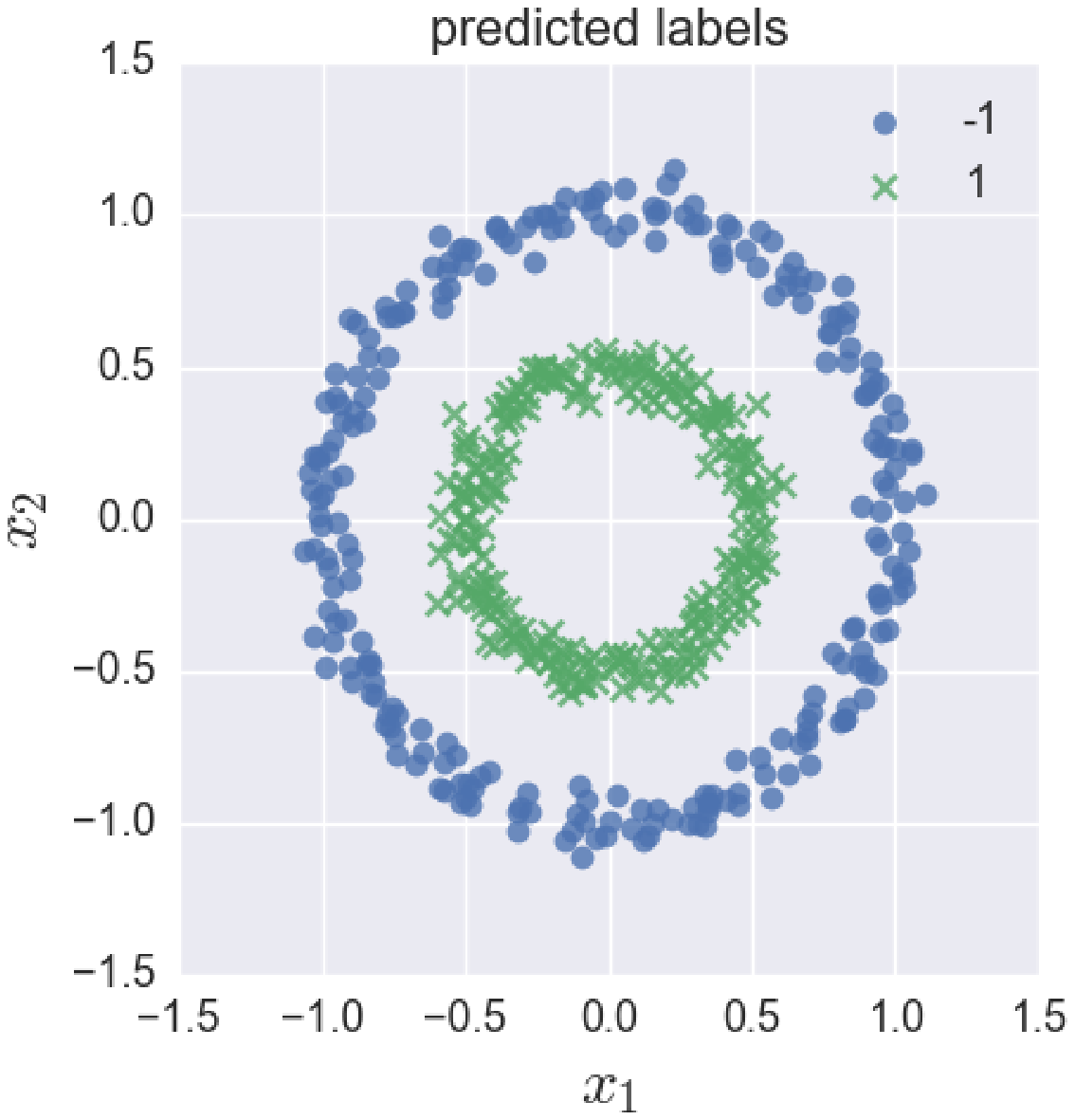} } \\
\end{tabular}
\caption{ Toy example of the SPGD method (upper-left: weights of the initial particles; upper-right: weights of the final particles;
  bottom-left: predicted labels by the initial particles; bottom-right: predicted labels by the final particles). }
\label{toy_experiments}
\end{center}
\end{figure}

\section{Conclusion}
We introduced the infinite ensemble and derived the generalization error bound by extending the well-known results for the convex combination.
We also explored an optimality condition for the empirical risk minimization problem for the infinite ensemble learning.
To solve this problem, we proposed a stochastic optimization method with the convergence analysis.

\bibliographystyle{plain}

\clearpage
\renewcommand{\thesection}{\Alph{section}}
\renewcommand{\thesubsection}{\Alph{section}. \arabic{subsection}}
\renewcommand{\thetheorem}{\Alph{theorem}}
\renewcommand{\thelemma}{\Alph{lemma}}
\renewcommand{\theproposition}{\Alph{proposition}}
\renewcommand{\thedefinition}{\Alph{definition}}
\renewcommand{\thecorollary}{\Alph{corollary}}
\renewcommand{\theassumption}{\Alph{assumption}}
\renewcommand{\theexample}{\Alph{example}}

\setcounter{section}{0}
\setcounter{subsection}{0}
\setcounter{theorem}{0}
\setcounter{lemma}{0}
\setcounter{proposition}{0}
\setcounter{definition}{0}
\setcounter{corollary}{0}
\setcounter{assumption}{0}
\setcounter{example}{0}

\part*{\Large{Appendix}}

\section{Generalization Bounds}
In this section, we give the proof of generalization bounds of majority vote classifiers.
\begin{proof}[ Proof of Theorem \ref{generalization_thm_1}. ]
For a function class $\mathcal{J}$ and a dataset $S=\{x_i\}_{i=1}^N$, we denote empirical Rademacher complexity by $\hat{\mathcal{R}}_S(\mathcal{J})$ and denote Rademacher complexity by $\mathcal{R}_N(\mathcal{J})$; let $\sigma =(\sigma_i)_{i=1}^N$ be i.i.d random variables taking $-1$ or $1$ with equal probability and let $S$ be distributed according to $D^N$,
\[ \hat{\mathcal{R}}_S(\mathcal{J}) = \mathbb{E}_\sigma\left[ \sup_{f\in\mathcal{J}}\frac{1}{N}\sum_{i=1}^N\sigma_if(x_i)\right],\ \  \mathcal{R}_N(\mathcal{J})=\mathbb{E}_{D^N}[\hat{\mathcal{R}}_S(\mathcal{J})]. \]

The following lemma indicates averaging operator by probability measure dose not increase Rademacher complexity, which is a counterpart of it for convex combinations \cite{kp2002}.
\begin{lemma} 
The following inequality is valid for an arbitrary data set $S$.
\[ \hat{\mathcal{R}}_S(\mathcal{G}) = \hat{\mathcal{R}}_S(\mathcal{H}). \]
\end{lemma}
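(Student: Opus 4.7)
The plan is to prove the two inequalities $\hat{\mathcal{R}}_S(\mathcal{G}) \geq \hat{\mathcal{R}}_S(\mathcal{H})$ and $\hat{\mathcal{R}}_S(\mathcal{G}) \leq \hat{\mathcal{R}}_S(\mathcal{H})$ separately.

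For the direction $\hat{\mathcal{R}}_S(\mathcal{G}) \geq \hat{\mathcal{R}}_S(\mathcal{H})$, I would observe that for any $\theta_0 \in \Theta$ the Dirac measure $\delta_{\theta_0}$ lies in $\mathcal{P}$ and satisfies $h_{\delta_{\theta_0}}(x) = h_{\theta_0}(x)$. Hence $\mathcal{H} \subset \mathcal{G}$ as sets of functions on $\mathcal{X}$, and the inequality is immediate from the monotonicity of the Rademacher complexity with respect to inclusion of function classes.

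For the reverse direction $\hat{\mathcal{R}}_S(\mathcal{G}) \leq \hat{\mathcal{R}}_S(\mathcal{H})$, the key step is to interchange the finite empirical sum with the expectation under $\mu$. For each realization of the Rademacher signs $\sigma = (\sigma_i)_{i=1}^N$ and every $\mu \in \mathcal{P}$, Fubini's theorem (which applies since $h$ is bounded in $[-1,1]$ and Borel measurable in $\theta$) gives
\begin{equation*}
  \frac{1}{N}\sum_{i=1}^N \sigma_i h_\mu(x_i) = \frac{1}{N}\sum_{i=1}^N \sigma_i \mathbb{E}_\mu[h(\theta,x_i)] = \mathbb{E}_\mu\!\left[\frac{1}{N}\sum_{i=1}^N \sigma_i h(\theta,x_i)\right].
\end{equation*}
Bounding the integrand by its pointwise supremum in $\theta$ and pulling the supremum out of the expectation yields
\begin{equation*}
  \frac{1}{N}\sum_{i=1}^N \sigma_i h_\mu(x_i) \leq \sup_{\theta \in \Theta} \frac{1}{N}\sum_{i=1}^N \sigma_i h(\theta,x_i) = \sup_{h \in \mathcal{H}} \frac{1}{N}\sum_{i=1}^N \sigma_i h(x_i),
\end{equation*}
and this bound is uniform in $\mu$. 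Taking the supremum over $\mu \in \mathcal{P}$ on the left and then $\mathbb{E}_\sigma$ on both sides gives $\hat{\mathcal{R}}_S(\mathcal{G}) \leq \hat{\mathcal{R}}_S(\mathcal{H})$.

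Combining the two inequalities gives equality. The only subtlety I anticipate is a measurability check for the supremum inside the expectation $\mathbb{E}_\mu$; this is handled because $h(\cdot, x_i)$ is Borel measurable in $\theta$ (so the partial sum is as well), and since we only need an upper bound rather than an attained value, replacing the supremum by any upper envelope preserves the inequality even if the supremum itself is not measurable. No other nontrivial obstacle arises — the proof is essentially a linearity-plus-Fubini argument and mirrors the corresponding statement for convex hulls in \cite{kp2002}, with the finite convex combination replaced by an arbitrary probability measure.
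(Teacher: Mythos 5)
Your proof is correct and follows essentially the same route as the paper: the paper's argument is a single chain of equalities that exchanges the finite sum with $\mathbb{E}_\mu$ by linearity and then identifies $\sup_{\mu\in\mathcal{P}}\mathbb{E}_\mu[\cdot]$ with $\sup_{\theta\in\Theta}[\cdot]$, which is exactly your two inequalities (expectation bounded by the pointwise supremum, and Dirac measures for the reverse direction) written in compressed form. Your version simply makes the Dirac-measure step and the measurability caveat explicit, which the paper leaves implicit.
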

\begin{proof}
The proof is concluded by
\begin{align*}
\hat{\mathcal{R}}_S(\mathcal{G}) &= \mathbb{E}_\sigma \left[ \sup_{f\in \mathcal{G}} \frac{1}{N}\sum_{i=1}^{N}\sigma_if(x_i)\right] \\
&= \mathbb{E}_\sigma \left[ \sup_{\mu \in \mathcal{P} }\frac{1}{N} \sum_{i=1}^{N} \sigma_i \mathbb{E}_\mu[h(\theta,x_i)] \right]  \\
&= \mathbb{E}_\sigma \left[ \sup_{\mu \in \mathcal{P} }\frac{1}{N} \mathbb{E}_\mu \left[ \sum_{i=1}^{N} \sigma_ih(\theta,x_i) \right] \right]  \\
&= \mathbb{E}_\sigma \left[ \sup_{\theta \in \Theta} \frac{1}{N} \sum_{i=1}^{N}\sigma_ih(\theta,x_i)\right] = \hat{\mathcal{R}}_S(\mathcal{H}).  
\end{align*}
\end{proof}

Using this lemma, we can obtain the following theorem in the same manner as in \cite{kp2002}.

\begin{theorem}
  Let $N\in \mathbb{N}$ be the number of data.
  Then, for $\forall \delta \in (0,1)$ with probability at least $1-\delta$ over the random choice of $S$ for $\forall h_\mu \in \mathcal{G}$ we have 
\[ \mathbb{P}_D[yh_\mu(x)\leq 0] \leq \inf_{\rho \in (0,1]} \left( \mathbb{P}_S[yh_\mu(x)\leq \rho] 
+ \frac{8}{\rho}\mathcal{R}_N(\mathcal{H}) + \sqrt{\frac{\log \log_2(2/\rho)}{N}} \right) +\sqrt{\frac{\log(2/\delta)}{2N}}. \]
\end{theorem}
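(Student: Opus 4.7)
The plan is to follow the classical Rademacher-complexity route to margin bounds, adapted in two places: (i) the reduction from $\mathcal{G}$ to $\mathcal{H}$ is carried out by the averaging lemma immediately preceding the theorem, and (ii) uniformity over the margin parameter $\rho$ is obtained by a standard dyadic union bound.

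First, for each fixed $\rho \in (0,1]$ I would introduce a Lipschitz margin surrogate $\phi_\rho : \mathbb{R} \to [0,1]$ satisfying $\mathbf{1}[u \le 0] \le \phi_\rho(u) \le \mathbf{1}[u \le \rho]$ and with Lipschitz constant $1/\rho$ (e.g.\ the ramp loss). This immediately gives the sandwich
\[
\mathbb{P}_D[Y h_\mu(X) \le 0] \;\le\; \mathbb{E}_D[\phi_\rho(Y h_\mu(X))], \qquad \mathbb{E}_S[\phi_\rho(Y h_\mu(X))] \;\le\; \mathbb{P}_S[Y h_\mu(X) \le \rho].
\]
Then I would apply McDiarmid's inequality together with the standard symmetrization argument, uniformly over $h_\mu \in \mathcal{G}$, to obtain that with probability $\ge 1-\delta_\rho$, for all $\mu \in \mathcal{P}$,
\[
\mathbb{E}_D[\phi_\rho(Y h_\mu(X))] \;\le\; \mathbb{E}_S[\phi_\rho(Y h_\mu(X))] + 2\,\mathcal{R}_N(\phi_\rho \circ \mathcal{G}) + \sqrt{\tfrac{\log(1/\delta_\rho)}{2N}}.
\]
Next, Talagrand's contraction lemma for Rademacher averages (applied to the $(1/\rho)$-Lipschitz function $\phi_\rho$, after centering so that $\phi_\rho(0)=$ const) gives $\mathcal{R}_N(\phi_\rho \circ \mathcal{G}) \le (2/\rho)\,\mathcal{R}_N(\mathcal{G})$, and the preceding lemma yields $\mathcal{R}_N(\mathcal{G}) = \mathcal{R}_N(\mathcal{H})$. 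Combining these produces, for each fixed $\rho$, the bound
\[
\mathbb{P}_D[Y h_\mu(X) \le 0] \;\le\; \mathbb{P}_S[Y h_\mu(X) \le \rho] + \tfrac{4}{\rho}\mathcal{R}_N(\mathcal{H}) + \sqrt{\tfrac{\log(1/\delta_\rho)}{2N}}.
\]

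To upgrade this to a bound uniform in $\rho \in (0,1]$, I would use a dyadic peeling: restrict attention to $\rho_k = 2^{-k}$ for $k = 0,1,2,\ldots$ and allocate confidence $\delta_k = \delta/(2(k+1)^2)$ (or a similar summable sequence), so that a union bound over $k$ only costs $\delta/2$ in total; the remaining $\delta/2$ is absorbed into the $\sqrt{\log(2/\delta)/(2N)}$ term. For an arbitrary $\rho \in (0,1]$, choose $k$ so that $\rho \in (\rho_{k+1}, \rho_k]$; then $\mathbb{P}_S[Y h_\mu(X) \le \rho_{k+1}] \le \mathbb{P}_S[Y h_\mu(X) \le \rho]$ and $1/\rho_{k+1} \le 2/\rho$, so the $4/\rho_{k+1}$ factor becomes $8/\rho$, matching the stated constant. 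The index $k$ satisfies $k+1 \le \log_2(2/\rho)$, which is exactly what is needed to express the penalty for union-bounding as $\sqrt{\log\log_2(2/\rho)/N}$ (up to constants absorbed into the $\sqrt{\log(2/\delta)/(2N)}$ term), yielding the theorem after taking the infimum over $\rho$.

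The main obstacle is bookkeeping in the dyadic peeling: extracting the $\log \log_2(2/\rho)$ dependence cleanly (rather than $\log(1/\rho)$) requires using a quadratic weighting $1/(k+1)^2$ in the union bound and carefully tracking how constants from the contraction lemma and the $\rho_{k+1} \le \rho$ substitution combine into the factor $8/\rho$. Beyond this, the steps are essentially the standard Koltchinskii--Panchenko argument; the lemma on $\mathcal{R}_N(\mathcal{G}) = \mathcal{R}_N(\mathcal{H})$ is what lets the whole framework transfer verbatim from convex combinations to integration against probability measures, since the supremum over $\mu \in \mathcal{P}$ in the Rademacher average is attained on Dirac measures.
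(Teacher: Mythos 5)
Your proposal is correct and follows essentially the same route as the paper: the paper proves the key lemma $\hat{\mathcal{R}}_S(\mathcal{G}) = \hat{\mathcal{R}}_S(\mathcal{H})$ (via the observation that the supremum of the linear functional $\mu \mapsto \mathbb{E}_\mu[\sum_i \sigma_i h(\theta,x_i)]$ over $\mathcal{P}$ is attained on Dirac measures) and then invokes the Koltchinskii--Panchenko argument verbatim, which is exactly the surrogate-loss/symmetrization/contraction/dyadic-peeling chain you spell out. Your reconstruction of the deferred details, including the $8/\rho$ constant and the $\sqrt{\log\log_2(2/\rho)/N}$ term from the weighted union bound over $\rho_k = 2^{-k}$, is sound.
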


Combining this proposition and the following Rademacher processing variant of Dudley integral bound \cite{dud1999} under Assumption \ref{covering_numbers_assumption}, we can finish the proof of Theorem \ref{generalization_thm_1}.

\begin{theorem} [\cite{dud1999}]
There is a constant $K>0$ such that for every data $S=\{x_i\}_{i=1}^N$,
\[ \frac{1}{\sqrt{N}} \mathbb{E}_\sigma \sup_{f\in\mathcal{J}}\left[\sum_{i=1}^N \sigma_if(x_i)\right] \leq K \int_0^\delta \log^{1/2}N_{m_N}(\mathcal{J}, \epsilon)d\epsilon, \]
where $\delta = \sup_{f\in \mathcal{J}}\sqrt{\frac{1}{N}\sum_{i=1}^Nf^2(x_i)}$ and $m_N$ is the empirical measure supported on the given sample $\{x_1,\ldots,x_N\}$.
\end{theorem}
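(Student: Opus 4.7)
The plan is to use the classical chaining argument for sub-Gaussian processes. Fix the data $S=\{x_1,\dots,x_N\}$ and view $X_f = \frac{1}{\sqrt{N}}\sum_{i=1}^N \sigma_i f(x_i)$ as a stochastic process indexed by $f\in\mathcal{J}$. By Hoeffding's lemma applied coordinatewise,
\[
\mathbb{E}_\sigma \exp\bigl(\lambda (X_f-X_g)\bigr) \leq \exp\!\Bigl(\tfrac{1}{2}\lambda^2 \|f-g\|_{L^2(m_N)}^2\Bigr),
\]
so $\{X_f\}$ is a zero-mean sub-Gaussian process with respect to the empirical pseudo-metric $d(f,g)=\|f-g\|_{L^2(m_N)}$. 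The only probabilistic input I need is the standard maximal inequality $\mathbb{E}_\sigma \max_{f\in F}(X_f - X_g) \leq c\sqrt{\log(1+|F|)}\cdot \sup_{f\in F} d(f,g)$ for finite $F$ and a fixed reference $g$.

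Next I would perform the chaining. Put $\epsilon_k = 2^{-k}\delta$ and, for each $k\geq 0$, let $\mathcal{N}_k$ be a minimal $\epsilon_k$-cover of $\mathcal{J}$ under $d$, with $|\mathcal{N}_k| \leq N_{m_N}(\mathcal{J},\epsilon_k)$; take $\mathcal{N}_0 = \{f_0\}$ for some fixed $f_0\in\mathcal{J}$, which is legitimate since $\delta$ is the $d$-radius of $\mathcal{J}$ around any of its points (up to absolute constants that will be absorbed into $K$). For each $f\in \mathcal{J}$, let $\pi_k(f)\in\mathcal{N}_k$ be a closest point, so that by the triangle inequality $d(\pi_k(f),\pi_{k-1}(f)) \leq \epsilon_k+\epsilon_{k-1} = 3\epsilon_k$. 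Then the telescoping identity
\[
X_f - X_{f_0} = \sum_{k\geq 1}\bigl(X_{\pi_k(f)} - X_{\pi_{k-1}(f)}\bigr)
\]
expresses the centered process as a sum of dyadic-scale increments.

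Taking the supremum over $f$ and applying the maximal inequality at each scale, and noting that at scale $k$ the family of increments consists of at most $|\mathcal{N}_k|\cdot|\mathcal{N}_{k-1}|\leq N_{m_N}(\mathcal{J},\epsilon_k)^2$ sub-Gaussian variables of parameter at most $3\epsilon_k$, I obtain
\[
\mathbb{E}_\sigma \sup_{f\in\mathcal{J}}(X_f - X_{f_0}) \leq c\sum_{k\geq 1} \epsilon_k \sqrt{\log N_{m_N}(\mathcal{J},\epsilon_k)}.
\]
Since the covering number is nonincreasing in $\epsilon$ and $\epsilon_{k-1}-\epsilon_k = \epsilon_k$, a standard Riemann comparison bounds the dyadic sum by a constant multiple of $\int_0^\delta \sqrt{\log N_{m_N}(\mathcal{J},\epsilon)}\,d\epsilon$. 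Collecting all absolute constants into $K$ gives the claimed inequality.

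The main obstacle is bookkeeping rather than new ideas: one must handle the possible non-separability of $\mathcal{J}$ by first restricting to a countable $d$-dense subset (covering numbers are unchanged) and ensure the telescoping series converges, which follows from sub-Gaussian concentration since $\sum_k \epsilon_k < \infty$. The probabilistic core is the single sub-Gaussian maximal inequality, applied once at each dyadic scale, and everything else is deterministic metric entropy bookkeeping.
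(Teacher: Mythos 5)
Your chaining argument is correct, and it is the standard proof of this statement; the paper itself offers no proof at all, simply importing the result as a known theorem from Dudley's \emph{Uniform Central Limit Theorems} \cite{dud1999} to be combined with the Rademacher-complexity margin bound. So the only comparison to make is that you supply the argument the paper delegates to the literature. Your sub-Gaussian increment bound is right: with the $1/\sqrt{N}$ normalization the natural metric of the process $X_f$ is exactly $\|f-g\|_{L^2(m_N)}$, the finite maximal inequality applied at each dyadic scale to the at most $N_{m_N}(\mathcal{J},\epsilon_k)^2$ increments of size $\leq 3\epsilon_k$ gives the dyadic entropy sum, and the Riemann comparison to the integral is standard. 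Two small bookkeeping points you wave at but should make explicit: (i) $\delta$ as defined is the radius of $\mathcal{J}$ around the \emph{zero} function, so the diameter is only bounded by $2\delta$; since the paper's covering numbers are external (centers need not lie in $\mathcal{J}$), a single ball centered at $0$ covers $\mathcal{J}$ for every $\epsilon>\delta$, hence $\log N_{m_N}(\mathcal{J},\epsilon)=0$ there and starting the chain at scale $2\delta$ costs nothing in the integral; (ii) the convergence $X_{\pi_k(f)}\to X_f$ along the chain is deterministic for fixed $\sigma$, since $|X_f-X_g|\leq \sqrt{N}\,\|f-g\|_{L^2(m_N)}$ by Cauchy--Schwarz, so no concentration is needed to justify the telescoping. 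With those details filled in, your proof is complete.
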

\end{proof}

\begin{proof}[ Proof of Theorem \ref{generalization_thm_2}. ]
We first give Proposition \ref{mv_and_conv_relation_prop} to prove Theorem \ref{generalization_thm_2}.
Let ${\rm conv}(\mathcal{H})$ denote the set of all convex combinations of base classifiers in $\mathcal{H}$.
Proposition \ref{mv_and_conv_relation_prop} gives the relation between covering numbers of the set of convex combinations and the set of infinite ensembles.

\begin{proposition} \label{mv_and_conv_relation_prop}
If the feature space $\mathcal{X}$ is compact, then $\forall \epsilon > 0, \forall r>1$ and the Borel probability measure $\forall m$ on $\mathcal{X}$, we have $N_m(r\epsilon, \mathcal{G}) \leq N_m(\epsilon,{\rm conv}(H))$.
\end{proposition}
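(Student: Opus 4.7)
The plan is to reduce the covering number inequality to the density of convex combinations in $\mathcal{G}$ with respect to the $L^2(m)$-norm, and then transfer an external cover of ${\rm conv}(\mathcal{H})$ to one of $\mathcal{G}$ via the triangle inequality. The key observation is that every infinite ensemble $h_\mu$ can be approximated arbitrarily well in $L^2(m)$ by a convex combination of base classifiers, obtained by sampling parameters from $\mu$.

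First I would establish the density claim: for any $\mu \in \mathcal{P}$ and any $\eta > 0$, there exists $f \in {\rm conv}(\mathcal{H})$ with $\|h_\mu - f\|_{L^2(m)} \leq \eta$. Let $\theta_1, \ldots, \theta_n$ be i.i.d.\ draws from $\mu$ and set $\bar{h}_n(x) = \frac{1}{n}\sum_{i=1}^n h(\theta_i, x)$, which lies in ${\rm conv}(\mathcal{H})$ by construction. Since $|h(\theta, x)| \leq 1$, each centered summand $h(\theta_i, x) - h_\mu(x)$ is bounded by $2$ pointwise in $x$, so its $\mu$-variance is at most $4$. By Fubini's theorem, justified by the joint Borel measurability of $h$ on $\Theta \times \mathcal{X}$ together with the uniform bound,
\[ \mathbb{E}\|\bar{h}_n - h_\mu\|_{L^2(m)}^2 = \int_{\mathcal{X}} \frac{1}{n}\mathrm{Var}_\mu(h(\theta, x))\, dm(x) \leq \frac{4}{n}. \]
For $n > 4/\eta^2$ this expectation is strictly less than $\eta^2$, so at least one realization of the sample yields $\|\bar{h}_n - h_\mu\|_{L^2(m)} \leq \eta$, giving the required element of ${\rm conv}(\mathcal{H})$.

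Next I combine this with the external cover. Given $\epsilon > 0$ and $r > 1$, set $\eta = (r-1)\epsilon > 0$ and let $\{f_1, \ldots, f_K\}$ be an external $\epsilon$-cover of ${\rm conv}(\mathcal{H})$ of minimal cardinality $K = N_m(\epsilon, {\rm conv}(\mathcal{H}))$. For any $h_\mu \in \mathcal{G}$, the density lemma supplies $f \in {\rm conv}(\mathcal{H})$ with $\|h_\mu - f\|_{L^2(m)} \leq \eta$, and the covering property supplies $f_j$ with $\|f - f_j\|_{L^2(m)} \leq \epsilon$. Then $\|h_\mu - f_j\|_{L^2(m)} \leq \epsilon + \eta = r\epsilon$, so $\{f_j\}_{j=1}^K$ is an external $r\epsilon$-cover of $\mathcal{G}$, establishing $N_m(r\epsilon, \mathcal{G}) \leq N_m(\epsilon, {\rm conv}(\mathcal{H}))$.

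The only real technical point is the Fubini/measurability step, which requires $(\theta, x) \mapsto (h(\theta, x) - h_\mu(x))^2$ to be jointly measurable and uniformly bounded; this is immediate from the standing joint Borel measurability of $h$ on $\Theta \times \mathcal{X}$ declared at the start of Section~2 together with $|h| \leq 1$. The compactness of $\mathcal{X}$ is not strictly needed for the probabilistic approximation itself, but it ensures that $m$ is Radon and that $h(\theta, \cdot)$ is uniformly approximable, which is convenient when one subsequently passes from discrete $m$ to general Borel $m$.
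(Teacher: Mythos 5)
Your proof is correct and follows the same two-step architecture as the paper's: first show that every $h_\mu$ is approximable in $L^2(m)$ by convex combinations obtained by sampling parameters from $\mu$, then transfer an external $\epsilon$-cover of ${\rm conv}(\mathcal{H})$ to an $r\epsilon$-cover of $\mathcal{G}$. The difference lies in how the approximation step is established. The paper invokes the uniform law of large numbers (Lemma 2.4 of Newey--McFadden), which uses the compactness of $\mathcal{X}$ and the continuity of $h(\theta,\cdot)$ in $x$ to produce a single sample $\{\theta_j\}_{j=1}^s$ with $\sup_{x\in\mathcal{X}}|\frac{1}{s}\sum_j h(\theta_j,x)-h_\mu(x)|\leq\epsilon$, hence an $L^2(m)$ bound valid for every $m$ simultaneously; it then concludes via the general topological fact that a finite $\epsilon$-cover of a set is an $r\epsilon$-cover of its closure. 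You instead use Maurey's second-moment argument: bound $\mathbb{E}\|\bar{h}_n-h_\mu\|_{L^2(m)}^2\leq 4/n$ via Fubini and extract a good realization. Your route is more elementary and, as you note, needs only joint Borel measurability and uniform boundedness --- the compactness of $\mathcal{X}$ plays no essential role --- at the price of producing an approximant that depends on the fixed measure $m$; the paper's route buys a single convex combination that works uniformly over $x$ and hence over all $m$ at once, which is immaterial here since $m$ is fixed in the statement. Your cover-transfer step via $\eta=(r-1)\epsilon$ and the triangle inequality is a cleaner, quantitative rendering of the paper's closure argument (with the minor bookkeeping that the open-ball cover gives strict inequalities, so the composite bound is indeed strictly below $r\epsilon$).
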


\begin{proof}[ Proof of Proposition \ref{mv_and_conv_relation_prop}. ]
Let $\mu$ be a probability measure on $\Theta$.
Since $\mathcal{X}$ is compact, $h(\theta,x)$ is uniformly bounded, measurable w.r.t. $\theta$, and continuous w.r.t. $x$, the condition of uniform law of large numbers (see Lemma 2.4 in \cite{nm1994}) is satisfied.
Specifically, for arbitrary $\epsilon>0$, we can draw particles $\{\theta_j\}_{j=1}^s$ according to $\mu$ satisfying $\sup_{x\in\mathcal{X}}|\frac{1}{s}\sum_{j=1}^s h(\theta_j,x)-h_\mu(x)| \leq \epsilon$.
This uniform bound implies $\|\frac{1}{s}\sum_{j=1}^s h(\theta_j,x) - h_\mu(x)\|_{L^2(m)} \leq \epsilon$ for any probability measure $m$ on $\mathcal{X}$.
This means the set of majority vote classifiers $\mathcal{G}$ is a subset of the closure of ${\rm conv}(\mathcal{H})$ with respect to $L^2(m)$.

We now consider a general metric space $(\Omega,d)$. Let $A$ be an arbitrary subset of $\Omega$.
Let $\{B_{\epsilon}(z_i)\}_{i=1}^n$ be an $\epsilon$-open ball covering of $A$.
Then, $\{B_{r\epsilon}(z_i)\}_{i=1}^n$ ($\forall r>1$) is a covering of $\overline{A}$.
Let $w\in \overline{A}$ be an arbitrary point.
Since the covering $\{B_{\epsilon}(z_i)\}_{i=1}^n$ of $A$ is finite, we can obtain a sequence $\{w_k\}_{k=1}^\infty$ in $A$ such that $w_k\rightarrow w$ and $\{w_k\}_{k=1}^\infty$ is contained in a ball $\exists B_{\epsilon}(z_i)$.
This implies $d(w,z_i)\leq \epsilon$, specifically, $w \in B_{r\epsilon}(z_i)$.
Thus, we conclude the proof.
\end{proof}

Under Assumption \ref{covering_numbers_assumption}, the bound on the entropy of ${\rm conv}(H)$ is well known \cite{vw1996}, that is, there exists a positive constant $K$ such that
$\log N_m(\epsilon,{\rm conv}(H)) \leq K\epsilon^{-2V/(V+2)}$ for $\epsilon \in (0,1)$.
Combining the above proposition, we can conclude that the entropy $\log N_m(\epsilon,\mathcal{G})$ is also $O(\epsilon^{-2V/(V+2)})$.
Therefore, we can apply the result in \cite{kp2002}, and we immediately obtain the improved generalization bound.

\begin{theorem} [\cite{kp2002}]
Let us assume $\sup_m \log N_m(\epsilon,\mathcal{J}) = O(\epsilon^{-\alpha})$ for $\alpha \in (0,2)$, where supremum is taken over the set of all discrete measures on $\mathcal{X}$.
Then, there is a constant $K>0$ such that for $\forall \delta \in (0,1)$ with probability at least $1-\delta$ over the random choice of the $S$ for $\forall f \in \mathcal{J}$ we have
\[ \mathbb{P}_D[Yf(X)\leq 0] \leq K \inf_{\rho \in (0,1]} \left( \mathbb{P}_S[Yf(X)\leq \rho] + \rho^{\frac{-2\alpha}{2+\alpha}}N^{\frac{-2}{\alpha+2}} \right) + \frac{K}{N}\log\left(\frac{1}{\delta} \right). \]
\end{theorem}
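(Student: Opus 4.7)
The plan is to follow the classical empirical-process route to margin bounds, but with a localized Rademacher complexity argument in order to obtain the improved exponent $-2/(2+\alpha)$ in $N$ rather than the naive $-1/2$. First, I would sandwich the indicator between a Lipschitz margin surrogate: defining $\varphi_\rho(u)=1$ for $u\le 0$, $\varphi_\rho(u)=1-u/\rho$ for $u\in(0,\rho)$, and $\varphi_\rho(u)=0$ for $u\ge\rho$, we have $\mathbf{1}[Yf(X)\le 0]\le \varphi_\rho(Yf(X))\le \mathbf{1}[Yf(X)\le\rho]$. It therefore suffices to control the empirical process $\mathbb{E}_D[\varphi_\rho(Yf(X))]-\mathbb{E}_S[\varphi_\rho(Yf(X))]$ uniformly over $f\in\mathcal{J}$.

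Second, since $\varphi_\rho$ is $(1/\rho)$-Lipschitz, the composed class $\mathcal{G}_\rho=\{\varphi_\rho\circ(Yf):f\in\mathcal{J}\}$ inherits the entropy bound $\log N_m(\epsilon,\mathcal{G}_\rho)\le \log N_m(\rho\epsilon,\mathcal{J})\le K\rho^{-\alpha}\epsilon^{-\alpha}$. Plugging this into Dudley's chaining yields the \emph{localized} Rademacher bound
\[
\mathcal{R}_N\bigl(\{g\in\mathcal{G}_\rho:\|g\|_{L^2(\mathbb{P}_S)}\le r\}\bigr)\le \frac{C\rho^{-\alpha/2}}{\sqrt{N}}\int_0^r \epsilon^{-\alpha/2}\,d\epsilon=\frac{C'\rho^{-\alpha/2}\,r^{1-\alpha/2}}{\sqrt{N}},
\]
which is finite precisely because $\alpha<2$, and whose exponent $1-\alpha/2<1$ is what ultimately yields a rate faster than $N^{-1/2}$.

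Third, and at the heart of the argument: because $0\le g\le 1$ for every $g\in\mathcal{G}_\rho$ we have $\mathrm{Var}(g)\le\mathbb{E}[g^2]\le\mathbb{E}[g]$, so feeding the localized complexity bound into Bousquet's sharp form of Talagrand's concentration inequality and performing the standard sub-root / peeling argument shows that with probability at least $1-\delta$,
\[
\mathbb{E}_D[\varphi_\rho(Yf)]\le c\,\mathbb{E}_S[\varphi_\rho(Yf)]+c\,v^*+c\,\frac{\log(1/\delta)}{N}
\]
uniformly in $f$, where $v^*$ is the fixed point of the sub-root $\psi(v)=C'\rho^{-\alpha/2}v^{(2-\alpha)/4}/\sqrt{N}$. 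Solving $\psi(v^*)=v^*$ gives $v^*\simeq \rho^{-2\alpha/(2+\alpha)}N^{-2/(2+\alpha)}$, exactly the rate term of the theorem. Combined with the sandwich from step one, this establishes the bound at a fixed $\rho$.

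Finally, to make the inequality uniform in $\rho\in(0,1]$, I would discretize along the dyadic grid $\rho_k=2^{-k}$ and apply step three with $\delta_k=\delta/(k(k+1))$; for $\rho\in(\rho_{k+1},\rho_k]$ the bound at $\rho_k$ differs from the one at $\rho$ only by absorbable multiplicative constants, and the resulting $\log\log(1/\rho)$ price is dominated by the $\log(1/\delta)/N$ slack. The main obstacle is step three: a chaining argument without variance localization produces only the standard $O(N^{-1/2})$ rate, so obtaining the improved exponent $2/(2+\alpha)$ requires carefully combining (i) the $0\le\varphi_\rho\le 1$ variance-to-mean bound, (ii) Bousquet's sharp inequality, and (iii) tracking the $\rho$-dependence through both the Lipschitz contraction---which contributes the factor $\rho^{-\alpha/2}$ inside the sub-root rather than the cruder $1/\rho$---and the fixed-point equation; everything else is bookkeeping.
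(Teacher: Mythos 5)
Your sketch is mathematically sound, but note first that the paper itself does not prove this statement at all: it is imported verbatim from Koltchinskii--Panchenko (2002) and used as a black box in the proof of Theorem 2, after the entropy of $\mathcal{G}$ has been shown to be $O(\epsilon^{-2V/(V+2)})$. So the comparison is really with the original source rather than with anything in this paper. Your route is the localized-Rademacher-complexity argument in the style of Bartlett--Bousquet--Mendelson: contraction of the entropy through the $(1/\rho)$-Lipschitz surrogate, a Dudley integral truncated at radius $r$ giving $\rho^{-\alpha/2}r^{1-\alpha/2}/\sqrt{N}$ (finite precisely because $\alpha<2$), the variance-to-mean bound $\mathrm{Var}(g)\le\mathbb{E}[g]$ for $[0,1]$-valued $g$, Talagrand/Bousquet concentration, and the fixed point of the sub-root function. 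The fixed-point computation is correct: solving $(v^*)^{(2+\alpha)/4}\asymp\rho^{-\alpha/2}N^{-1/2}$ gives $v^*\asymp\rho^{-2\alpha/(2+\alpha)}N^{-2/(2+\alpha)}$, which is exactly the rate term, and the multiplicative constant $K$ in front of $\mathbb{P}_S[Yf\le\rho]$ is the expected price of the variance localization. The original Koltchinskii--Panchenko proof reaches the same bound by a more hands-on peeling argument, applying Talagrand's concentration inequality over a grid indexed jointly by $\rho$ and by the level of the empirical margin distribution, without packaging the localization into a sub-root fixed point; your version is cleaner to state but relies on machinery that postdates the cited paper, while theirs is self-contained at the cost of bookkeeping. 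Two small imprecisions in your write-up, neither fatal: the $\log\log(1/\rho)$ union-bound cost from the dyadic grid is not dominated by $\log(1/\delta)/N$ in general (take $\delta$ near $1$), but it is absorbed by the rate term $K\rho^{-2\alpha/(2+\alpha)}N^{-2/(2+\alpha)}$ since that term diverges as $\rho\to 0$ while $N^{-2/(2+\alpha)}\ge N^{-1}$; and the chaining bound controls the empirical Rademacher complexity at the empirical $L^2(\mathbb{P}_S)$ radius, so relating that radius to the population variance is an additional (standard, but nonvacuous) step inside the peeling that you should not fold silently into ``bookkeeping.''
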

\end{proof}

Next, we give proofs of the relation between smooth margin function and the empirical margin distribution.



\begin{proof} [ Proof of Theorem \ref{margin_distribution_bound_prop}. ]
Let $k$ be the number of examples whose margin is less than $\rho$, i.e., $yh_\mu(x)\leq \rho,\ (x,y)\in S$.
Then we have the following by considering potentially minimum of $\mathbb{E}_S[\exp(-yh_\mu(x)/\alpha)]$,
\[ \frac{k}{N}\exp\left(- \frac{\rho}{\alpha}\right) + \frac{N-k}{N}\exp\left(-\frac{1}{\alpha}\right) \leq \exp\left(- \frac{\psi_\alpha(\mu)}{\alpha}\right).\] 
Noting that $\frac{k}{N}=\mathbb{E}_S[{\bf 1}[yh_{\mu}(x)\leq \rho]]$, we can finish the proof of the theorem.
\end{proof}

\section{Topological Properties and Optimality Conditions}
In this section, we prove statements about the optimization problem for majority vote classifiers.

\begin{proof} [Proof of Proposition \ref{smoothness_proposition}. ]
By the assumption, uniform boundedness, Lipschitz continuity of $h(\cdot,x)$ and uniform boundedness of $\|\mathbb{E}_S[s_\mu(\theta,x,y)]\|_2^2$ are clear.
Thus, it is sufficient to show uniform Lipschitz continuity of the latter functions.
Let us define functions $\phi_{\alpha}(z)=\|\sum_{i=1}^N \alpha_iz_i/N \|_2^2$\ \ (where $z_i\in \mathbb{R}^d,\ \exists K,\ \alpha \in [-K,K]^N$) and mappings $\psi_S(\theta) = (\nabla_\theta h(\theta,x_i))_{i=1}^N$.
By the boundedness assumption there is a constant $C>0$ such that $\|\nabla_\theta h(\theta,x)\|_2 \leq C$.
Note that $\phi_\alpha|_{[-C,C]^{dN}}$ and $\psi_S$ are Lipschitz continuous with the uniformly bounded constant.
Thus, composite functions of these; $\{\phi_{\alpha}\circ \psi_S\}_{\alpha \in [-C,C]^{dN},S}$ are also Lipschitz continuous with the uniformly bounded constant.
Clearly, functions $\|\mathbb{E}_S[s_\mu(\theta,x,y)]\|_2^2$ is an element of these composite functions, so this finishes the proof.
\end{proof}

%

We now give propositions needed in our analysis.
The first statement in the following proposition shows that the distance between $\phi$ and $\phi+\xi \circ \phi$ with respect to $L^2(\mu)$ is the norm of $\xi$ with respect to $L^2(\phi_\sharp \mu)$.
The second statement gives a sufficient condition for a vector to define a diffeomorphism that preserves good properties if the base probability measures possesses these, for instance,
the absolute continuity with respect to Lebesgue measure and the manifold structure of the support of itself which are sometimes useful from the Wasserstein geometry or partial differential equation perspective.

\begin{proposition}\label{neighborhood_prop}
For $\mu \in \mathcal{P}_2$, the following statements are valid:\\
(i) $\| (id+\xi)\circ \phi - \phi\|_{L^2(\mu)} = \|\xi\|_{L^2(\phi_\sharp \mu)}$\ \ for $\phi \in L^2(\mu), \xi \in L^2(\phi_\sharp \mu)$;\\
(ii) Let $\xi \in L^2(\mu)$ be the $\mathcal{C}^1$-mapping from the convex hull of $\mathrm{supp}(\mu)$ to $\Theta$.
We denote by $\Lambda$ an upper bound on maximum singular values of $\nabla \xi(\theta)$ as the $(d,d)$-matrix on the convex hull of $\mathrm{supp}(\mu)$. 
Then $id+\eta\xi$ is a diffeomorphism on $\mathrm{supp}(\mu)$ for $0 \leq \forall \eta < 1/\Lambda$.
\end{proposition}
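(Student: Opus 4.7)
The plan for (i) is a one-line change of variables. Observing that $(id+\xi)\circ\phi - \phi = \xi\circ\phi$, I would write
\[
\|(id+\xi)\circ\phi - \phi\|_{L^2(\mu)}^2 = \int \|\xi(\phi(\theta))\|^2\,d\mu(\theta) = \int \|\xi(\theta')\|^2\,d(\phi_\sharp\mu)(\theta') = \|\xi\|_{L^2(\phi_\sharp\mu)}^2,
\]
where the second equality is the defining push-forward identity $\int g\circ\phi\,d\mu = \int g\,d(\phi_\sharp\mu)$ applied to $g(\theta')=\|\xi(\theta')\|^2$. This uses only that $\xi\in L^2(\phi_\sharp\mu)$, which is exactly the hypothesis.

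For (ii), the plan is to combine local invertibility (inverse function theorem) with a global injectivity argument (mean-value inequality on a convex domain). Let $K$ denote the convex hull of ${\rm supp}(\mu)$. The differential of $id+\eta\xi$ at $\theta\in K$ is $I_d+\eta\nabla\xi(\theta)$. Since the singular values of $\eta\nabla\xi$ are bounded by $\eta\Lambda<1$, the singular values of $I_d+\eta\nabla\xi$ are bounded below by $1-\eta\Lambda>0$, so the Jacobian is invertible everywhere on $K$. The inverse function theorem then yields that $id+\eta\xi$ is locally a $\mathcal{C}^1$-diffeomorphism. To promote this to global injectivity, I use convexity of $K$: for any $\theta_1,\theta_2\in K$ the segment between them is contained in $K$, so the fundamental theorem of calculus gives
\[
\|\xi(\theta_1)-\xi(\theta_2)\| = \left\|\int_0^1\nabla\xi(\theta_2+t(\theta_1-\theta_2))(\theta_1-\theta_2)\,dt\right\| \leq \Lambda\|\theta_1-\theta_2\|.
\]
The reverse triangle inequality then yields
\[
\|(id+\eta\xi)(\theta_1)-(id+\eta\xi)(\theta_2)\| \geq (1-\eta\Lambda)\|\theta_1-\theta_2\|,
\]
i.e.\ $id+\eta\xi$ is a bi-Lipschitz injection on $K$. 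Combined with the local diffeomorphism property, this gives that $id+\eta\xi$ is a $\mathcal{C}^1$-diffeomorphism from $K$ onto its image, and in particular its restriction to ${\rm supp}(\mu)\subset K$ is a diffeomorphism onto its image.

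Neither step requires heavy machinery. The only delicate point is the role of convexity of $K$: without a convex domain the mean-value inequality on $\xi$ could fail, and a pointwise bound on $\|\nabla\xi\|$ would no longer translate into global injectivity. This is presumably why the hypothesis is stated on the convex hull of ${\rm supp}(\mu)$ rather than on the support itself, and is the only subtlety I expect to flag carefully when writing up the full proof.
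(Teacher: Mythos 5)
Your proposal is correct and follows essentially the same route as the paper: part (i) is the same change-of-variables/push-forward identity applied to $\|\xi\|^2$, and part (ii) combines the same two ingredients (global injectivity from the Lipschitz bound on $\xi$ over the convex hull, plus invertibility of $I_d+\eta\nabla\xi$ and the inverse function theorem). Your use of the integral form of the mean-value inequality is a slightly more careful rendering of the paper's vector-valued mean-value step, but it is not a different argument.
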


\begin{proof}
We set $\mu = \phi_\sharp \mu_0$ for $\phi \in L^2(\mu_0)$.
Then we have that for $\xi \in L^2(\mu)$

\begin{align*}
\|\xi\|_{L^2(\mu)}^2 &= \int \|\xi(\theta)\|_2^2 d\mu(\theta) \\
&= \int \|\xi(\theta)\|_2^2 d\phi_\sharp \mu_0(\theta) \\
&= \int \|\xi(\phi(\theta))\|_2^2d\mu_0(\theta) = \|\xi\circ \phi\|_{L^2(\mu_0)}^2,
\end{align*}

where we used the variable transformation for the third equality.
This finishes the proof of $(i)$.

If we assume $(id+\eta\xi)(\theta)=(id+\eta\xi)(\theta')$, then it follows that $\|\theta-\theta'\|_2=\eta\|\xi(\theta)-\xi(\theta')\|_2 \leq \eta\|\nabla \xi(\theta'')^T(\theta-\theta')\|_2< \eta \Lambda \|\theta-\theta'\|_2$, where $\theta''$ is a convex combination of $\theta$ and $\theta'$. 
Since $\eta\Lambda<1$, we have $\theta=\theta'$, i.e., $id+\eta\xi$ is an injective mapping.
By the same argument, we find that $\nabla (id+\eta\xi)(\theta) = I_d+\eta \nabla \xi(\theta)$ also defines an injective linear mapping for $\theta \in \mathrm{supp}(\mu)$ and $\eta\Lambda<1$, so that this matrix is invertible.
Thus, we conclude the proof of $(ii)$ by using the invertible mapping theorem.
\end{proof}

Here, we present the proof of Proposition \ref{continuity_prop_on_P} and the continuity of the parameterization via transport maps in the following propositions, which will be used to show a local optimality condition theorem.

\begin{proof} [ Proof of Proposition \ref{continuity_prop_on_P}. ]
Continuity of $h_\mu(x)$ and $\mathcal{L}_S(\mu)$ with respect to $\mu$ are clear.
Let $\{\mu_t\}_{t=1}^\infty$ be a sequence converging to $\mu \in \mathcal{P}$.
In the following, we denote $s_\mu(\theta,x,y)$ by $s_\mu$ for simplicity.
By triangle inequality, we have
\begin{align*}
  \left| \|\mathbb{E}_S[s_{\mu_t}]\|_{L^2(\mu_t)}^2 - \| \mathbb{E}_S[s_{\mu} ]\|_{L^2(\mu)}^2 \right|
  &\leq \left| \mu_t( \|\mathbb{E}_S[s_{\mu_t}]\|_2^2 ) - \mu_t( \|\mathbb{E}_S[s_{\mu}]\|_2^2 ) \right| \\
  &+ \left| \mu_t( \|\mathbb{E}_S[s_{\mu}]\|_2^2 ) - \mu( \|\mathbb{E}_S[s_{\mu}]\|_2^2 ) \right|.
\end{align*}
Since $\|\mathbb{E}_S[s_{\mu}]\|_2^2 \in \mathcal{F}$, the latter term converges to zero.
In order to show that the former converges to zero, it is sufficient to see the uniform convergence $\|\mathbb{E}_S[s_{\mu_t}]\|_2^2 \rightarrow \|\mathbb{E}_S[s_{\mu}]\|_2^2$.
By the boundedness and the triangle inequality, we have
\begin{align*}
\left| \|\mathbb{E}_S[s_{\mu_t}]\|_2^2 - \|\mathbb{E}_S[s_{\mu}]\|_2^2 \right| &\leq 2\sqrt{C} \left| \|\mathbb{E}_S[s_{\mu_t}]\|_2 - \|\mathbb{E}_S[s_{\mu}]\|_2\right| \\
&\leq 2\sqrt{C} \|\mathbb{E}_S[s_{\mu_t}] - \mathbb{E}_S[s_{\mu}]\|_2.
\end{align*}
This upper bound converges to zero.
Indeed, each element in expectation: $s_{\mu_t}(\cdot,x,y)$ uniformly converges to $s_{\mu}(\cdot,x,y)$ as seen in the following:
\[ \|l'(-y_{h_{\mu_t}}(x))\nabla h(\theta,x) - l'(-y_{h_{\mu}}(x))\nabla h(\theta,x)\|_2 \leq C|l'(-y_{h_{\mu_t}}(x))-l'(-y_{h_{\mu}}(x))| \rightarrow 0. \]
This finishes the proof.
\end{proof}

\begin{proposition}\label{continuity_prop_of_iota}
For $\forall \mu \in \mathcal{P}_2$ and $\forall \xi \in L^2(\mu)$, it follows that $d_{\mathcal{F}}((id+\xi)_\sharp\mu,\mu) \leq C \|\xi\|_{L^2(\mu)}$.
\end{proposition}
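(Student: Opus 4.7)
The plan is to unfold the definition of $d_{\mathcal{F}}$, apply the change-of-variables formula associated with push-forward, exploit the Lipschitz property of test functions $f\in\mathcal{F}$, and finish with Cauchy--Schwarz. Since $\mathcal{F}=\mathcal{F}_C$ consists of $C$-Lipschitz functions uniformly bounded by $C$, and since every $f\in\mathcal{F}$ is in particular measurable and bounded on $\Theta$, each integral that appears is well-defined for any $\mu\in\mathcal{P}_2$ and any Borel $\xi\in L^2(\mu)$.

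First, for any $f\in\mathcal{F}$, the definition of the push-forward measure gives directly
\[
((id+\xi)_\sharp \mu)(f) = \int f\bigl(\theta+\xi(\theta)\bigr)\, d\mu(\theta),
\]
so that
\[
\bigl|((id+\xi)_\sharp\mu)(f)-\mu(f)\bigr|
= \left| \int \bigl[f(\theta+\xi(\theta))-f(\theta)\bigr]\, d\mu(\theta)\right|
\le \int \bigl|f(\theta+\xi(\theta))-f(\theta)\bigr|\, d\mu(\theta).
\]
Next I invoke the $C$-Lipschitz property of $f$ with respect to the Euclidean norm to obtain the pointwise estimate $|f(\theta+\xi(\theta))-f(\theta)|\le C\|\xi(\theta)\|_2$, which upon integration yields $\int|f(\theta+\xi(\theta))-f(\theta)|\,d\mu(\theta)\le C\,\mathbb{E}_\mu[\|\xi(\theta)\|_2]$.

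Finally, I apply Cauchy--Schwarz (equivalently, Jensen's inequality applied to $t\mapsto t^2$):
\[
\mathbb{E}_\mu[\|\xi(\theta)\|_2] \le \sqrt{\mathbb{E}_\mu[\|\xi(\theta)\|_2^2]} = \|\xi\|_{L^2(\mu)}.
\]
Since the resulting bound $C\|\xi\|_{L^2(\mu)}$ is uniform in $f\in\mathcal{F}$, taking the supremum gives the desired inequality $d_{\mathcal{F}}((id+\xi)_\sharp\mu,\mu)\le C\|\xi\|_{L^2(\mu)}$.

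There is essentially no obstacle here; the statement is a direct consequence of the Lipschitz property of the test class and the trivial inclusion $L^2(\mu)\subset L^1(\mu)$ for a probability measure. The only point worth flagging is that the constant $C$ on the right-hand side is the same $C$ appearing in the definition of $\mathcal{F}_C$, so the estimate inherits the Lipschitz constant of the test class rather than the uniform bound on $|f|$; if one wanted a dimension- or measure-free constant independent of $\mathcal{F}$ one would have to renormalize the definition of $\mathcal{F}$.
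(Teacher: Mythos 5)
Your proof is correct and follows essentially the same route as the paper's: unfold the definition of $d_{\mathcal{F}}$, use the change of variables for the push-forward, apply the $C$-Lipschitz bound pointwise, and finish with H\"older/Cauchy--Schwarz to pass from $L^1(\mu)$ to $L^2(\mu)$. Nothing to add.
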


\begin{proof} [ Proof of Proposition \ref{continuity_prop_of_iota}. ]
Noting that Lipschitz continuity of $\forall f \in \mathcal{F}$, we have that for $\forall \xi \in L^2(\mu)$,
\begin{align*}
  d_{\mathcal{F}}((id+\xi)_\sharp \mu, \mu) &= \sup_{f\in\mathcal{F}}|((id+\xi)_\sharp \mu)(f)-\mu(f)| \\
  &= \sup_{f\in\mathcal{F}} \left| \int f(\theta)d(id+\xi)_\sharp\mu(\theta) - \int f(\theta)d\mu(\theta) \right| \\
  &= \sup_{f\in\mathcal{F}} \left| \int \left(f(\theta+\xi(\theta)) - f(\theta) \right)d\mu(\theta) \right| \\
  &\leq C  \int \|\xi(\theta)\|_2 d\mu(\theta) \leq C\| \xi \|_{L^2(\mu)},
\end{align*}
where we used H{\"o}lder's inequality for the last inequality.
\end{proof}

As noted in the paper, the continuity in Proposition \ref{continuity_prop_on_P} also holds with respect to $p$-Wasserstein distance ($p\geq 1$)
and Proposition \ref{continuity_prop_of_iota} holds for $1$-Wasserstein distance with $C=1$.

We now give the proof of the counterpart of Taylor's formula.

\begin{proof} [ Proof of Proposition \ref{asymptotic_equality_prop}. ]
By the variable transformation, we have
\begin{align*}
\int h(\theta,x) d(id+\xi)_{\sharp} \mu(\theta) =\int h(\theta+\xi(\theta),x)d\mu(\theta).
\end{align*}
Using Taylor's formula, we obtain
\[ h_{\psi\sharp \mu_0}(x) =h_\mu(\theta) +\mathbb{E}_\mu[\nabla_\theta h(\theta,x)^T\xi(\theta) + \|\xi(\theta)\|_{\nabla^2_\theta h(\theta',x)}^2], \]
where $\|\cdot\|_{\nabla_\theta h(\theta',x)}$ is Mahalanobis norm, and 
\[ l(a+b)=l(a)+l'(a)b + \frac{1}{2}l''(a)b^2 + o(b^2)\ \ \ (a,b\in \mathbb{R}). \]
Noting that by H\"older's inequality and Assumption \ref{smooth_bound_assumption}, $\mathbb{E}_\mu[\nabla_\theta h(\theta,x)^T\xi(\theta)]=O(\|\xi\|_{L^2(\mu)})$ and $\mathbb{E}_\mu[\|\xi(\theta)\|_{\nabla_\theta h(\theta',x)}^2]=o(\|\xi\|_{L^2(\mu)})$, we get
\[ l(-yh_{\psi_\sharp \mu_0}(x)) = l(-yh_\mu(x))+ \mathbb{E}_\mu[s_\mu(\theta,x,y)^T \xi(\theta)] + H_\mu(\xi,x,y) + o(\|\xi\|_{L^2(\mu)}^2),  \]

where $H_\mu(\xi,x,y)$ is the integrand in $H_\mu(\xi)$.
Therefore, by taking the expectation $\mathbb{E}_S$, we finish the proof.  
\end{proof}

Using facts and propositions presented in the paper, we prove the theorem of a necessary optimality condition.

\begin{proof} [ Proof of Theorem \ref{optimality_condition_thm}. ]
We denote $\zeta_\mu =\mathbb{E}_S[s_{\mu}(\cdot,x,y)]$ and denote the $\delta$-ball centered at $\mu_*$ by $B^\mathcal{F}_{\delta}(\mu_*)$ with respect to $d_{\mathcal{F}}$.
We assume $\mu_*$ is a minimum on $B^\mathcal{F}_{\delta}(\mu_*)$.
By Assumption \ref{smooth_bound_assumption} and Proposition \ref{continuity_prop_of_iota}, there exists $\eta_0>0$ such that $(id\pm\eta\zeta_\mu)_{\sharp}\mu \in B^\mathcal{F}_{\delta}(\mu_*)\cap \mathcal{P}_2$
for $0<\forall \eta <\eta_0$ and $\forall \mu \in B^\mathcal{F}_{\delta/2}(\mu_*)\cap \mathcal{P}_2$.
Let $\epsilon>0$ be an arbitrary constant.
Here, we can choose a sequence $\{\mu_t\}_{t=1}^\infty$ in $B^\mathcal{F}_{\delta/2}(\mu_*)\cap \mathcal{P}_2$ satisfying $\mu_t\rightarrow \mu_*$ and
$\mathcal{L}_S(\mu_t)\leq \mathcal{L}_S(\mu_*)+\epsilon/t$ by the continuity of $\mathcal{L}_S$.
Then, using Proposition \ref{asymptotic_equality_prop}, we have
\begin{align*}
-\frac{\epsilon}{t} &\leq \mathcal{L}_S\left(\left(id-\frac{\eta_0}{t}\zeta_{\mu_t}\right)_\sharp \mu_t \right) - \mathcal{L}_S(\mu_t) \\
&=  -\frac{\eta_0}{t}\|\mathbb{E}_S[s_{\mu_t}]\|_{L^2(\mu_t)}^2 + \frac{\eta_0^2}{t^2}O(\|\mathbb{E}_S[s_{\mu_t}]\|_{L^2(\mu_t)}^2),
\end{align*}
where we denote $s_{\mu_t}=s_{\mu_t}(\theta,x,y)$ for simplicity.
Note that Assumption \ref{smooth_bound_assumption} is essentially stronger than Assumption \ref{continuity_assumption} and the continuity of $\mathcal{L}_S(\mu)$
and $\|\mathbb{E}_S[s_{\mu}]\|_{L^2(\mu)}^2$ with respect to $\mu$ are valid by Proposition \ref{continuity_prop_on_P}.
Thus, multiplying $t$, taking the limit as $t\rightarrow \infty$, and using continuity, we have $\eta_0 \|\mathbb{E}_S[s_{\mu_*}]\|_{L^2(\mu_*)}^2 \leq \epsilon$.
Since $\epsilon$ is taken arbitrary and $\epsilon, \eta_0$ are independent of each other, we get $\|\mathbb{E}_S[s_{\mu_*}]\|_{L^2(\mu_*)}^2=0$
\end{proof}

We give the proof of Proposition \ref{positivity_prop} giving the justification for the assumption of Theorem \ref{sufficient_optimality_condition_thm}.

\begin{proof} [ Proof of Proposition \ref{positivity_prop}. ]
  For $\xi \in L^\infty(\mu)$ satisfying $\|\xi\|_{L^\infty(\mu)} < \epsilon$, convex combinations of $\theta$ and $\theta + \xi(\theta)$ for $\theta \in {\rm supp}(\mu)$ is contained in
  ${\rm supp}^\epsilon(\mu)$. 
Thus, we have
\begin{align*}
  \mathbb{E}_\mu[ \|\xi\|_{M_\mu(\theta')}^2] &= \int_{{\rm supp}(\mu)}\|\xi\|_{M_\mu(\theta')}^2d\mu(\theta)\\
  &\geq \int_{{\rm supp}(\mu)} \frac{\alpha}{2} \|\xi\|_2^2 d\mu(\theta) = \frac{\alpha}{2}\|\xi\|_{L^2(\mu)}^2.
\end{align*}
This finishes the proof.
\end{proof}

We now prove the theorem of sufficient optimality condition.

\begin{proof} [ Proof of Theorem \ref{sufficient_optimality_condition_thm}. ]
We suppose condition (\ref{opt_necessary_condition}) holds.
It follows that for $\forall \xi \in L^\infty(\mu_*)$, $|\mathbb{E}_{\mu_*}[\mathbb{E}_S[s_{\mu_*}(\cdot,x,y)]^T\xi(\theta) ]| \leq \|\mathbb{E}_S[s_{\mu_*}(\cdot,x,y)]\|_{L^2(\mu_*)}\|\xi\|_{L^2(\mu_*)}=0$ by H\"older's inequality.
By the assumption, we can choose $\epsilon>0$ such that $\frac{1}{4}H_{\mu_*}(\xi)\geq |o(\|\xi\|_{L^2(\mu_*)}^2)|$ on $\{\|\xi\|_{L^\infty(\mu_*)}<\epsilon \}$, where the right hand side is the higher-order term in (\ref{asymptotic_equality}).
These inequalities imply that if $\|\xi\|_{L^\infty(\mu_*)}<\epsilon$,
\[ \mathcal{L}_S((id+\xi)_\sharp \mu_*) \geq \mathcal{L}_S(\mu_*) + \frac{1}{4}H_{\mu_*}(\xi) > \mathcal{L}_S(\mu_*). \]
This finishes the proof.
\end{proof}

\section{Interior Optimality Property}
To prove Theorem \ref{characterization_optimality_condition_thm}, we now introduce the notion of the smoothing of probability measures as Schwartz distribution
We denote by $\chi$ a $\mathcal{C}^\infty$-class probability density function on $\Theta=\mathbb{R}^d$ with $\mathrm{supp}(\chi)=\{\theta \in \Theta \mid \|\theta\|_2\leq 1\}$ 
and write $\chi_\epsilon(\theta)=\epsilon^{-d}\chi(\theta/\epsilon)$ for $\epsilon>0$.
For a probability measure $\mu \in \mathcal{P}$, it can be approximated by a smooth probability density function defined by the following:

\begin{equation*}
  (\mu * \chi_\epsilon)(\theta) = \int_{\Theta} \chi_\epsilon(\theta-\theta') d\mu(\theta').
\end{equation*}

It is well known that $\mu * \chi_\epsilon$ is $\mathcal{C}^\infty$-class on $\Theta$ and converges as Schwartz distribution to $\mu$ as $\epsilon \rightarrow 0$ \cite{hormander1963}.
Moreover, if $\mu$ possesses a $L^p$-integrable probability density function $q \in L^p(\Theta)$ with $p\geq 1$, then $\mu * \chi_\epsilon$ converges to $q$ with respect to $L^p(\Theta)$-norm.
Let $\mu_\epsilon$ denote a probability measure induced by $\mu * \chi_\epsilon$.
When $\mathrm{supp}(\mu_\epsilon)$ is compact in $\Theta$, $\mu_\epsilon(f)$ converges to $\mu(f)$ for arbitrary continuous function $f$ on $\Theta$.
This can be confirmed by constructing a $\mathcal{C}^\infty$-function $g$ that uniformly approximates $f$ on $\mathrm{supp}(\mu_\epsilon)$ and takes the value zero outside of sufficiently large compact set.
Clearly, we see that $\mathrm{supp}(\mu_\epsilon)$ is contained in the closed $\epsilon$-neighborhood of $\mathrm{supp}(\mu)$.
Thus, if $\mathrm{supp}(\mu)$ is compact, then $\{\mu_\epsilon\}_{\epsilon \in (0,1)}$ is tight, so that we can find $\mu_\epsilon$ converges to $\mu$ with respect to $\|\cdot\|_{\mathcal{F}}$
by the proof of Proposition \ref{weak_convergence_prop}, that is, $\mu_\epsilon(f)$ converges uniformly to $\mu(f)$ on $\mathcal{F}$.

Note that if $\mathrm{supp}(\mu)$ is the compact submanifold in $\Theta$, $\mathrm{supp}(\mu)$ and the closed $\epsilon$-neighborhood of $\mathrm{supp}(\mu)$ coincide
for sufficiently small $\epsilon>0$
and these sets possess a manifold structure as can be seen by the following auxiliary lemma. 

\begin{lemma}
  Let $\mathcal{M}$ be a $l$-dimensional compact $\mathcal{C}^\infty$-submanifold ($l<d$) or a $d$-dimensional compact $\mathcal{C}^\infty$-submanifold with boundary in $\mathbb{R}^d$.
  If $\epsilon>0$ is sufficiently small, then closed $\epsilon$-neighborhood of $\mathcal{M}$ in $\mathbb{R}^d$ is a $d$-dimensional compact $\mathcal{C}^\infty$-submanifold with boundary.
\end{lemma}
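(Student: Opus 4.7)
The plan is to invoke the tubular neighborhood theorem, which for a compact embedded $\mathcal{C}^\infty$-submanifold provides a uniform collar and thereby a $\mathcal{C}^\infty$-manifold-with-boundary structure on the $\epsilon$-thickening via the translation map on the normal bundle.

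First I would handle the case where $\mathcal{M}$ is an $l$-dimensional compact $\mathcal{C}^\infty$-submanifold of $\mathbb{R}^d$ without boundary, $l<d$. By the tubular neighborhood theorem \cite{gp1974}, there exist a constant $\delta>0$ and a $\mathcal{C}^\infty$-diffeomorphism
\[ \Phi:\{(p,v)\in N\mathcal{M}:\|v\|<\delta\}\longrightarrow \mathcal{U}\subset \mathbb{R}^d,\qquad (p,v)\mapsto p+v, \]
onto an open neighborhood $\mathcal{U}$ of $\mathcal{M}$, with the property that every $x\in\mathcal{U}$ has a unique nearest point $p\in\mathcal{M}$ and $\|x-p\|=\|v\|$ where $\Phi(p,v)=x$. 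For $\epsilon<\delta$ the closed $\epsilon$-neighborhood of $\mathcal{M}$ is then exactly $\Phi(\{(p,v):\|v\|\leq\epsilon\})$, the image of the closed $\epsilon$-disk bundle, which is a $d$-dimensional compact $\mathcal{C}^\infty$-manifold with boundary (the boundary being the sphere bundle at radius $\epsilon$); the diffeomorphism $\Phi$ transfers this structure to $\mathbb{R}^d$.

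Next I would treat the case where $\mathcal{M}$ is a $d$-dimensional compact $\mathcal{C}^\infty$-submanifold with boundary. I apply the previous case to $\partial\mathcal{M}$, a $(d-1)$-dimensional compact submanifold of $\mathbb{R}^d$ without boundary, obtaining $\delta>0$ and a tubular diffeomorphism $\Phi$ onto a neighborhood $\mathcal{U}$ of $\partial\mathcal{M}$. Since $\partial\mathcal{M}$ is a two-sided hypersurface bounding $\mathcal{M}$, it admits a continuous outward unit normal $\nu$, so $\Phi$ identifies $\mathcal{U}$ with $\partial\mathcal{M}\times(-\delta,\delta)$ and $\mathcal{U}\cap\mathcal{M}$ with $\partial\mathcal{M}\times(-\delta,0]$. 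For $\epsilon<\delta$ small enough that $\{x:d(x,\partial\mathcal{M})\leq\epsilon\}\subset\mathcal{U}$, the $\epsilon$-neighborhood satisfies
\[ \mathcal{M}_\epsilon=\mathcal{M}\cup \Phi\bigl(\partial\mathcal{M}\times(0,\epsilon]\bigr), \]
and its topological boundary is the parallel hypersurface $\{p+\epsilon\nu(p):p\in\partial\mathcal{M}\}$, $\mathcal{C}^\infty$-diffeomorphic to $\partial\mathcal{M}$ via $\Phi$. Gluing the interior of $\mathcal{M}$ to the outward half-collar along $\partial\mathcal{M}$ then equips $\mathcal{M}_\epsilon$ with the structure of a $d$-dimensional compact $\mathcal{C}^\infty$-manifold with boundary.

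The main obstacle is securing a uniform injectivity radius $\delta>0$ for the normal exponential map over the whole of $\mathcal{M}$ (respectively $\partial\mathcal{M}$); without this, the nearest-point projection is multivalued and $\mathcal{M}_\epsilon$ can develop self-intersections or cusps. Compactness is exactly what rescues us here and is precisely the hypothesis exploited by the tubular neighborhood theorem. The remaining work is routine bookkeeping: verifying in the second case that the interior of $\mathcal{M}$ and the collar chart $\partial\mathcal{M}\times(-\delta,\epsilon]$ coming from $\Phi$ agree on their overlap so as to define a global $\mathcal{C}^\infty$-atlas with boundary on $\mathcal{M}_\epsilon$.
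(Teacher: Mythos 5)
Your proof is correct and rests on the same key input as the paper's: the tubular/$\epsilon$-neighborhood theorem of Guillemin--Pollack, which both arguments use to obtain a uniform collar with unique nearest-point projection around the compact submanifold. The only real difference is the endgame --- the paper exhibits the closed neighborhood in adapted local coordinates as the sublevel set $\{f\le\epsilon\}$ of the distance function, shown to be $\mathcal{C}^\infty$ with $df\neq 0$ near the boundary, whereas you pull back the manifold-with-boundary structure of the closed $\epsilon$-disk bundle directly; you also work out the boundary case explicitly by collaring $\partial\mathcal{M}$, where the paper only remarks that it is handled similarly.
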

\begin{proof}
  We only prove the case where $\mathcal{M}$ is a compact $\mathcal{C}^\infty$-submanifold since we can give a proof for a $d$-dimensional compact $\mathcal{C}^\infty$-submanifold with boundary in a similar fashion.
  Let $\mathcal{M}^\epsilon$ denote an open $\epsilon$-neighborhood of $\mathcal{M}$ in $\Theta=\mathbb{R}^d$.
  By the $\epsilon$-neighborhood theorem \cite{gp1974}, if $\epsilon$ is sufficiently small, then $\forall \theta \in \mathcal{M}^\epsilon$ possesses a unique closest point $\pi(\theta)$ in $\mathcal{M}$ and the map $\pi: \mathcal{M}^\epsilon \rightarrow \mathcal{M}$ is a submersion.
  Moreover, for each $\theta_0 \in \mathcal{Y}$, we can see that there is a local coordinate system $z=(z^1,\ldots,z^d)=\phi(\theta)$ on an open subset $U\subset \Theta$
  such that $\phi(\theta_0)=(0,\ldots,0)$, $\phi(\mathcal{M} \cap U )= \{z \in \phi(U) \mid z^{l+1}=\cdots=z^{d}=0 \}$, and the submersion $\pi$ can be written as
  $\pi(\phi^{-1}(z))=\phi^{-1}(z^1,\ldots,z^l,0,\ldots,0)$ for $z \in \phi( \mathcal{M}^\epsilon \cap U)$.
  Since, $\mathcal{M}$ is compact, the closed $\epsilon$-neighborhood $\overline{\mathcal{M}^\epsilon}$ is covered by a finite number of such local coordinate systems for sufficiently small $\epsilon>0$.
  We redefine $(U,\phi)$ to be one of such local coordinate system.
  The Euclidean distance to $\mathcal{M}$ from $\phi^{-1}(z)\in U$ is $f(z)=d(\phi^{-1}(z),\mathcal{M})=\| \phi^{-1}(z) - \phi^{-1}(z^1,\ldots,z^l,0,\ldots,0) \|_2$ and
  $\overline{\mathcal{M}^\epsilon}\cap U$ is represented as $\{ \phi^{-1}(z) \mid z\in \phi(U), f(z)\leq \epsilon \}$.
  Since, $f(\cdot)$ is a $\mathcal{C}^\infty$-function and $df \neq 0$ on a neighborhood of $\partial \overline{\mathcal{M}^\epsilon}$ in $U$, $\overline{\mathcal{M}^\epsilon}\cap U$
  is a $d$-dimensional compact $\mathcal{C}^\infty$-submanifold with boundary in $\Theta$.
\end{proof}

Let $U$ be a bounded domain with smooth boundary in $\Theta=\mathbb{R}^d$, that is, $\overline{U}$ is a $d$-dimensional $\mathcal{C}^\infty$-manifold with boundary.
We denote by $H^1(U)(=W^{1,2}(U))$ the Sobolev space and we denote by $V(U)$ a linear subspace $\{ f \in H^1(U) \mid \int_U f d\theta=0\}$.
We equip $H^1(U)$ with the Sobolev inner product $\pd<u,v>_{H^1(U)} = \int_U u(\theta)v(\theta) d\theta + \int_U \nabla u(\theta)^\top \nabla v(\theta) d\theta$ and 
we equip $V(U)$ with the inner product $\pd< u, v>_{V(U)}=\int_U \nabla u(\theta)^\top \nabla v(\theta)d\theta$, ($u,v \in V(U)$).
The non-degeneracy and the completeness of $\pd<,>_{V(U)}$ on $V(U)$ can be checked as follows. 
We denote $\overline{u}=\int_U u(\theta) d\theta / |U|$, where $|U|$ is the Lebesgue measure of $U$.
Since $\overline{u}=0$ for $u \in V(U)$, we get from the Poincar{\' e}-Wirtinger inequality that there exists $C_U>0$ such that
\begin{equation}
  \|u\|_{L^2(U)}=\| u-\overline{u}\|_{L^2(U)} \leq C_U \|\nabla u\|_{L^2(U)} = C_U\|u\|_{V(U)}. \label{norm_inequality}
\end{equation}
Thus, we have
\begin{equation*}
  \|u\|_{V(U)} \leq \|u\|_{H^1(U)} = \sqrt{ \|u\|_{L^2(U)}^2 + \|\nabla u\|_{L^2(U)}^2} \leq (1+C_U) \|u\|_{V(U)}. 
\end{equation*}
This inequality means that these two norms introduce the same topology to $V(U)$ and it immediately implies the non-degeneracy and also the completeness of $\|\cdot\|_{V(U)}$ on $V(U)$ because
$V(U)$ is the closed subspace in the Sobolev space $H^1(U)$ with respect to $\|\cdot\|_{H^1(U)}$.
Therefore, $V(U)$ with $\pd<,>_{V(U)}$ is actually Hilbert space.

Although, the Poincar{\' e} constant $C_U$ depends on a region $U$, it is known that for any $R>0$ there exists $C_R>0$ such that if $U$ is an $\epsilon$-open neighborhood of a connected
set $K \subset B_R(0)=\{\theta \in \Theta \mid \|\theta\|_2 < R\}$ for some constant $\epsilon>0$, then $C_U$ can be taken as it is upper bounded by $C_R$ \cite{ruiz2012}.

In our analysis, we need an estimation of the norm of a solution to the problem where for $f \in V(U)$, the task is to find $u \in V(U)$ satisfying the following equation:
\begin{equation}
\int_U \nabla u(\theta)^\top \nabla v(\theta) d\theta = -\int_U f(\theta) v(\theta) d\theta \ \ for\ any\ v \in V(U).  \label{weak_neumann_prob}
\end{equation}
This is the weak formulation of the {\it Neumann problem}: to find $u\in V(U)$ such that $\Delta u = f$ in $U$ and $\partial u/ \partial n = 0$ on $\partial U$,
where $n$ is the outward pointing unit normal vector of $\partial U$.
An upper bound on the norm of a solution is given by the following lemma which can be proven in the standard way in partial differential equation theory.

\begin{lemma} \label{norm_of_solution_bound}
  Let $U$ be a bounded domain in $\Theta=\mathbb{R}^d$.
  Then for any $f \in V(U)$, a solution $u_* \in V(U)$ to the problem (\ref{weak_neumann_prob}) exists and its norm is bounded as follows:
  \begin{equation}
    \| u_* \|_{V(U)} \leq \| \alpha_f \|_{V(U)^*},
  \end{equation}
  where $\alpha_f$ is a linear functional $\alpha_f(u)=\int_U f(\theta)u(\theta)d\theta$ $(u\in V(U))$ and $\|\cdot\|_{V(U)^*}$ denote the dual of the norm $\|\cdot\|_{V(U)}$.
\end{lemma}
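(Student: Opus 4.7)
The plan is to recognize the weak formulation (\ref{weak_neumann_prob}) as a Riesz representation problem on the Hilbert space $(V(U), \pd<\cdot,\cdot>_{V(U)})$ and to read off both existence and the norm bound in a single step.

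First, I would consolidate the setup. The paragraph immediately preceding the lemma already establishes that $V(U)$ equipped with $\pd<u,v>_{V(U)} = \int_U \nabla u^\top \nabla v\, d\theta$ is complete, using the Poincar{\'e}--Wirtinger inequality (\ref{norm_inequality}) to identify $\|\cdot\|_{V(U)}$ with an equivalent norm to $\|\cdot\|_{H^1(U)}$ on the closed subspace $V(U) \subset H^1(U)$. Thus $V(U)$ is a genuine Hilbert space, and the Riesz representation theorem applies to any element of its continuous dual.

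Next, I would verify that $-\alpha_f$ belongs to $V(U)^*$. Since $f \in V(U) \subset L^2(U)$, the Cauchy--Schwarz inequality followed by (\ref{norm_inequality}) yields
\begin{equation*}
|\alpha_f(v)| = \left| \int_U f v\, d\theta \right| \leq \|f\|_{L^2(U)} \|v\|_{L^2(U)} \leq C_U \|f\|_{L^2(U)} \|v\|_{V(U)},
\end{equation*}
so $\alpha_f$ (hence $-\alpha_f$) is a bounded linear functional on $V(U)$ with $\|{-\alpha_f}\|_{V(U)^*} = \|\alpha_f\|_{V(U)^*}$.

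The core step is then to apply the Riesz representation theorem to the functional $v \mapsto -\alpha_f(v)$ on $V(U)$: there exists a unique $u_* \in V(U)$ such that
\begin{equation*}
\pd<u_*, v>_{V(U)} = -\alpha_f(v) \quad \text{for all } v \in V(U),
\end{equation*}
which is precisely (\ref{weak_neumann_prob}). Moreover, Riesz gives the isometric identification $\|u_*\|_{V(U)} = \|-\alpha_f\|_{V(U)^*} = \|\alpha_f\|_{V(U)^*}$, which implies (in fact strengthens) the claimed inequality. Since each step is a direct invocation of a standard tool, there is no real obstacle here; the only care needed is the routine check that $\alpha_f$ is bounded on $V(U)$, which is handled by the Poincar{\'e}--Wirtinger inequality already recorded in (\ref{norm_inequality}).
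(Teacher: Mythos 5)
Your proposal is correct, and it takes a slightly different (and in fact sharper) route than the paper. The paper sets up the bilinear form $\beta(u,v)=\int_U \nabla u^\top \nabla v\,d\theta$, checks that it is bounded and $1$-coercive on $V(U)$, checks that $\alpha_f$ is a bounded functional, and then invokes the Lax--Milgram theorem to get existence, uniqueness, and the bound $\|u_*\|_{V(U)}\leq\|\alpha_f\|_{V(U)^*}$. You instead observe that $\beta$ is not merely a coercive bilinear form but is \emph{exactly} the inner product $\langle\cdot,\cdot\rangle_{V(U)}$, so the weak formulation (\ref{weak_neumann_prob}) asks for the Riesz representer of $-\alpha_f$; the Riesz representation theorem then delivers existence, uniqueness, and the isometric identity $\|u_*\|_{V(U)}=\|\alpha_f\|_{V(U)^*}$, which strengthens the stated inequality to an equality. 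The two arguments are logically very close---Lax--Milgram applied to a symmetric bilinear form that coincides with the inner product \emph{is} Riesz---but your version avoids the separate boundedness/coercivity verification for $\beta$ and records the sharper conclusion. The only substantive checks in both proofs are the completeness of $(V(U),\|\cdot\|_{V(U)})$ and the boundedness of $\alpha_f$, and you handle both correctly via the Poincar\'e--Wirtinger inequality (\ref{norm_inequality}), exactly as the paper does.
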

\begin{proof}
  We denote $\beta(u,v) = \int_U \nabla u(\theta)^\top \nabla v(\theta) d\theta$ for $u,v \in V(U)$.
  Clearly, $\beta(\cdot,\cdot)$ is bilinear function.
  The boundedness with respect to $\|\cdot\|_{V(U)}$ are shown as follows.
  Using H{\"o}lder's inequality and the inequality (\ref{norm_inequality}), we have that for $u,v \in V(U)$,
  \[ | \beta(u,v) | = \left| \int_U \nabla u(\theta)^\top \nabla v(\theta) d\theta \right| \leq \|u\|_{L^2(U)}\|v\|_{L^2(U)} \leq C_U^2\|u\|_{V(U)}\|v\|_{V(U)}.  \]
  Moreover, $\beta(\cdot,\cdot)$ is $1$-coercive because $\beta(u,u)=\|u\|^2_{V(U)}$.
  We can also see that $\alpha_f(\cdot)$ is a bounded linear functional in the same manner: for $u \in V(U)$,
  \[ | \alpha_f(u) | = \left| \int_U f(\theta )u(\theta) d\theta \right| \leq \|f\|_{L^2(U)}\|u\|_{L^2(U)} \leq C_U\|f\|_{L^2(U)}\|u\|_{V(U)}.  \]
  Thus, by the Lax-Milgram theorem, there is a unique solution $u_* \in V(U)$ and we have $\|u_*\|_{V(U)}\leq \|\alpha_f\|_{V(U)^*}$.
\end{proof}

We now give the proof of Theorem \ref{characterization_optimality_condition_thm} that gives an interior optimality property of the local optimality condition.
  
\begin{proof} [ Proof of Theorem \ref{characterization_optimality_condition_thm}. ]
  We denote $\Omega = {\rm supp}(\mu_*)$ and let $q_*$ be a continuous probability density function of $\mu_*$. 
  We assume that there exists $\mu' \in \mathcal{P}$ that possesses a continuous probability density function $q'$ and satisfies ${\rm supp}(\mu') \subset \Omega$, $\mathcal{L}_S(\mu') < \mathcal{L}_S(\mu_*)$.  
  By smoothing $\mu_*$ and $\mu'$ with sufficiently small $\epsilon>0$, we can obtain $d\mu'_{\epsilon} = q'_\epsilon(\theta)d\theta$ and $d\mu_{*\epsilon} = q_{*\epsilon}(\theta)d\theta$
  where $q'_\epsilon, q_{*\epsilon}$ are $\mathcal{C}^\infty$-density functions satisfying ${\rm supp}(\mu'_\epsilon)\subset {\rm supp}(\mu_{*\epsilon})$.
  As stated above, $q'_\epsilon, q_{*\epsilon}$ converge to $q', q_*$ in $L^2(\Theta)$.
  
  Let us denote $\Omega_{\epsilon} = {\rm supp}(\mu_{*\epsilon})$
  Since $q'_\epsilon-q_{*\epsilon}$ is $\mathcal{C}^\infty$-function and $\int_{\Omega_\epsilon} (q'_\epsilon-q_{*\epsilon}) d\theta = 0$, there is a $\mathcal{C}^\infty$-function $\psi_\epsilon$ on $\Omega_\epsilon$
  that solves the Neumann problem \cite{hormander1963}:
  \[ \Delta \psi_\epsilon = q'_\epsilon-q_{*\epsilon} \ \ in\ \  \Omega_\epsilon,\ \ \partial \psi_\epsilon/\partial n = 0\ \  on\ \  \partial \Omega_\epsilon, \]
  where $\partial \Omega_\epsilon$ is the boundary of $\Omega_\epsilon$ and $n$ is the outward pointing unit normal vector of $\partial \Omega_\epsilon$.
  By adding a constant, we assume $\int_{\Omega_\epsilon}\psi_\epsilon(\theta)d\theta=0$, i.e., $\psi_\epsilon|_{\Omega^i_\epsilon} \in V(\Omega^i_\epsilon)$,
  where $\Omega^i_\epsilon$ is the interior of $\Omega_\epsilon$.
  Therefore, we have
  \begin{align}
    \int_{\Omega_\epsilon} \nabla_\mu \mathcal{L}_S(\mu_{*\epsilon})(\theta)d(\mu'_\epsilon-\mu_{*\epsilon}) &= \int_{\Omega_\epsilon} \nabla_\mu \mathcal{L}_S(\mu_{*\epsilon})(\theta) \Delta \psi_\epsilon(\theta) d\theta \notag\\
    &= - \int_{\Omega_\epsilon} \nabla_\theta\nabla_\mu \mathcal{L}_S(\mu_{*\epsilon})(\theta)^\top \nabla_\theta \psi_\epsilon(\theta) d\theta \notag\\
    &+\int_{\partial \Omega_\epsilon} \nabla_\mu \mathcal{L}_S(\mu_{*\epsilon})(\theta) \frac{\partial \psi_\epsilon(\theta)}{\partial n} d\partial \Omega_{\epsilon} \notag \\
    &= - \int_{\Omega_\epsilon} \mathbb{E}_S[s_{\mu_{*\epsilon}}(\theta,x,y)]^\top \nabla_\theta \psi_\epsilon(\theta) d\theta, \label{weak_equation}
  \end{align}
  where for the second equality we used Green's formula and for the last equality we used $\partial \psi_\epsilon/ \partial n = 0$.
  By the convexity of $\mathcal{L}_S$ with respect to $\mu$ in terms of Affine geometry and $\mathcal{L}_S(\mu') < \mathcal{L}_S(\mu_{*})$, we have
  \begin{equation}
    \lim_{\epsilon \rightarrow 0}\int_{\Omega_\epsilon} \nabla_\mu \mathcal{L}_S(\mu_{*\epsilon})(\theta)d(\mu'_\epsilon-\mu_{*\epsilon})
  \leq \lim_{\epsilon \rightarrow 0} \mathcal{L}_S(\mu'_\epsilon)- \mathcal{L}_S(\mu_{*\epsilon})
  = \mathcal{L}_S(\mu')- \mathcal{L}_S(\mu_{*}) < 0. \label{lim_of_optimality}
  \end{equation}
  By the boundedness of $\Omega$, we can assume it is contained in a ball with radius $R>0$ centered around 0.
  Since, $\psi_\epsilon$ solves (\ref{weak_neumann_prob}) with $U=\Omega^i_\epsilon$ and $f = q'_\epsilon - q_{*\epsilon}$, we get that by Lemma \ref{norm_of_solution_bound}, 
  \begin{align}
    \lim_{\epsilon \rightarrow 0} \| \psi_\epsilon \|_{V(\Omega^i_\epsilon)} &\leq \lim_{\epsilon \rightarrow 0} \sup_{\|u\|_{V(\Omega^i_\epsilon)}\leq 1} \left| \int_{\Omega^i_\epsilon} (q'_\epsilon - q_{*\epsilon})(\theta)u(\theta)d\theta \right| \notag \\
    & \leq \lim_{\epsilon \rightarrow 0} \sup_{\|u\|_{V(\Omega^i_\epsilon)}\leq 1} \|q'_\epsilon - q_{*\epsilon}\|_{L^2(\Omega^i_\epsilon)} \|u\|_{L^2(\Omega^i_\epsilon)} \notag \\
    & \leq \lim_{\epsilon \rightarrow 0} \|q'_\epsilon - q_{*\epsilon}\|_{L^2(\Omega^i_\epsilon)} \sup_{\|u\|_{V(\Omega^i_\epsilon)}\leq 1} C_{\Omega^i_\epsilon}\|u\|_{V(\Omega^i_\epsilon)} \notag \\
    & \leq C_R \|q'-q_*\|_{L^2(\Theta)}, \notag
  \end{align}
  where we used the Poincar{\' e}-Wirtinger inequality (\ref{norm_inequality}) and uniform boundedness of $C_{\Omega^i_\epsilon}$.
  Thus, the limit as $\epsilon \rightarrow 0$ in the right hand side of (\ref{weak_equation}) is lower-bounded by
  \begin{equation}
    - \lim_{\epsilon \rightarrow 0} \| \mathbb{E}_S[s_{\mu_{*\epsilon}}(\theta,x,y)] \|_{L^2(\Omega_\epsilon)} C_R\|q'-q_*\|_{L^2(\Theta)} = - \| \mathbb{E}_S[s_{\mu_{*}}(\theta,x,y)] \|_{L^2(\Omega)} C_R\|q'-q_*\|_{L^2(\Theta)}. \label{lim_of_optimality2}
  \end{equation}
  Combining (\ref{lim_of_optimality}) and (\ref{lim_of_optimality2}), we find $\mathbb{E}_S[s_{\mu_*}(\theta,x,y)] \centernot\equiv 0$ on $\Omega$,
  so $\mu_*$ does not satisfy the local optimality condition (\ref{opt_necessary_condition}).
  This finishes the proof of the theorem.

  For the case where $\mu$ does not have a continuous density, we can show the same result in a similar way by smoothing $\mu$ with as its support is contained in $\Omega$.
\end{proof}

\section{Convergence Analysis}
In this section, we prove the convergence theorem of the proposed method.

\begin{proof} [ Proof of Lemma \ref{descent_lemma}. ]
Putting $\eta \zeta$ into (\ref{asymptotic_equality}) and (\ref{higher_order_bound}), we obtain
\begin{align*}
\mathcal{L}_S((id+\eta\zeta)_\sharp \mu) 
&\leq \mathcal{L}_S(\mu)- \eta \mathbb{E}_\mu[\mathbb{E}_S[s_\mu(\theta,x,y)]^TA_\mu(\theta)^{-1}s_\mu(\theta,x',y')] \\
&+\frac{\eta^2}{2}\mathbb{E}_\mu[\|s_\mu(\theta,x',y')\|_{A_\mu(\theta)^{-1}}^2].
\end{align*}
Note that by the assumption, there exists $G>0$ such that $\mathbb{E}_\mu[\|s_\mu(\theta,x',y')\|_{A_\mu(\theta)^{-1}}^2] < G$.
Moreover, using the bound on $A_\mu(\theta)^{-1}$ and taking the expectation with respect to $(x',y')$ (i.e., $\mathbb{E}_S$), we can finish the proof.  
\end{proof}

\begin{proof} [ Proof of Theorem \ref{convergence_thm}. ]
  Using the Lemma \ref{descent_lemma}, we can see the updates of Algorithm \ref{SPGD} decreases the objective value as follows:
  \[ \mathbb{E}_S[\mathcal{L}_S(\mu_{k+1})] \leq \mathcal{L}_S(\mu_{k})-\eta \|\mathbb{E}_S[s_{\mu_k}(\theta,x,y)]\|_{L^2(\mu_k)}^2 + \eta^2 G. \]
  Taking an expectation of the history of sample, summing up $k\in \{1,\ldots,t-1\}$, and dividing by $t\eta$, we have
  \[ \frac{1}{t}\sum_{k=1}^t \mathbb{E}[\|\mathbb{E}_S[s_{\mu_k}(\theta,x,y)]\|_{L^2(\mu_k)}^2] \leq \frac{\mathcal{L}_S(\mu_0)-\inf_{\mu\in Q} \mathcal{L}_S(\mu)}{\eta t} + \eta G. \]

  Thus, if $t \geq \frac{2(\mathcal{L}_S(\mu_0)-\inf_q \mathcal{L}_S(\mu)) }{\epsilon\eta}$,
  then $\frac{1}{t}\sum_{k=1}^t \mathbb{E}[\|\mathbb{E}_S[s_{\mu_k}(\theta,x,y)]\|_{L^2(\mu_k)}^2] \leq \epsilon$. 
  This means the method can find $\epsilon$-accurate solution with respect to the expectation, up to $t$ iterations.
\end{proof}


\section{Other Perspectives of SPGD} \label{sec:other_perspectives}
In this section, we provide two perspectives of SPGD: one is the functional gradient method in $L^2(\mu_0)$ where $\mu_0$ is the fixed initial probability measure in the method
and the other is the discretization of the continuous curve satisfying the gradient flow in the space of probability measure.
To describe the former perspective, we need the notion of the continuity equation which characterizes a curve of probability measures.

\subsection{Discretization of Gradient Flow Perspective}

\subsection*{Continuity Equation and its Discretization}
In Euclidean space, the step of the steepest descent method for optimization problems can be derived by the discretization of a continuous curve satisfying the gradient flow defined by the objective function.
To make a similar argument in the space of probability measures in a rigorous way, we need the continuity equation that characterizes a curve of probability measures and the tangent space where velocities of curves should be contained (c.f., \cite{ags2008}).
Though, we can more directly derive and analyze our method (proposed later) without these notions which requires a bit complicated definitions, 
it will help understanding of the dynamics of the method, so we here briefly introduce it with simplified arguments.
We refer to \cite{ags2008} for detailed descriptions in this direction and also refer to \cite{otto2001,pd2009,bjm2015} which follow an original fashion developed by Otto.

For $\mu \in \mathcal{P}$, let $\{\phi_t\}_{t\in [0,\delta]}$ be a curve in $L^2(\mu)$ that solves the following ordinary differential equation: for a vector field $v_t$ on $\Theta$, 
\begin{equation*}
  \phi_0 = id,\ \ \frac{d}{dt}\phi_t(\theta) = v_t( \phi_t(\theta))\ for \ \forall \theta \in \Theta.
\end{equation*}
We set $\nu_t=\phi_{t\sharp}\mu$.
For simplicity, we assume that $\nu_t$ have smooth density functions $d\nu_t/d\theta$ with respect to $t\in [0,\delta]$.
We denote by $\mathcal{C}_{c}^\infty(\Theta)$ the set of $\mathcal{C}^\infty$-functions with compact support in $\Theta=\mathbb{R}^d$.
Using integration by parts, we have that for $\forall f \in \mathcal{C}_{c}^\infty(\Theta)$,
\begin{align}
  \int_{\Theta} f(\theta) \frac{d}{dt} \frac{d\nu_t}{d\theta}d\theta &= \frac{d}{dt}\int_{\Theta} f(\theta)d\nu_t(\theta) \notag \\
  &=\frac{d}{dt}\int_{\Theta} f(\phi_t(\theta))d\mu(\theta) \notag \\
  &=\int_{\Theta} \nabla f(\phi_t(\theta))^\top \frac{d\phi_t(\theta)}{dt} d\mu(\theta) \notag \\
  &=- \int_{\Theta} f(\theta) \nabla \cdot (v_t d\nu_t), \label{weak_derivative}
\end{align}
where $\nabla\cdot$ is the divergence operator in the weak sense.
That is, this equation means the equality between $d\nu_t/dt$ and $- \nabla \cdot (v_t d\nu_t)$ as the distribution on $\mathcal{C}_{c}^\infty(\Theta)$, 
and indicates that the vector field $v_t$ controls the local behavior of $\nu_t$.
In general, for a Borel family of probability measures $\nu_t$, on $\Theta$, defined for $t$ in the open interval $I\subset \mathbb{R}$
and for a Borel vector field $v: (\theta,t) \rightarrow v_t(\theta) \in \Theta$, the following distribution equation in $\Theta \times I$,
\begin{equation}
  \frac{d}{dt}\nu_t + \nabla \cdot (v_t\nu_t) = 0 \label{continuity_equation}
\end{equation}
is called the {\it continuity equation}, i.e., for $\forall f \in \mathcal{C}^{\infty}_c(\Theta \times I )$,
\begin{equation}
  \int_{I} \int_{\Theta} (\partial_t f(\theta,t) + \nabla_{\theta}f(\theta,t)^\top v_t) d\nu_t dt = 0.
\end{equation}

Let $\mathcal{P}_2$ be equipped with the $2$-Wasserstein distance $W_2$. 
We again refer to \cite{villani2009,ags2008} for the definitions related to Wasserstein geometry.
Noting that any divergence-free vector field $w \in L^2(\mu)$ (i.e., $\nabla\cdot(w\mu)=0$) has no effect on $d\nu_t/dt |_{t=0}$ in the continuity equation, it is natural to consider the equivalence class of
$v \in L^2(\mu)$ modulo divergence-free vector fields and there exists a unique $\Pi(v)$ that attains the minimum $L^2(\mu)$-norm in this class :
$\Pi(v)=\arg\min_{w \in L^2(\mu)} \{ \|w\|_{L^2(\mu)} \mid \nabla\cdot((v-w)\mu)=0\}$.
We here introduce the definitions of the tangent space at $\mu \in \mathcal{P}_2$ as follows:
\begin{equation*}
  T_\mu \mathcal{P}_2 \overset{\mathrm{def}}{=} \{ \Pi(v) \mid v \in L^2(\mu)\}.
\end{equation*}
Clearly, we can also see $T_\mu \mathcal{P}_2 \simeq L^2(\mu)/\sim$, where $\sim$ denotes the above equivalence relation.
Moreover, it is known that $\Pi$ is the orthogonal projection onto $T_\mu \mathcal{P}_2$ with respect to $L^2(\mu)$-inner product, that is, $\int_\Theta v^\top (w-\Pi(w))d\mu=0$ for every $v \in T_\mu\mathcal{P}_2$ and $w \in L^2(\mu)$.
We naturally equip $T_\mu \mathcal{P}_2$ with $L^2(\mu)$-inner product for $\mu \in \mathcal{P}_2$.
For $\phi \in L^2(\nu)$ and $\psi \in L^2(\mu)$ where $\mu=\phi_\sharp \nu$, we easily have $\|\psi\|_{L^2(\mu)} = \|\psi\circ\phi\|_{L^2(\nu)}$, so $\psi \circ \phi \in L^2(\nu)$.
Especially, we have $\pd< \xi, \zeta>_{L^2(\mu)} = \pd< \xi\circ \phi, \zeta \circ \phi>_{L^2(\nu)}$ for $\forall \xi, \forall \zeta \in L^2(\mu)$ by the change of variables.
Thus, the inner-product on the tangent space changes depending on the base point $\mu \in \mathcal{P}_2$ and it means $\mathcal{P}_2$ is heterogeneous and
has an infinite-dimensional Riemannian manifold-like structure as pointed out by \cite{otto2001}.

The continuity equation (\ref{continuity_equation}) characterizes the class of absolutely continuous curves in $\mathcal{P}_2$.
Indeed, for arbitrary continuous curve $\nu_t\ (t\in I)$ in $\mathcal{P}_2$ with respect to the topology of weak convergence, the absolutely continuity of the curve $\nu_t$
and satisfying the continuity equation (\ref{continuity_equation}) for some Borel vector field $v_t$ is equivalent, moreover, we can take $v_t$ from $T_\mu\mathcal{P}_2$ uniquely for almost everywhere $t\in I$ to satisfy the equation (\ref{continuity_equation}).
The following proposition shows how a perturbation using $v_t$ can discretizes an absolutely continuous curve $\nu_t$ and how $v_t$ approximates optimal transport maps locally.

\begin{proposition}[\cite{ags2008}] \label{ac_prop}
  Let $\nu_t: I \rightarrow \mathcal{P}_2$ be an absolutely continuous curve satisfying the continuity equation with a Borel vector field $v_t$ that is contained in $T_{\nu_t}\mathcal{P}_2$ almost everywhere $t \in I$.
  Then, for almost everywhere $t \in I$ the following property holds:
  \begin{equation*}
    \lim_{\delta \rightarrow 0} \frac{W_2(\nu_{t+\delta},(id+\delta v_t)_\sharp \nu_t)}{|\delta|} = 0.
  \end{equation*}
  In particular, for almost everywhere $t\in I$ such that $\nu_t$ is absolutely continuous with respect to Lebesgue measure we have
  \begin{equation*}
    \lim_{\delta \rightarrow 0} \frac{1}{\delta} (\mathbf{t}_{\nu_t}^{\nu_{t+\delta}}- id) = v_t \quad in\ L^2(\nu_t),
  \end{equation*}
  where $\mathbf{t}_{\nu_t}^{\nu_{t+\delta}}$ is the unique optimal transport map between $\nu_t$ and $\nu_{t+\delta}$.
\end{proposition}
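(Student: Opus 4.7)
The plan is to first establish the second (stronger) $L^2(\nu_t)$-convergence statement under absolute continuity of $\nu_t$, and then deduce the first $W_2$-estimate from it. For the deduction, when $\nu_t$ is absolutely continuous with respect to Lebesgue measure the Brenier map $\mathbf{t}_{\nu_t}^{\nu_{t+\delta}}$ is well defined, and $\theta \mapsto (\mathbf{t}_{\nu_t}^{\nu_{t+\delta}}(\theta),\, \theta + \delta v_t(\theta))$ pushed forward by $\nu_t$ is a (non-optimal) transport plan between $\nu_{t+\delta}$ and $(id+\delta v_t)_\sharp \nu_t$, which yields
\begin{equation*}
\frac{W_2(\nu_{t+\delta},\, (id+\delta v_t)_\sharp \nu_t)}{|\delta|} \;\le\; \left\| \frac{\mathbf{t}_{\nu_t}^{\nu_{t+\delta}}-id}{\delta} - v_t \right\|_{L^2(\nu_t)}.
\end{equation*}
The first statement at an a.c.\ time then follows immediately from the second; at the exceptional times where $\nu_t$ is not absolutely continuous one replaces the Brenier map by optimal transport plans $\gamma_\delta \in \Gamma_o(\nu_t,\nu_{t+\delta})$ and runs the same coupling argument in product space, which is technically heavier but conceptually identical.

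For the $L^2$-convergence, fix a time $t$ where all the Lebesgue-point conditions below hold and set $w_\delta := \delta^{-1}(\mathbf{t}_{\nu_t}^{\nu_{t+\delta}}-id)$. First, the norms converge:
\begin{equation*}
\|w_\delta\|_{L^2(\nu_t)} \;=\; \frac{W_2(\nu_t,\nu_{t+\delta})}{|\delta|} \;\longrightarrow\; |\nu'|(t) \;=\; \|v_t\|_{L^2(\nu_t)},
\end{equation*}
using the metric-derivative theorem for a.c.\ curves in $(\mathcal{P}_2,W_2)$ together with the fact that $v_t \in T_{\nu_t}\mathcal{P}_2$ is the minimal-norm vector field driving the continuity equation. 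In particular $\{w_\delta\}$ is bounded in the Hilbert space $L^2(\nu_t;\Theta)$ and admits weak cluster points. Second, I would identify any weak cluster point $w$ via the continuity equation: for $f \in \mathcal{C}_c^\infty(\Theta)$, a Taylor expansion of $f$ along $\mathbf{t}_{\nu_t}^{\nu_{t+\delta}}$ gives
\begin{equation*}
\frac{1}{\delta}\int f\,d(\nu_{t+\delta}-\nu_t) \;=\; \int \nabla f(\theta)^\top w_\delta(\theta)\,d\nu_t(\theta) \;+\; O(\delta),
\end{equation*}
with the error bounded by $\tfrac12\|\nabla^2 f\|_\infty \|w_\delta\|_{L^2(\nu_t)}^2 |\delta|$. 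The left-hand side converges to $\int \nabla f^\top v_t\, d\nu_t$ at a Lebesgue point of $t \mapsto \int f\, d\nu_t$ by the continuity equation, so $\int \nabla f^\top (w-v_t)\, d\nu_t = 0$ for every such $f$. Since $\mathbf{t}_{\nu_t}^{\nu_{t+\delta}}$ is the gradient of a convex Brenier potential, each $w_\delta$ lies in the $L^2(\nu_t)$-closure of gradients of $\mathcal{C}_c^\infty$ functions, namely $T_{\nu_t}\mathcal{P}_2$; this closed subspace is preserved under weak limits, so $w \in T_{\nu_t}\mathcal{P}_2$, and combined with $v_t \in T_{\nu_t}\mathcal{P}_2$ the previous orthogonality forces $w = v_t$. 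Uniqueness of the weak cluster point together with convergence of norms in a Hilbert space upgrades weak convergence to strong convergence $w_\delta \to v_t$ in $L^2(\nu_t)$.

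The main obstacle will be organizing the "for almost every $t$" statements into a single coherent argument. The above reasoning relies on several separate Lebesgue-differentiation facts at $t$ (existence of the metric derivative; the identity $|\nu'|(t)=\|v_t\|_{L^2(\nu_t)}$; the continuity equation evaluated against a countable dense family of test functions; and, for the second claim, absolute continuity of $\nu_t$), each holding only off a null set of times; a Fubini-type argument is needed to intersect them to a full-measure set on which every estimate holds simultaneously. A secondary technical point is the identification $T_{\nu_t}\mathcal{P}_2 = \overline{\{\nabla \varphi : \varphi \in \mathcal{C}_c^\infty(\Theta)\}}^{L^2(\nu_t)}$ and its stability under weak limits, which underpins the conclusion that any weak accumulation point of $w_\delta$ lies in $T_{\nu_t}\mathcal{P}_2$.
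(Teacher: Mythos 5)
First, note that the paper itself does not prove this proposition: it is quoted directly from Ambrosio--Gigli--Savar\'e \cite{ags2008} (Proposition 8.4.6 there), so there is no in-paper proof to compare against; I can only assess your argument against the standard one.

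Your treatment of the second claim is essentially the standard Hilbert-space argument and is sound in outline: norm convergence $\|w_\delta\|_{L^2(\nu_t)}=W_2(\nu_t,\nu_{t+\delta})/|\delta|\to|\nu'|(t)=\|v_t\|_{L^2(\nu_t)}$ at a.e.\ $t$, weak compactness, identification of the weak limit by testing the continuity equation against $\mathcal{C}_c^\infty$ functions, and the upgrade from weak to strong convergence via convergence of norms. One step is asserted too quickly: that $w_\delta\in T_{\nu_t}\mathcal{P}_2$ ``because the Brenier map is a gradient of a convex potential.'' The tangent space is the $L^2(\nu_t)$-closure of gradients of \emph{compactly supported smooth} functions, and membership of $(\mathbf{t}_{\nu_t}^{\nu_{t+\delta}}-id)$ in that closure is itself a nontrivial lemma (Proposition 8.5.2 in \cite{ags2008}); it needs to be cited or proved, since without it the orthogonality relation $\int\nabla f^\top(w-v_t)\,d\nu_t=0$ does not force $w=v_t$.

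The genuine gap is the first claim at times where $\nu_t$ is \emph{not} absolutely continuous. Your deduction of claim 1 from claim 2 uses the coupling $(\mathbf{t}_{\nu_t}^{\nu_{t+\delta}},\,id+\delta v_t)_\sharp\nu_t$, which presupposes the optimal transport \emph{map} exists --- exactly the regularity that claim 1 does not assume. The remark that one can ``replace the Brenier map by optimal plans and run the same coupling argument'' understates the problem: with plans there is no function $w_\delta$ whose weak $L^2(\nu_t)$-limits you can take, so the norm-convergence and identification steps both change character (one must pass to rescaled plans or barycentric projections), and this is essentially as hard as the proposition itself. The standard proof reverses your order precisely to avoid this: it establishes claim 1 first, for arbitrary $\nu_t$, by showing $\limsup_{\delta\to0}W_2(\nu_{t+\delta},(id+\delta\nabla\varphi)_\sharp\nu_t)/|\delta|\leq\|v_t-\nabla\varphi\|_{L^2(\nu_t)}$ for every $\varphi\in\mathcal{C}_c^\infty(\Theta)$ using the differentiability of $W_2^2$ along the curve and the continuity equation, then lets $\nabla\varphi\to v_t$ in $L^2(\nu_t)$ (possible since $v_t\in T_{\nu_t}\mathcal{P}_2$), and only afterwards deduces claim 2 on the regular part of the curve. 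As written, your argument proves the proposition only at times where $\nu_t$ is absolutely continuous.
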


This proposition suggests the update $\mu^+ \leftarrow (id+\xi)_\sharp \mu$ to discretize a curve in $\mathcal{P}_2$.
Though, the above approximation is justified only for tangent vectors $\xi \in T_\mu \mathcal{P}_2$ in the proposition, we do not need such an explicit restriction in our analyses,
so we choose $\xi$ from the whole space $L^2(\mu)$ rather than $T_\mu\mathcal{P}_2$, in this update.
Note that, when $\mu=\phi_\sharp \nu$, ($\nu \in \mathcal{P}_2, \phi \in L^2(\nu)$), the transport map $\phi$ is updated along $\xi \in L^2(\mu)$ as follows:
\begin{equation}
  \phi^+ \leftarrow \phi + \xi\circ \phi = (id+\xi)\circ \phi, \label{update_of_transport_map}
\end{equation}
and we can see $\phi^+ \in L^2(\nu)$ and it corresponds to $\mu^+$, i.e., $\phi^+_\sharp \nu = \mu^+$.
This means the discretization of a curve in $\mathcal{P}_2$ can be realized by the above update of transport maps.
The resulting problem is how to choose $\xi$ to solve the problem (\ref{abst_prob}) which is described precisely in Section \ref{sec:spgd}.

\subsection*{Discretization of Gradient Flow}
We here describe the gradient flow perspective in $\mathcal{P}_2$ which is the most straightforward way to understand our method.
We have explained that an absolutely continuous curve $\{\nu_t\}_{t\in I} $in $\mathcal{P}_2$ is well characterized by the continuity equation (\ref{continuity_equation}) and we have seen that
$\{v_t\}_{t \in I}$ in (\ref{continuity_equation}) corresponds to the notion of the velocity field induced by a curve in the space of transport maps.
Such a velocity points in the direction of the particle flow.
On the other hand, the Fr\'{e}chet differential $\mathbb{E}_S[s_\mu(\cdot,x,y)]$ points in an opposite direction to reduce the objective $\mathcal{L}_S$ at each particle.
Thus, these two vector fields exist in the same space and it is natural to consider the following equation:
\begin{equation}
  v_t = -\mathbb{E}_S[s_{\nu_t}(\cdot,x,y)]. \label{grad_flow}
\end{equation}
This equation for an absolutely continuous curve is called the gradient flow \cite{ags2008} and a curve satisfying this will reduce the objective $\mathcal{L}_s(\mu)$.
Indeed, we can find by chain rule \cite{ags2008} such a curve $\{\nu_t\}_{t \in I}$ also satisfies the following:
\begin{equation*}
  \frac{d}{dt} \mathcal{L}_S(\nu_t) = - \| \mathbb{E}_S[s_{\nu_t}(\cdot,x,y)] \|_{L^2(\nu_t)}^2.
\end{equation*}

Recalling that $\nu_t$ can be discretized well by $\nu_{t+\delta} \sim (id+\delta v_t)_\sharp \nu_t$, we notice that Algorithm \ref{SPGD} is a stochastic approximation for the discretization of $\nu_t$ 
satisfying the gradient flow (\ref{grad_flow}).

\subsection{Functional Gradient Descent Perspective}
Though, we have introduced our method to optimize a probability measure, it also can be readily recognized as the method to optimize a transport map in $L^2(\mu_0)$ if we fix the initial distribution $\mu_0 \in \mathcal{P}_2$.
Indeed, since a composite function $\phi \circ \psi$ is contained in $L^2(\mu_0)$ when $\psi \in L^2(\mu_0)$ and $\phi \in L^2(\psi_\sharp \mu_0)$, so obtained transport maps by Algorithm \ref{SPGD} also belong to $L^2(\mu_0)$.
Thus objective function can be translated to the form of $\mathcal{L}_S(\phi)=\mathcal{L}_S(\phi_\sharp \mu_0)$ with respect to $\phi \in L^2(\mu_0)$.
Note that in general, since an initial distribution is usually variable in several trials, such a translation does not make sense.

In a similar manner to the proof of Proposition \ref{asymptotic_equality_prop}, we can obtain the following formula: for $\phi, \tau \in L^2(\mu_0)$,
\begin{align*}
\mathcal{L}_S(\phi+\tau)&=\mathcal{L}_S(\phi)+\mathbb{E}_{\mu_0}[\mathbb{E}_S[s_{\mu}(\phi(\theta),x,y)]^\top\tau(\theta) ] \\
&+ H_\phi(\tau) + o(\|\tau\|_{L^2(\mu_0)}^2), 
\end{align*}
where $\mu=\phi_\sharp \mu_0$ and $H_\phi(\tau)=O(\|\tau\|_{L^2(\mu_0)}^2)$.
Thus, this formula indicates $\mathcal{L}_S(\phi)$ is Fr\'{e}chet differentiable with respect to $\phi$.
We can see its differential is represented by $\mathbb{E}_S[s_{\mu}(\phi(\theta),x,y)]$ and $s_{\mu}(\phi(\theta),x,y)$ is the stochastic gradient via $L^2(\mu_0)$-inner product.
Therefore, we can perform a stochastic variant of the functional gradient method \cite{luenberger1969optimization} to minimize $\mathcal{L}_S(\phi)$ on $L^2(\mu_0)$ and its update rule becomes as follows:
\begin{equation*}
  \phi^+ \leftarrow \phi - \eta s_{\mu}(\phi(\cdot),x,y) = (id - \eta s_{\mu}(\cdot,x,y))\circ \phi.
\end{equation*}
We immediately notice the equivalence between this update and Algorithm \ref{SPGD}, so SPGD method is nothing but the stochastic functional gradient method if the initial distribution $\mu_0$ is fixed.
However, we note that to consider the problem with respect to a probability measure $\mu$ is important because it can lead to a much better understanding of the problem as seen before.

\end{document}